\newcommand{\eqdef}{\stackrel{\Delta}{=}}
\newcommand{\order}{{\bf order}}
\newcommand{\ceil}[1]{\left\lceil #1 \right \rceil}
\def\reals{{\mathcal R}}
\newcommand{\mE}{\mathcal{E}}
\newcommand{\K}{\mathcal{K}}
\newcommand{\R}{\mathcal{R}}
\newcommand{\E}{\mathop{\mbox{\bf E}}}
\newcommand{\conv}{\mathop{\mbox{\rm conv}}}
\newcommand{\poly}{\mathop{\mbox{\rm poly}}}
\newcommand{\ignore}[1]{}
\def\trace{{\bf Tr}}
\def\reals{{\mathbb R}}
\def\lhat{\hat{\ell}}
\def\ball{{\mathbb B}_n}
\def\bold0{\mathbf{0}}
\def\bone{\mathbf{1}}
\def\trace{{\bf Tr}}
\newcommand{\eps}{\varepsilon}
\newtheorem{theorem}{Theorem}[section]
\newtheorem{definition}{Definition}[section]
\newtheorem{lemma}{Lemma}[section]
\newtheorem{observation}{Observation}[section]
\newtheorem{corollary}{Corollary}[section]
\newcommand{\ratiospanner}{ratio-volumetric spanner\ }
\newcommand{\probspanner}{exp-volumetric spanner\ }
\newcommand{\eat}[1]{}
\title{Volumetric Spanners: \\ an Efficient Exploration Basis for Learning  }
\author{Elad Hazan \\
\small{Technion - Israel Inst. of Tech.} \\ 
\small{ehazan@ie.technion.ac.il}
\and Zohar Karnin\\  
\small{Yahoo Labs} \\
\small{zkarnin@ymail.com} 
\and Raghu Meka \\
\small{Microsoft Research} \\ 
\small{meka@microsoft.com}
}
\date{}
\begin{document} 
\maketitle

\begin{abstract}
Numerous machine learning problems require an {\it exploration basis} - a mechanism to explore the action space. We define a novel geometric notion of exploration basis with low variance called volumetric spanners, and give efficient algorithms to construct such bases. 

We show how efficient volumetric spanners give rise to an efficient and near-optimal regret algorithm for bandit linear optimization over general convex sets. Previously such results were known only for specific convex sets, or under special conditions such as the existence of an efficient self-concordant barrier for the underlying set. 
\end{abstract}

\section{Introduction}
A fundamental difficulty in machine learning  is environment exploration. A prominent example is the famed multi-armed bandit (MAB) problem, in which a decision maker iteratively chooses an action from a set of  available actions and receives a payoff, without observing the payoff of all other actions she could have taken. The MAB problem displays an exploration-exploitation tradeoff, in which the decision maker trades exploring the action space vs. exploiting the knowledge already obtained to pick the best arm. 

Another example in which environment exploration is crucial, or perhaps the main point, is active learning and experiment design. In these fields it is important to correctly identify the most informative queries so as to efficiently construct a solution. 

Exploration is hardly summarized by picking an action uniformly at random \footnote{although uniform sampling does at times work exceptionally well, e.g. \cite{Cesa-BianchiL12} }. Indeed, sophisticated techniques from various areas of optimization, statistics and convex geometry have been applied to designing ever better exploration algorithms.  To mention a few:    \cite{AwerbuchK08} devise the notion of {\it barycentric spanners}, and use this construction to give the first low-regret algorithms for complex decision problems such as online routing.   \cite{AHR12} use self-concordant barriers to build an efficient exploration strategy for convex sets in Euclidean space. \cite{BubeckCK12} apply tools from convex geometry, namely the John ellipsoid to construct optimal-regret algorithms for bandit linear optimization, albeit not always efficiently. 
%\footnote{The mentioned result as well as ours, applies to the adversarial setting rather than the stochastic}

In this paper we consider a generic approach to exploration, and quantify what efficient exploration with {\sl low variance} requires in general. Given a set in Euclidean space, a low-variance exploration basis  is a subset with the following property: given  noisy estimates of a linear function over the basis, one can construct an estimate for the linear function over the entire set without increasing the variance of the estimates. 

By definition, such low variance exploration bases are immediately applicable to noisy linear regression: given a low-variance exploration basis, it suffices to learn the function values only over the basis in order to interpolate the value of the underlying linear regressor over the entire decision set. This fact can be used for active learning as well as for the exploration component of a bandit linear optimization algorithm.

Henceforth we define a novel construction for a low variance exploration basis called {\bf volumetric spanners} and give efficient algorithms to construct them. We further investigate the convex geometry implications of our construction, and define the notion of a {\bf minimal volumetric ellipsoid} of a convex body. We give structural theorems on the existence and properties of these ellipsoids, as well as constructive algorithms to compute them in several cases. 

We complement our findings with an  application to machine learning, in which we advance a well-studied open problem that has exploration as its core difficulty: an efficient and near-optimal  regret algorithm for bandit linear optimization (BLO). We expect that volumetric spanners and volumetric ellipsoids can be useful elsewhere in experiment design and active learning. We briefly discuss the application to BLO next. %The volumetric spanner objects can also be used to derive more efficient algorithms for experiment design and active learning. 

\paragraph{Bandit Linear Optimization}
Bandit linear optimization (BLO) is a fundamental problem in decision making under uncertainty that efficiently captures structured action sets. The canonical example is that of online routing in graphs: a decision maker iteratively chooses a path in a given graph from source to destination, the adversary chooses lengths of the edges of the graph, and the decision maker receives as feedback the length of the path she chose but no other information (see \cite{AwerbuchK08}). Her goal over many iterations is to attain an average travel time as short as that of the best fixed shortest path in the graph. 

This decision problem is readily modeled in the ``experts" framework, albeit with efficiency issues: the number of possible paths is potentially exponential in the graph representation. The BLO framework gives an efficient model for capturing such structured decision problems: iteratively a decision maker chooses a point in a convex set and receives as a payoff an adversarially chosen linear cost function.  In the particular case of online routing, the decision set is taken to be the s-t-flow polytope, which captures the convex hull of all source-destination shortest paths in a given graph, and has a succinct representation with polynomially many constraints and low dimensionality. The linear cost function corresponds to a weight function on the graphs edges, where the length of a path is defined as the sum of weights of its edges.

The BLO framework captures many other structured problems efficiently, e.g., learning permutations,  rankings and other examples (see \cite{AHR12}). As such, it has been the focus of much research in the past few years. The reader is referred to the recent survey of  \cite{jbubeck12} for more details on algorithmic results for BLO. Let us remark that certain online bandit problems do not immediately fall into the convex BLO model that we address, such as combinatorial bandits studied in \cite{Cesa-BianchiL12}.  

In this paper we contribute to the large literature on the BLO model by giving the first polynomial-time  and near optimal-regret algorithm for BLO over  general convex decision sets; see Section \ref{sec:blo} for a formal statement . Previously efficient algorithms, with non-optimal-regret, were known over convex sets that admit an efficient self-concordant barrier \cite{AHR12}, and optimal-regret algorithms were known over general sets \cite{BubeckCK12} but these algorithms were not computationally efficient. Our result, based on volumetric spanners, is able to attain the best of both worlds.

\subsection{Volumetric Ellipsoids and Spanners}
We now describe the main convex geometric concepts we introduce and use for low variance exploration. To do so we first review some basic notions from convex geometry.

Let $\reals^d$ be the $d$-dimensional vector space over the reals. Given a set of vectors $S = \{v_1,\ldots,v_t\} \subset \reals^d$, we denote by $\mE(S)$ the ellipsoid defined by $S$:
$$ \mathcal{E}(S) = \left\{ \sum_{i \in S} \alpha_i v_i \;:\; \ \sum_i \alpha_i^2 \leq 1 \right\}.$$
By abuse of notation, we also say that $\mE(S)$ is \emph{supported} on the set $S$. 

Ellipsoids play an important role in convex geometry and specific ellipsoids associated with a convex body have been used in previous works in machine learning for designing good exploration bases for convex sets $\K \subseteq \reals^d$%in Euclidean space
. For example, the notion of \emph{barycentric spanners} which were introduced in the seminal work of \cite{AwerbuchK08} corresponds to looking at the ellipsoid of maximum volume supported by exactly $d$ points from $\K$\footnote{While the definition of \cite{AwerbuchK08} is not phrased as such, their analysis shows the existence of barycentric spanners by looking at the maximum volume ellipsoid.}. Barycentric Spanners have since been used as an exploration basis in several works: In \cite{dani2007price} for online bandit linear optimization, in \cite{bartlett2008high} for a high probability counterpart of the online bandit linear optimization, in \cite{kakade2009playing} for repeated decision making of approximable functions and in \cite{dani2008stochastic} for a stochastic version of bandit linear optimization. Another example is the work of \cite{BubeckCK12} which looks  at the minimum volume enclosing ellipsoid (MVEE) also known as the John ellipsoid (see Section \ref{sec:prelims} for more background on this fundamental object from convex geometry).

As will be clear soon, our definition of a \emph{minimal volumetric ellipsoid} enjoys the best properties of the examples above enabling us to get more efficient algorithms. Similar to barycentric spanners, it is supported by a small (quasi-linear) set of points of $\K$. Simultaneously and unlike the barycentric counterpart, the volumetric ellipsoid contains the body $\K$, a property shared with the John ellipsoid. 

\begin{definition}[Volumetric Ellipsoids]\label{def:orderset}
Let $\K \subseteq \reals^d$ be a  set in Euclidean space. For $S \subseteq \K$, we say that $\mE(S)$ is a \emph{volumetric ellipsoid} for $\K$ if it contains $\K$. We say that $\mE_\K = \mE(S)$ is a  \emph{minimal volumetric ellipsoid}  if it is a containing ellipsoid defined by a set of minimal cardinality
\begin{eqnarray*}
\mE_\K \in  \min_{|S|} \left\{ \mE(S) \ \mbox{ such that } \ S \subseteq \K \subseteq  \mE(S)  \right\}. 
\end{eqnarray*}
We say that $|S|$ is the \emph{order} of the minimal volumetric ellipsoid or of \footnote{We note that our definition allows for multi-sets, meaning that $S$ may contain the same vector more than once} $\K$ denoted $\order(\K)$.
\end{definition}

We discuss various geometric properties of volumetric ellipsoids later. For now, we focus on their utility in designing efficient exploration bases. %The above definition of volumetric ellipsoids is closely related to the kind of bases which allow for efficient exploration in our applications. 
To make this concrete and to simplify some terminology later on (and also to draw an analogy to barycentric spanners), we introduce the notion of {\sl volumetric spanners}. Informally, these correspond to sets $S$ that span all points in a given set with coefficients having Euclidean norm at most one. Formally:
\begin{definition}
Let $\K \subseteq \reals^d$ and let $S \subseteq \K$. We say that $S$ is a \emph{volumetric spanner} for $\K$ if  $\K \subseteq \mE(S)$.
\end{definition}

Clearly, a set $\K$ has a volumetric spanner of cardinality at most $t$ if and only if $\order(\K) \leq t$. \\

 Our goal in this work is to bound the order of arbitrary sets. A priori, it is not even clear if there is a universal bound (depending only on the dimension and not on the set) on the order $S$ for compact sets $\K$. However, barycentric spanners and the John ellipsoid show that the order of any compact set in $\R^d$ is at most $O(d^2)$. Our main structural result in convex geometry gives a nearly optimal linear bound on the order of all sets.
\begin{theorem}\label{th:mainorder}
Any compact set $\K \subseteq \reals^d$ admits a volumetric ellipsoid of order at most $12d$.  
%A compact convex set $\K$ admits a volumetric ellipsoid of order $O(d)$. 
Further, if $\K = \{v_1,\ldots,v_n\}$ is a discrete set, then a volumetric ellipsoid for $\K$ of order at most $12d$ can be constructed in time $O(n^{3.5}+dn^3+d^5)$. 
\end{theorem}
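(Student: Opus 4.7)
The strategy is to extract contact points from a Löwner-type ellipsoid for a symmetrized version of $\K$, obtaining a weighted identity decomposition of $O(d^2)$ rank-one summands all supported in $\K$, and then to apply a one-sided spectral sparsification (à la Batson--Spielman--Srivastava) to reduce the summand count to $12d$. Concretely: let $\K^\pm := \conv(\K\cup(-\K))$, which is origin-symmetric (matching the origin-symmetric nature of $\mE(S)$) and still compact. Apply an invertible linear map so that the minimum volume enclosing ellipsoid of $\K^\pm$ is the unit ball $B$; this is harmless since $\mE(T S)=T\,\mE(S)$. By John's characterization there exist contact points $u_1,\ldots,u_m$ on the unit sphere lying in $\K^\pm$, with $m\le d(d+3)/2$, together with weights $c_i>0$ satisfying $\sum_i c_i u_i u_i^T=I$. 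A short extremality argument shows each $u_i$ is an extreme point of $\K^\pm$: any convex representation $u_i=\sum_j\lambda_j p_j$ with $p_j\in\K^\pm\subseteq B$ forces $\|p_j\|=1$ and $p_j=u_i$ by the triangle inequality. Hence $u_i\in\K\cup(-\K)$, and since $u_i u_i^T=(-u_i)(-u_i)^T$ we may negate any $u_i\in -\K$ to assume $u_i\in\K$ throughout.

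\textbf{Sparsification to $12d$.} Apply a one-sided deterministic BSS-style spectral sparsification to $\sum_i c_i u_i u_i^T=I$, obtaining an index subset $T$ of size $O(d)$ with new weights $\tilde c_i$ such that $\sum_{i\in T}\tilde c_i u_i u_i^T\succeq\alpha I$ for an absolute constant $\alpha\in(0,1)$. Round each $\tilde c_i/\alpha$ up to an integer multiplicity $k_i$, and define the multi-set $S\subseteq\K$ containing $u_i$ with multiplicity $k_i$. Then $M:=\sum_{u\in S}u u^T\succeq I$, so $\mE(S)=\{x:x^T M^{-1} x\le 1\}$ contains $B$ (because $M^{-1}\preceq I$), and hence contains $\K$. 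The total multiplicity is bounded by $|S|\le (\sum_{i\in T}\tilde c_i)/\alpha+|T|\le (1+\epsilon)d/\alpha+O(d/\epsilon^2)$, using the trace identity $\sum_{i\in T}\tilde c_i=\trace\bigl(\sum\tilde c_i u_i u_i^T\bigr)\le (1+\epsilon)\,\trace(I)$; careful joint choice of $\epsilon$ and $\alpha$ yields $|S|\le 12d$.

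\textbf{Algorithm and main obstacle.} For discrete $\K=\{v_1,\ldots,v_n\}$: compute the MVEE of $\conv(\K\cup(-\K))$ (a $2n$-point set) by convex programming, in time $O(n^{3.5}+dn^3)$, reading the John decomposition $\{u_i,c_i\}$ directly off the dual (with each $u_i=\pm v_j$); run the deterministic BSS sparsification in time $O(d^5)$; round to integer multiplicities in linear additional time. The substantive nontrivial step is sharpening the constant from an automatic $O(d)$ to the explicit $12d$. A generic BSS application followed by rounding yields some constant $C\cdot d$ with $C$ considerably larger than $12$; reducing $C$ requires balancing the sparsification accuracy parameter, the one-sided slack $\alpha$, and the rounding loss simultaneously, and likely a potential-function analysis customized for the one-sided inequality $M\succeq I$ rather than applying the symmetric $(1\pm\epsilon)$-approximate form of the standard theorem off the shelf.
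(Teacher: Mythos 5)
Your overall approach --- pass to a symmetric body, put it in John's position, extract the John decomposition $\sum_i c_i u_i u_i^\top = I_d$ over contact points, then sparsify via Batson--Spielman--Srivastava and ceiling-round the weights --- coincides with the paper's proof. Your explicit use of $\K^\pm = \conv(\K\cup(-\K))$ together with the extremality argument showing each contact point lies in $\K\cup(-\K)$ (so it can be negated into $\K$, since $uu^\top=(-u)(-u)^\top$) is a careful elaboration of what the paper compresses into a one-line ``WLOG $\K$ is symmetric, since we only look at outer products.''

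The gap is at the very end. You never verify the constant $12$, and you speculate that achieving it ``likely'' requires ``a potential-function analysis customized for the one-sided inequality $M\succeq I$ rather than applying the symmetric $(1\pm\epsilon)$-approximate form of the standard theorem off the shelf.'' That speculation is wrong: the off-the-shelf two-sided BSS theorem already gives $12d$, and the paper does exactly this. Concretely, applying BSS (Theorem 3.1 of Batson--Spielman--Srivastava) to the normalized vectors $v_i=\sqrt{d\,p_i}\,u_i$, which satisfy $\sum_i v_iv_i^\top = I_d$, yields for any $c>1$ nonnegative weights $s_i$ with at most $cd$ of them nonzero and
$$I_d \ \preceq\ \sum_i s_i\, v_i v_i^\top \ \preceq\ \frac{c+1+2\sqrt c}{c+1-2\sqrt c}\,I_d.$$
Taking traces and using $\|u_i\|=1$ gives $\sum_i s_i\,d\,p_i \le \frac{c+1+2\sqrt c}{c+1-2\sqrt c}\,d$. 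Taking $u_i$ with multiplicity $\lceil s_i\,d\,p_i\rceil$ preserves the lower bound $\sum_{u\in S}uu^\top\succeq I_d$, while the ceiling loses at most one per nonzero weight, so
$$|S|\ \le\ d\left(c + \frac{c+1+2\sqrt c}{c+1-2\sqrt c}\right),$$
which, minimized over $c$ (near $c\approx 6$ it is roughly $11.7\,d$), gives $|S|\le 12d$. This is precisely the paper's pruning lemma; no one-sided variant of BSS is needed, and the quantitative step you leave open in fact closes cleanly with the standard tool.
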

%\RM{This statement needs changing if it is only the approximate version we use in the application.}
We emphasize the last part of the above theorem giving an algorithm for finding volumetric spanners of small size; this could be useful in using our structural results for algorithmic purposes. We also give a different algorithmic construction for the discrete case (a set of $n$ vectors) in Section~\ref{sec:algdiscrete}. While being sub-optimal by logarithmic factors (gives an ellipsoid of order $O(d (\log d)(\log n))$ this alternate construction has the advantage of being simpler and more efficient to compute. %It is of particular interest in the cases where $n \gg d$, yet there is no known structure for the points such as being the end points of a convex body with an efficient membership oracle. 

\subsection{Approximate Volumetric Spanners}
Theorem \ref{th:mainorder} shows the existence of good volumetric spanners and also gives an efficient algorithm for finding such a spanner in the discrete case, i.e.\ when $\K$ is finite and given explicitly. However, the existence proof uses the John ellipsoid in a fundamental way and it is not known how to compute (even approximately) the John ellipsoid efficiently for the case of general convex bodies. For such computationally difficult cases, we introduce a natural relaxation of volumetric ellipsoids which can be computed efficiently for a bigger class of bodies and is similarly useful. The relaxation comes from requiring that the ellipsoid of small support contain all but an $\eps$ fraction of the points in $\K$ (under some distribution). In addition, we also require that the measure of points decays exponentially fast w.r.t their $\mE(S)$-norm (see precise definition in next section); this property gives us tighter control on the set of points not contained in the ellipsoid. When discussing a measure over the points of a body the most natural one is the uniform distribution over the body. However, it makes sense to consider other measures as well and our approximation results in fact hold for a wide class of distributions.
\eat{In the second approximation we allow a small fraction of the points of the body to be outside the ellipsoid.

in the case of general convex bodies, it is not known how to compute John's ellipsoid efficiently, even if one only wants an approximation. For such computationally difficult cases, we show that a softer version of the notion is sufficiently useful. In this section we present two different types of approximations for a volumetric spanner. In both types we require a small support for the spanner of roughly linear size. In the first case we allow the ellipsoid to contain the body only after being expanded by some product. In the second approximation we allow a small fraction of the points of the body to be outside the ellipsoid.}
\eat{
\begin{definition} \label{def:rho apx}
A $\rho$-\ratiospanner $S$ of $\K$ is a subset $S \subseteq \K$ such that for all $x \in \K$,
$$ \|x\|_{\mE(S)} \leq \rho $$
\end{definition}

One example for such an approximate spanner with $\rho = \sqrt{d}$ is a barycentric spanner (Definition~\ref{def:barycentric}). In fact, it is easy to see that a $C$-approximate barycentric spanner is a $C\sqrt{d}$-\ratiospanner. The following is immediate from Theorem~\ref{thm:barycentric}.
\begin{corollary}  \label{cor:ratio sqrt d}
Let $\K$ be a compact set in $\R^d$ that is not contained in any proper linear subspace. Given an oracle for optimizing linear functions over $\K$, for any $C>1$, it is possible to compute a $C\sqrt{d}$-\ratiospanner $S$ of $\K$ of cardinality $|S|=d$, using $O(d^2\log_C(d))$ calls to the optimization oracle.
\end{corollary}
However, it makes sense not only to consider a uniform distribution over the body but an arbitrary one. As it turns out, the approximation can be efficiently obtained for any log-concave distribution.
Below we prove that such spanners can be efficiently obtained. Specifically we prove

\begin{theorem} \label{thm:vol apx}
Let $\K$ be a convex body in $\R^d$ and $p$ a log-concave distribution over it.
By sampling $O(d+\log^2(1/\eps))$ i.i.d.\ points from $p$ one obtains, w.p.\ at least $1-\min\left\{ \exp\left(-\sqrt{\log(1/\eps)}\right), \exp(-\sqrt{d})\right\}$, a $(p,\eps)$-\probspanner for $\K$. 
In particular, for general log-concave distribution $p$ over convex $\K$ it is possible to compute a $(p,\eps)$-\probspanner in time $\tilde{O}(d^5+d^3(d+\log^2(1/\eps))/\delta^4)$ with success probability of at least $1-\min\left\{ \exp\left(-\sqrt{\log(1/\eps)}\right), \exp(-\sqrt{d})\right\}-\delta$.
\end{theorem}
}
% and our approximation result in hold holds w.~r.~t any log-concave distribution.
\begin{definition} \label{def:volume apx}
Let $S \subseteq \R^d$ be a set of vectors and let $V$ be the matrix whose columns are the vectors of $S$. We define the semi-norm 
$$\|x\|_{\mE(S)}=\sqrt{x^\top (VV^\top)^{-1} x} \ ,$$ where $(VV^\top)^{-1}$ is the Moore-Penrose pseudo-inverse of $VV^\top$.
Let $\K$ be a convex set in $\R^d$ and $p$ a distribution over it. Let $\eps>0$.
A $(p,\eps)$-\emph{\probspanner}of $\K$ is a set $S \subseteq \K$ such that for any $\theta>1$
$$ \Pr_{x \sim p}[\|x\|_{\mE(S)} \geq \theta ] \leq \eps^{-\theta}.  $$
\end{definition}

We prove that spanners as above can be efficiently obtained for any log-concave distribution: 

%\RM{I think we should leave out the first part of the theorem here from the intro and just use it in the proof.}
\begin{theorem} \label{thm:vol apx}
Let $\K$ be a convex set in $\R^d$ and $p$ a log-concave distribution over it.
By sampling $O(d+\log^2(1/\eps))$ i.i.d.\ points from $p$ one obtains, w.p.\ at least $1-\exp\left(-\sqrt{\max\left\{ \log(1/\eps),d \right\}}  \right)$
% \min\left\{ \exp\left(-\sqrt{\log(1/\eps)}\right), \exp(-\sqrt{d})\right\}$
, a $(p,\eps)$-\probspanner for $\K$. 
%In particular, for general log-concave distribution $p$ over convex $\K$ it is possible to compute a $(p,\eps)$-\probspanner in time $\tilde{O}(d^5+d^3(d+\log^2(1/\eps))/\delta^4)$ with success probability of at least $1-\min\left\{ \exp\left(-\sqrt{\log(1/\eps)}\right), \exp(-\sqrt{d})\right\}-\delta$.
\end{theorem}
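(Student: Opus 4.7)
The plan is to sample $N = O(d + \log^2(1/\eps))$ i.i.d.\ points $v_1, \dots, v_N \sim p$ and argue that $S = \{v_1,\dots,v_N\}$ is a $(p,\eps)$-\probspanner with the claimed probability. Since both the semi-norm $\|\cdot\|_{\mE(S)}$ and the log-concavity of $p$ are invariant under invertible linear maps, I first reduce to the case where $p$ is centered and isotropic (covariance $I$), passing to the affine hull of $\mathrm{supp}(p)$ if $p$ is degenerate; the Moore--Penrose pseudo-inverse in the definition of $\|\cdot\|_{\mE(S)}$ makes this reduction harmless.

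Two classical deviation results about log-concave vectors drive the proof. First, the Adamczak--Litvak--Pajor--Tomczak concentration of the empirical covariance: for $N \geq C_0 d$ i.i.d.\ samples from an isotropic log-concave measure,
$$ \Pr\!\left[\Bigl\|\tfrac{1}{N}\sum_{i=1}^{N} v_i v_i^\top - I\Bigr\|_{\mathrm{op}} > \tfrac{1}{2}\right] \;\leq\; \exp(-c\sqrt{N}). $$
On this good event, $M := VV^\top \succeq (N/2)\,I$, so for every $x$ we have $\|x\|_{\mE(S)}^{2} = x^\top M^{-1} x \leq (2/N)\|x\|^2$. Second, Paouris' inequality: for an isotropic log-concave vector $x \in \R^d$, $\Pr[\|x\| \geq C t\sqrt{d}] \leq e^{-t\sqrt{d}}$ for every $t \geq 1$.

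Combining the two, on the Adamczak event and for every $\theta \geq 1$,
$$ \Pr_{x \sim p}\!\left[\,\|x\|_{\mE(S)} \geq \theta\,\right] \;\leq\; \Pr_{x \sim p}\!\left[\,\|x\| \geq \theta\sqrt{N/2}\,\right] \;\leq\; \exp(-c_1 \theta \sqrt{N}), $$
where the last inequality is Paouris applied with $t = \theta\sqrt{N/(2d)}/C \geq 1$ (valid once $N \geq C_0 d$ and $\theta \geq 1$). Choosing $N = C_1 d + C_2 \log^2(1/\eps)$ large enough that $c_1 \sqrt{N} \geq \log(1/\eps)$ converts this bound into $\leq \eps^{\theta}$, which is exactly the \probspanner condition. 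The only failure event is the Adamczak one, of probability $\leq \exp(-c\sqrt{N}) \leq \exp(-c'\max\{\sqrt{d},\log(1/\eps)\})$, which is stronger than the stated $\exp(-\sqrt{\max\{\log(1/\eps),d\}})$ after absorbing constants.

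The main obstacle is invoking the two black-box inequalities with sharp enough constants. The near-linear sample complexity $N = O(d)$ in the covariance step relies essentially on Adamczak--Litvak--Pajor--Tomczak, since a general Rudelson-type bound would cost an extra $\log d$ factor; and the sub-exponential (in $\sqrt{d}$) tail from Paouris is what allows an additive $O(\log^{2}(1/\eps))$ in $N$ to purchase an $\eps^{\theta}$-decay for the semi-norm. Apart from these two ingredients, everything else is routine isotropization and linear algebra with the pseudo-inverse.
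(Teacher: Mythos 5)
Your argument is essentially the same as the paper's: first use an Adamczak--Litvak--Pajor--Tomczak-type concentration of the empirical second moment to conclude $VV^\top \succeq (T/2)I_d$ with exponentially small failure probability (the paper does this via Corollary~\ref{cor:logconcave} and Lemma~\ref{lem:relative span}), then transfer to the $\mE(S)$ semi-norm by a large-deviation tail for $\|x\|$ of a log-concave vector (the paper uses the Gu\'edon--Milman thin-shell estimate, Theorem~\ref{thm:thinshell}, via Corollary~\ref{cor:quasi_iso}; you invoke Paouris, which yields the same upper-tail shape). The decomposition, the two key ingredients, and the final choice of $T$ are the same.

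The one step that needs repair is the opening claim that you may ``reduce to the case where $p$ is centered and isotropic.'' The semi-norm $\|x\|_{\mE(S)} = \sqrt{x^\top(VV^\top)^{-1}x}$ is invariant under invertible \emph{linear} maps but \emph{not} under translations, since $VV^\top$ is built from the raw points $v_i$, not from $v_i - \bar v$. So you may whiten, i.e.\ force $\E_{x\sim p}[xx^\top]=I_d$, but you cannot simultaneously force $\E[x]=0$. Both Paouris' inequality and the Gu\'edon--Milman estimate (and Theorem~\ref{thm:logconcave} for the empirical covariance) are stated for isotropic --- hence centered --- vectors, so both halves of your proof silently require an extra argument to handle the merely-whitened case. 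This is exactly what the paper supplies in Corollaries~\ref{cor:logconcave} and~\ref{cor:quasi_iso} (proved in Appendix~\ref{app:logcon concentrate}, essentially by writing $x=a+y$ with $y$ isotropic, observing $\|a\|\le 1$ because $I_d-aa^\top\succeq 0$, and absorbing the shift). With that patch supplied, the rest of your outline goes through and matches the paper. One last cosmetic note: the target bound should read $\eps^{\theta}$ (as you wrote), not the $\eps^{-\theta}$ in the paper's Definition~\ref{def:volume apx}, which is a sign typo; your reading is the intended one.
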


\eat{For the second definition we describe a spanner that covers all but an $\eps$ fraction of the points in $\K$ and moreover, the measure of the points decays exponentially fast w.r.t their $\mE(S)$-norm. Since we are discussing a measure over the points of a body it makes sense not only to consider a uniform distribution over the body but an arbitrary one. As it turns out, the approximation can be efficiently obtained for any log-concave distribution.
\begin{definition} \label{def:volume apx}
Let $\K$ be a body in $\R^d$ and $p$ a distribution over it. Let $\eps>0$.
A $(p,\eps)$-\probspanner of $\K$ is a set $S \subseteq \K$ where for any $\theta>1$
$$ \Pr_{x \sim p}[\|x\|_{\mE(S)} \geq \theta ] \leq \eps^{-\theta}  $$
\end{definition}}

%The first is the ellipsoid corresponding to the barycentric spanner of $\K$ which is defined as the ellipsoid of maximum volume, supported by exactly $d$ points from $\K$. The second is the minimum volume enclosing ellipsoid (MVEE) also known as John's ellipsoid. 

\subsection{Structure of the paper}

In the next section we list the preliminaries and known results from measure concentration, convex geometry and online learning that we need. In Section \ref{sec:existence} we show the existence of small size volumetric spanners. In sections \ref{sec:algcont} and \ref{sec:algdiscrete} we give efficient constructions of volumetric spanners for continuous and discrete sets, respectively. We then proceed to describe the application of our geometric results to bandit linear optimization in Section \ref{sec:blo}.

\section{Preliminaries}\label{sec:prelims}
We now describe several preliminary results we need from convex geometry and linear algebra. We start with some notation:
\begin{itemize}
\item A matrix $A \in \reals^{d \times d}$ is positive semi-definite (PSD) when for all $x \in \R^d$ it holds that $x^\top A x \geq 0$. Alternatively, when all of its eigenvalues are non-negative. We say that $A \succeq B$ if $A-B$ is PSD.
\item Given an ellipsoid $\mE(S) = \{\sum_i \alpha_i v_i \,:\, \sum_i \alpha_i^2 \leq 1\}$, we shall use the notation $\|x\|_{\mE(S)} \eqdef \sqrt{x^\top (VV^\top)^{-1} x}$ to denote the (Minkowski) semi-norm defined by the ellipsoid, where $V$ is the matrix with the vectors $v_i$'s as columns.
\item Throughout, we denote by $I_d$ the $d \times d$ identity matrix.
\end{itemize}
We next describe properties of the John ellipsoid which plays an important role in our proofs.
\subsection{The Fritz John Ellipsoid}
Let $\K \subseteq \reals^n$ be an arbitrary convex body. Then, the {\bf John ellipsoid} of $\K$ is the minimum volume ellipsoid containing $\K$. This ellipsoid is unique and its properties have been the subject of important study in convex geometry since the seminal work of  \cite{John48} (see \cite{Ball97} and \cite{henk} for historic information). 

Suppose that we have linearly transformed $\K$ such that its minimum volume enclosing ellipsoid (MVEE) is the unit sphere; in convex geometric terms, $\K$ is in \emph{John's position}.  
%%%%   FOR FULL VERSION 
%\RM{Deleted this sentence as it is not a property we use anywhere: ``Then John's theorem asserts the surprising fact that
%$$ \frac{1}{{d} } \ball \subseteq \K \subseteq  \ball, $$
%where $\ball$ denotes the unit ball in $\reals^n$.
%Furthermore, for symmetric convex bodies (which are far more abundant in machine learning), the factor $\frac{1}{d}$ above can be replaced by $\frac{1}{\sqrt{d}}$.''}
\eat{``John's ellipsoid and in particular its contact points with the convex body it encapsulates makes for an appealing exploration basis. Indeed \cite{BubeckCK12} have used exactly this machinery to attain an optimal-regret algorithm for BLO. Unfortunately we know of no efficient algorithm to compute, or even approximate up to a constant, the John ellipsoid for a general convex set, thus the latter result does not give a polynomial time algorithm for BLO.''}
The celebrated decomposition theorem of  \cite{John48} gives a characterization of when a body is in John's position and will play an important role in our constructions (the version here is from \cite{Ball97}). 

Below we consider only symmetric convex sets, i.e. sets that admit the following property:  if $x \in \K$ then also $-x \in \K$. The sets encountered in machine learning applications are most always symmetric, since estimating a linear function on a point $x$ is equivalent to estimating it on its polar $-x$, and negating the outcome. 

\begin{theorem}[\cite{Ball97}]\label{thm:MVEE decomp}
Let $\K \in \R^d$ be a symmetric %convex 
set such that its MVEE is the unit sphere. Then there exist $m \leq d(d+1)/2-1$ \emph{contact points} of $\K$ and the sphere $u_1,\ldots,u_m$ and non-negative weights $c_1,\ldots,c_m$ such that $\sum_i c_i u_i = 0$ and $\sum c_i u_i u_i^T = I_d$.
\end{theorem}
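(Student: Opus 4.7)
The plan is to follow the classical John-Ball perturbation argument: if no decomposition of the claimed form existed, one could construct a strictly smaller ellipsoid still containing $\K$, contradicting the MVEE hypothesis. I would first introduce the (nonempty, symmetric) set of contact points $C = \K \cap S^{d-1}$ and the closed convex cone
\[ \Gamma \;=\; \Bigl\{\, \textstyle\sum_i c_i\, u_i u_i^T \;:\; u_i \in C,\; c_i \geq 0 \,\Bigr\} \;\subseteq\; \R^{d\times d}_{\mathrm{sym}}. \]
The central claim is that $I_d \in \Gamma$; once this is established, extracting a finite decomposition of minimal support completes the proof.

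To establish the claim I would argue by contradiction using separation in the space of symmetric matrices under the Frobenius inner product $\langle A, B\rangle = \trace(AB)$. If $I_d \notin \Gamma$, there exists a symmetric $H$ with $u^T H u \le 0$ for every $u \in C$ and $\trace(H) > 0$. Using $H$ I would define the perturbed ellipsoid $E_\epsilon = \{x : x^T(I_d + \epsilon H) x \le 1 + \eta\}$ for small $\epsilon > 0$ and $\eta = o(\epsilon)$. A first-order expansion of $\det(I_d + \epsilon H)^{-1/2}$ shows that $E_\epsilon$ has strictly smaller volume than the unit ball once $\eta \ll \epsilon\,\trace(H)/d$, so the contradiction reduces to verifying the containment $\K \subseteq E_\epsilon$.

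This containment is the main technical step and needs a two-regime compactness argument. For $x \in \K$ bounded away from $C$, say with $\|x\|^2 \le 1 - \delta$, the perturbation $\epsilon\,x^T H x$ is at most $\epsilon \|H\|_{\mathrm{op}}$, which is absorbed by the slack $\delta$ once $\epsilon \le \delta/\|H\|_{\mathrm{op}}$. For $x \in \K$ close to some $u \in C$, continuity forces $x^T H x$ to be close to $u^T H u \le 0$, so the small allowance $\eta$ suffices to preserve $x \in E_\epsilon$. Compactness of $\K$ lets me choose a single $\epsilon$ uniformly, contradicting minimality of the unit sphere, and so $I_d \in \Gamma$.

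With $I_d \in \Gamma$ in hand, I would conclude by invoking Caratheodory's theorem inside the affine slice of symmetric matrices with trace $d$ --- a subspace of dimension $d(d+1)/2 - 1$ --- to reduce the support to the claimed cardinality. The first-moment condition $\sum_i c_i u_i = 0$ is then automatic in the symmetric setting: since $C = -C$ and $(-u)(-u)^T = u u^T$, one can symmetrize any decomposition by redistributing weights across sign-flipped pairs, zeroing the vector sum while preserving $\sum_i c_i u_i u_i^T = I_d$.
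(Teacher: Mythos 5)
The paper states this theorem as a citation to \cite{Ball97} and does not supply its own proof, so there is no in-paper argument to compare against; I am therefore assessing the soundness of your proof directly. The perturbation strategy you describe is the standard route to John's decomposition, and the geometric core --- separate $I_d$ from the cone $\Gamma$ by a symmetric $H$ with $u^\top H u \le 0$ on contact points and $\trace(H)>0$, then exhibit a strictly smaller containing ellipsoid via a two-regime compactness argument --- is correct in outline. One small imprecision: you want $\eta$ \emph{proportional} to $\epsilon$ with a sufficiently small constant (chosen after fixing a neighborhood of $C$ on which $x^\top H x < \trace(H)/d$, say), not literally $\eta = o(\epsilon)$; the bookkeeping works, but the order of quantifiers matters and is glossed over.

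The genuine gap is in the cardinality count, and it has two parts. First, Carath\'eodory in an affine space of dimension $k$ yields a convex combination of at most $k+1$ points, not $k$; working in the slice $\{\trace = 1\}$ of dimension $\frac{d(d+1)}{2}-1$ gives you at most $\frac{d(d+1)}{2}$ contact points, one more than stated. Second, and more seriously, your symmetrization step --- replacing each pair $(c_i, u_i)$ by $(\tfrac{c_i}{2}, u_i)$ and $(\tfrac{c_i}{2}, -u_i)$ to force $\sum c_i u_i = 0$ --- doubles the number of contact points, so after that step you only get $m \le d(d+1)$. Your argument therefore does not reach the claimed $m \le \frac{d(d+1)}{2}-1$, and in fact that bound cannot be literally correct as an absolute statement once the first-moment condition is imposed: for $d=1$ it asserts $m\le 0$, and for $d=2$ the square with vertices at $(\pm 1,\pm 1)/\sqrt 2$ forces $m=4 > 2$. (Ball's actual statement gives $m\le \frac{d(d+1)}{2}$ in the symmetric case \emph{without} the moment condition, which the paper never uses.) For the paper's downstream needs --- Theorems \ref{thm:johnconstructive} and \ref{th:mainorder} only use $m=O(d^2)$ and $\sum c_i u_iu_i^\top = I_d$ --- your argument suffices, but as a proof of the theorem as stated here you either need to drop $\sum c_i u_i = 0$ and settle for $\frac{d(d+1)}{2}$, or count antipodal pairs $\{u,-u\}$ as single points; you cannot get both the moment condition and the stated cardinality by a straight Carath\'eodory-then-symmetrize argument.
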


The contact points of a convex body have been extensively studied in convex geometry and they also make for an appealing exploration basis in our context. Indeed, \cite{BubeckCK12} use these contact points to attain an optimal-regret algorithm for BLO. Unfortunately we know of no efficient algorithm to compute, or even approximate, the John ellipsoid for a general convex set, thus the results of \cite{BubeckCK12} do not yield efficient algorithms for BLO. \eat{The computation of the linear transformation that makes the unit sphere to be the minimum volume enclosing ellipsoid (MVEE) is not known to be efficient in general, nor are the contact points known to be efficiently computable.} 

For our construction of volumetric spanners we need to compute the MVEE of a discrete symmetric set, which can be done efficiently. We make use of the following (folklore) result:

\begin{theorem} [folklore, see e.g. \cite{khachiyan1996rounding,APT08}]
Let $\K \subseteq \R^d$ be a set of $n$ points. It is possible to compute an $\eps$-approximate MVEE for $\K$ (an enclosing ellipsoid of volume at most $(1+\eps)$ that of the MVEE) that is also supported on a subset of $\K$ in time  $O(n^{3.5}  \log \frac{1}{\eps})$.
\end{theorem}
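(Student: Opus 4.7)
The plan is to cast the MVEE computation as a convex optimization problem over positive semidefinite matrices, solve it by an interior-point method, and read off an approximating ellipsoid supported on $\K$ from the Lagrangian dual. For a symmetric set $\K = \{v_1,\ldots,v_n\} \subset \R^d$, the MVEE is centered at the origin and takes the form $\{x : x^\top A x \leq 1\}$, where $A \succeq 0$ solves
\begin{equation*}
\min_{A \succeq 0}\ -\log\det(A) \quad \text{s.t.} \quad v_i^\top A v_i \leq 1 \ \text{for all } i,
\end{equation*}
a convex program since $-\log\det$ is convex on the PSD cone and the constraints are linear in $A$. Introducing multipliers $p_i \geq 0$ and eliminating $A$ via the stationarity condition $A^{-1} = \sum_i p_i v_i v_i^\top$ yields the equivalent dual
\begin{equation*}
\max_{p \geq 0}\ \log\det\left(\sum_i p_i v_i v_i^\top\right) - \sum_i p_i + d,
\end{equation*}
a smooth concave maximization in $n$ nonnegative variables. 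The general, non-symmetric case reduces to this setting by lifting $\K$ to $\{(v_i, 1) : v_i \in \K\} \subset \R^{d+1}$ and symmetrizing, without changing the asymptotic complexity.

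Next I would apply a Nesterov--Nemirovski path-following interior-point method to the dual, using the self-concordant barrier $-\log\det(\sum_i p_i v_i v_i^\top) - \sum_i \log p_i$ of parameter $O(n+d)$. Path-following produces a dual-feasible $\widehat p$ whose objective is within $\eps$ of optimum in $O(\sqrt{n}\log(1/\eps))$ Newton iterations. Each iteration requires computing the $n \times n$ Hessian of the log-determinant term (entries proportional to $(v_i^\top M^{-1} v_j)^2$ for $M = \sum p_i v_i v_i^\top$) and solving the resulting Newton system, for a per-iteration cost of $O(n^3)$ when $d \leq n$ (the $O(d^3 + n^2 d)$ cost of assembling the Hessian is lower order). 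Multiplying yields the claimed total runtime $O(n^{3.5}\log(1/\eps))$.

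From the dual solution we read off the ellipsoid directly: setting $\widehat A = (\sum_i \widehat p_i v_i v_i^\top)^{-1}$, the ellipsoid $\{x : x^\top \widehat A x \leq 1\}$ is described by the (nonnegatively weighted) outer products of points from $\K$, hence supported on $\K$ in the required sense; by strong duality the $\eps$-optimality of the dual translates into a $(1+\eps)$-factor bound on the ellipsoid's volume. The main obstacle is guaranteeing strict containment $\K \subseteq \{x : x^\top \widehat A x \leq 1\}$, since primal feasibility is not automatic at an approximate dual optimum. I would handle this by rescaling $\widehat A \leftarrow \widehat A / \max_i v_i^\top \widehat A v_i$ to enforce $v_i^\top \widehat A v_i \leq 1$ for all $i$: solving the dual to internal tolerance $O(\eps/d)$ keeps the rescaling factor at most $1 + O(\eps/d)$, so the volume inflates by only $(1+O(\eps/d))^{d/2} = 1 + O(\eps)$. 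The extra $\log d$ contributed to the iteration count is absorbed into $O(\log(1/\eps))$, preserving the overall runtime bound.
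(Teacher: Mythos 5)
The paper does not actually prove this theorem; it cites it as folklore to Khachiyan (1996) and Ahipasaoglu--Peng--Toh, remarking only that the stated runtime is ``attainable via the ellipsoid method or path-following interior point methods.'' Your proposal reconstructs precisely the path-following route that those references instantiate --- the D-optimal design dual, the log-barrier on the nonnegative orthant giving $O(\sqrt{n}\log(1/\eps))$ Newton steps at $O(n^{3})$ cost each, and a final rescaling (run to tolerance $O(\eps/d)$) to force containment at an extra $\log d$ factor --- so you are taking essentially the same approach the paper gestures at, with no distinct proof in the paper to compare against.
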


The run-time above is attainable via the ellipsoid method or path-following interior point methods (see references in theorem statement). An approximation algorithm rather than an exact one  is necessary in a real-valued computation model, and the logarithmic dependence on the approximation guarantee is as good as one can hope for in general. 

\eat{Note that this theorem addresses the case of a discrete set of points and/or their convex hull, rather than general convex sets. For our purposes, henceforth it suffices to consider this case; we use a different methodology to build an exploration basis for general convex sets.}

The above theorem allows us to efficiently compute a linear transformation such that the MVEE of $\K$ is essentially the  unit sphere. We can then use linear programming to compute an approximate decomposition like in John's theorem as follows.

\begin{theorem} \label{thm:johnconstructive}
Let $\{x_1,\ldots,x_n\}=\K \subseteq \R^d$ be a set of $n$ points and assume that:
\begin{enumerate}
\item
$\K$ is symmetric. 
\item
The  John Ellipsoid of $\K$ is the unit sphere.
\end{enumerate}
Then it is possible, in $O((\sqrt{n}+d) n^3)$ time, to compute non-negative weights $c_1,\ldots,c_n$ such that (1) $\sum_i c_i x_i = 0$ and (2) $\sum_{i=1}^n c_i x_ix_i^\top = I_d$. %\RM{Removed the sum condition as it is implied by the identity matrix one: ``(1) $c \geq 0$, (2) $\sum c_i \leq d$ and (3).''}

%\begin{enumerate}
%\item
% $c \geq 0$
%\item
%$\sum c_i \leq d$
%\item
%$\sum_{i=1}^n c_i x_ix_i^\top = I_d$
%\end{enumerate}
\end{theorem}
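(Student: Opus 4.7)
The statement has two parts: existence of weights satisfying (1) and (2), and an algorithm running in the claimed time. For existence, I would simply invoke Theorem~\ref{thm:MVEE decomp} applied to $\conv(\K)$: this is a symmetric convex body whose MVEE is the unit sphere by assumption, so the theorem produces $m \leq d(d+1)/2 - 1$ contact points $u_j$ of $\conv(\K)$ with the unit sphere, together with non-negative weights $\gamma_j$ satisfying $\sum_j \gamma_j u_j = 0$ and $\sum_j \gamma_j u_j u_j^\top = I_d$. Each contact point $u_j$ lies on the boundary of $\conv(\K)$ and on the sphere, so it is an extreme point of $\conv(\K)$ and hence equals some $x_{i(j)} \in \K$. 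Setting $c_{i(j)} = \gamma_j$ and $c_i = 0$ for all other indices produces weights with the required properties.

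The algorithmic part proceeds by exhibiting such weights as the solution of a linear feasibility program. The conditions $c_i \geq 0$, $\sum_i c_i x_i = 0$, and $\sum_i c_i x_i x_i^\top = I_d$ are linear in $c$: the last matrix identity gives $d(d+1)/2$ scalar equalities by symmetry, and the vanishing-centroid condition gives $d$ more. Writing these as $Ac = b$, the matrix $A$ has $O(d^2)$ rows and $n$ columns, with column $i$ built from the flattened $d \times d$ matrix $x_i x_i^\top$ and the vector $x_i$. Building $A$ takes $O(n d^2)$ time. The existence argument above guarantees this system is feasible.

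I would then invoke a standard path-following interior-point method for linear programming in standard form $\{c \geq 0 : Ac = b\}$. Such methods take $O(\sqrt{n} \log(1/\eps))$ iterations to reach a high-accuracy feasible point, and each iteration is dominated by solving one Newton system in the primal-dual variables, which costs $O(n^3)$ in the worst case, giving an $O(n^{3.5})$ contribution. The remaining $O(d n^3)$ term in the runtime absorbs the preprocessing (forming $A$ and an initial strictly feasible interior point, e.g.\ via a phase-one LP of the same size) and the post-processing that converts the interior-point solution into a basic feasible solution of support $O(d^2)$ exactly satisfying (1) and (2); this can be done by standard Caratheodory reduction on the columns of $A$, each reduction step costing $O(n \cdot d^2)$ and being performed at most $O(n)$ times in the worst case, comfortably within $O(dn^3)$.

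The main obstacle is the algorithmic step: producing exact (not merely approximate) weights efficiently. I would handle this in the final rounding step, where one uses the fact that the feasible polytope has a vertex of support at most $\mathrm{rank}(A) = O(d^2)$, so an approximate interior-point solution can be projected to a basic feasible solution by iteratively zeroing out small coordinates along a direction in $\ker(A)$. The existence step is essentially a direct appeal to John's theorem and the symmetry assumption on $\K$, so the real work lies in assembling and solving the LP within the stated time budget.
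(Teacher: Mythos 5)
Your approach is essentially the same as the paper's: establish feasibility of a linear program via John's decomposition theorem (Theorem~\ref{thm:MVEE decomp}), then solve it with a path-following interior-point method, with the iteration count $O(\sqrt{n}+d)$ coming from the $O(n+d^2)$ inequality/equality constraints and each Newton step costing $O(n^3)$. Two small points where your write-up is actually more careful than the paper's proof: you explicitly include the balancing constraint $\sum_i c_i x_i = 0$ in the LP (the paper's LP only enforces $c\geq 0$, $\sum_i c_i \le d$, and $\sum_i c_i x_i x_i^\top = I_d$, so strictly speaking it does not guarantee the theorem's first conclusion without an extra symmetrization step), and you spell out why the contact points of $\conv(\K)$ with the sphere are necessarily among the $x_i$ — the unit sphere is strictly convex, so any point of $\conv(\K)$ on the sphere is an extreme point of $\conv(\K)$ and hence a member of the finite set $\K$, a fact the paper leaves implicit when applying Theorem~\ref{thm:MVEE decomp} to a discrete $\K$. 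Your remarks about rounding the interior-point output to an exactly feasible basic solution via Carath\'eodory-type reduction are also a reasonable way to make the ``compute'' claim precise in the real-arithmetic model, which the paper glosses over.
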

\begin{proof} 

Denote the MVEE of $\K$ by  $\cal E$ and let $V$ be its corresponding $d \times d$ matrix, meaning $V$ is such that $\|y\|_{\cal E}^2 = y^\top V^{-1} y \leq 1$ for all $y \in \K$. By our assumptions $I_d = V $.

As $\K$ is symmetric and its MVEE is the unit ball, according to Theorem~\ref{thm:MVEE decomp}, there exist $m \leq d(d+1)/2-1$ contact points $u_1,\ldots,u_m$ of $\K$ with the unit ball and a vector $c' \in \R^m$ such that $c' \geq 0$, $\sum c_i' = d$ and $\sum c_i' u_i u_i^T = I_d$. It follows that the following LP has a feasible solution: Find $c \in \R^n$ such that $c \geq 0$, $\sum c_i \leq d$ and $ \sum c_i u_i u_i^T =  I_d$.
The described LP has $O(n + d^2)$ constraints and $n$ variables. It can thus be solved in time $O(d+ \sqrt{n})n^3)$ via interior point methods.
\end{proof}

We next state the results from probability theory that we need.
%Due to space limitations the proof is available in appendix~\ref{app:thm5}. We next state the results from probability theory that we need. 

\subsection{Distributions and Measure Concentration}
For a set $\K$, let $x \sim \K$ denote a uniformly random vector from $\K$. 

\begin{definition}
A distribution over $\R^d$ is log-concave if its probability distribution function (pdf) $p$ is such that for all $x,y \in \R^d$ and $\lambda \in [0,1]$, 
$$ p(\lambda x + (1-\lambda)y ) \geq p(x)^\lambda p(y)^{1-\lambda} $$
\end{definition}

Two log-concave distributions of interest to us are (1) the uniform distribution over a convex body and (2) a distribution over a convex body where $p(x) \propto \exp(L^\top x)$, where $L$ is some vector in $\R^d$. The following result shows that given oracle access to the pdf of a log-concave distribution we can sample from it efficiently. An oracle to a pdf accepts as input a point $x \in \R^d$ and returns the value $p(x)$. 

\begin{lemma} [\cite{lovasz2007geometry}, Theorems~2.1 and 2.2]	\label{lem:log-conv-sample}
Let $p$ be a log-concave distribution over $\reals^d$ and let $\delta>0$. Then, given oracle access to $p$, i.e.\ and oracle computing its pdf for any point in $\reals^d$, there is an algorithm which approximately samples from $p$ such that: %an algorithm for approximate sampling $p$ exist with the following properties
\begin{enumerate}
\item The total variation distance between the produced distribution and the distribution defined by $p$ is no more than $\delta$. That is, the difference between the probabilities of any event in the produced and actual distribution is bounded by  $\delta$.
\item The algorithm requires a pre-processing time of $\tilde{O}(d^5)$.
\item After pre-processing, each sample can be produced in time $\tilde{O}(d^4/\delta^4)$, or amortized time of $\tilde{O}(d^3/\delta^4)$ if more than $d$ samples are needed.
\end{enumerate}
\end{lemma}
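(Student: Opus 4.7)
The plan is to realize the sampler as a geometric random walk on $\reals^d$ equipped with a Metropolis--Hastings filter whose stationary distribution is exactly $p$, and to bound its mixing time to target variation distance $\delta$. Concretely, I would use the hit-and-run walk: from a current point $x$, pick a uniformly random line $\ell$ through $x$, then move to a point on $\ell$ drawn from the one-dimensional restriction of $p$ to $\ell$ (which is itself log-concave, hence easy to sample from via rejection using the value oracle for $p$). Log-concavity guarantees that each such one-dimensional step is implementable with $\tilde{O}(1)$ queries to the pdf oracle up to a small bias absorbable into $\delta$, so the per-step cost is dominated by the $O(d)$ cost of generating the random direction and manipulating vectors in $\reals^d$.

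The core analytical step is a mixing-time bound. Here I would invoke the localization / isoperimetric inequality for log-concave measures (Lov\'asz--Simonovits): for any partition $\reals^d = S_1 \cup S_2 \cup \partial$ the $p$-measure of the boundary region is bounded below in terms of $\min\{p(S_1), p(S_2)\}$ and a geometric ``width'' parameter of $p$, governed by $\lambda_{\max}/\lambda_{\min}$ of the covariance $\Sigma$ of $p$. Combined with a one-step overlap estimate showing that nearby points have overlapping hit-and-run transition densities, this yields a lower bound on the conductance of the chain and thus, by the Jerrum--Sinclair/Lov\'asz--Simonovits conductance-to-mixing inequality, a mixing time of $\tilde{O}(d^2 R^2)$ iterations from a reasonable warm start, where $R^2$ measures how far $p$ is from isotropic position. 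To get total variation distance $\delta$ we iterate $\tilde{O}(d^2 R^2 \log(1/\delta))$ steps; combined with the $\tilde{O}(d)$ arithmetic work per step and the $\tilde{O}(1/\delta^{O(1)})$ precision needed at each oracle call, each post-preprocessing sample then costs $\tilde{O}(d^4/\delta^4)$, or $\tilde{O}(d^3/\delta^4)$ when amortizing across many samples by reusing bookkeeping data structures.

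The hard part, and the reason for the $\tilde{O}(d^5)$ preprocessing, is making $R^2 = O(1)$: without rounding, $\Sigma$ can be exponentially ill-conditioned and the bound above degrades. I would preprocess by first running a slower ``cold-start'' version of the walk (for instance, starting from an approximate mode of $p$ found via convex optimization on $-\log p$) to collect $\tilde{O}(d)$ crude samples, estimate the empirical covariance $\hat\Sigma$ to constant spectral accuracy, and apply the affine transformation $\hat\Sigma^{-1/2}$ to put $p$ into near-isotropic position. Standard log-concave concentration implies $\tilde{O}(d)$ samples suffice for spectral-norm covariance estimation; the bottleneck is that each of these samples itself requires running the unrounded walk for $\tilde{O}(d^3)$ steps at $\tilde{O}(d)$ cost per step, giving $\tilde{O}(d^5)$ total. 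After this warm-up, all subsequent samples inherit the rounded geometry and achieve the per-sample bounds claimed.

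I expect the main obstacle to be the recursive bootstrapping in the rounding phase: proving that the crude samples collected from the ill-conditioned walk are good enough to estimate $\Sigma$, even though the walk is not yet well-mixed, requires a careful coupling argument and the use of successive ``pencil'' rounds (each round improves the conditioning by a constant factor, and $O(d)$ rounds suffice). This is exactly where Theorems~2.1 and 2.2 of \cite{lovasz2007geometry} do the heavy lifting, and I would appeal to their analysis rather than re-derive the full chain of isoperimetric and conductance bounds.
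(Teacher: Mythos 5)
This lemma is not proved in the paper; it is quoted as a black-box result, with the paper simply citing Theorems~2.1 and~2.2 of Lov\'asz and Vempala's work on sampling from log-concave distributions. Your sketch is therefore not competing with any in-paper argument; it is a high-level reconstruction of the cited source's own proof, and it captures the right ingredients in roughly the right order: a geometric random walk (hit-and-run) whose one-dimensional steps exploit log-concavity of line restrictions, the Lov\'asz--Simonovits isoperimetric inequality converted to a conductance bound, the conductance-to-mixing machinery, and an iterative rounding phase to bring the covariance to near-isotropic position that accounts for the $\tilde{O}(d^5)$ preprocessing cost. Since you explicitly defer to the cited theorems for the hard bootstrapping and coupling details, this is consistent with how the paper uses the result; no gap in the paper's logic is implicated, and your sketch would be a reasonable expository footnote rather than a replacement proof.

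One small caution if you ever tried to flesh this out: be careful not to conflate the conductance-based mixing bound (which controls the number of walk steps) with the per-query precision needed to implement the one-dimensional Metropolis filter to within the total-variation budget; the $\delta^4$ in the stated running time comes from how these two error sources compound in the Lov\'asz--Vempala analysis, and a naive ``$\log(1/\delta)$ iterations times $\mathrm{poly}(1/\delta)$ per step'' accounting would not obviously reproduce the exponent $4$.
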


\begin{definition}[Isotropic position]
A random variable $x$ is said to be in isotropic position (or isotropic) if
$$\E[x]=0, \ \ \ \ \ \  \E[ x x^\top] = I_d.$$
A set $\K \subseteq \reals^d$ is said to be in isotropic position if $x \sim \K$ is isotropic. Similarly, a distribution $p$ is isotropic if $x \sim p$ is isotropic.
\end{definition}

Henceforth we use several results regarding the concentration of log-concave isotropic random vectors. We use slight modification where the center of the distribution is not necessarily in the origin. For completeness we present the proof of the modified theorems in Appendix~\ref{app:logcon concentrate}

\begin{theorem}[Theorem 4.1 in~\cite{ALPJ10}]\label{thm:logconcave}
Let $p$ be a log-concave distribution over $\R^d$ in isotropic position. There is a constant $C$ such that for all $t , \delta > 0$, the following holds for $n = \frac{C t^4 d \log^2(t/\delta)   }{\delta^2}$. For independent random vectors $x_1,\ldots,x_n \sim p$, with probability at least $1-\exp(-t\sqrt{d})$, 
$$ \left\| \frac{1}{n} \sum_{i=1}^n x_i x_i^\top - I_d \right\| \leq \delta.$$
\end{theorem}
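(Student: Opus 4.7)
The plan is to combine a covering argument on the unit sphere with truncation and a Bernstein-type inequality, using Paouris' concentration of the Euclidean norm for log-concave isotropic vectors. Since the matrix $M := \frac 1 n \sum_i x_i x_i^\top - I_d$ is symmetric,
$$\|M\| \;=\; \sup_{\theta \in S^{d-1}} \left|\frac 1 n \sum_{i=1}^n \langle x_i, \theta\rangle^2 - 1\right|,$$
and a standard $1/4$-net $\Ncal \subset S^{d-1}$ of cardinality at most $9^d$ reduces the problem to bounding $\max_{\theta \in \Ncal}|\cdot|$ uniformly, at the cost of a universal multiplicative constant.

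For a fixed direction $\theta$, the summands $\langle x_i, \theta\rangle^2$ are squares of one-dimensional log-concave variables with unit variance (by isotropy), so by Borell's lemma $\langle x_i, \theta\rangle$ is sub-exponential with a universal constant and has bounded fourth moment. A direct Bernstein bound does not reach the target failure probability $\exp(-t\sqrt d)$ because $Y_i := \langle x_i, \theta\rangle^2$ is only sub-exponential-squared, so I would truncate. Paouris' theorem gives $\Pr(\|x\|_2 \geq c\lambda\sqrt d) \leq \exp(-\lambda\sqrt d)$ for every $\lambda \geq 1$, so picking $r \asymp t\sqrt d \,\log(t/\delta)$ large enough to also absorb $\log n$ makes the good event $\Gcal = \{\forall i,\ \|x_i\|_2 \leq r\}$ hold with probability at least $1 - \tfrac 12 \exp(-t\sqrt d)$. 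The truncation bias $\|\Exp[xx^\top \mathbf{1}_{\|x\|>r}]\|$ is negligible (at most $\delta/4$) by integrating $\|x\|^2$ against Paouris' tail.

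On $\Gcal$ each $Y_i$ is bounded by $r^2$ with variance $O(1)$, so Bernstein's inequality combined with a union bound over the $9^d$ directions of $\Ncal$ yields
$$\Pr\!\left(\max_{\theta \in \Ncal}\left|\tfrac 1 n \sum_i Y_i - 1\right| > \delta/2 \,\Big|\, \Gcal\right) \;\leq\; 9^d \cdot 2\exp\!\left(-c\min\!\left(n\delta^2,\ \tfrac{n\delta}{r^2}\right)\right).$$
Forcing this to be at most $\tfrac 12 \exp(-t\sqrt d)$ and substituting $r^2 \asymp t^2 d \,\log^2(t/\delta)$ yields, after a self-consistent solve that clears the $\log n$ implicit in $r$, the claimed sample size $n = C t^4 d \log^2(t/\delta)/\delta^2$.

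The main obstacle is precisely this final calibration: the truncation radius $r$, the net resolution, and $n$ must be balanced so that Paouris' tail, the Bernstein tail summed over $9^d$ directions, and the truncation bias all drop below $\exp(-t\sqrt d)$ simultaneously. The factor $t^4$ in $n$ arises as $r^2 \asymp t^2 d$ sitting in the Bernstein denominator multiplied by a further $t^2$ needed to dominate the $t\sqrt d$ target exponent once the $d$-sized contribution from the net has been absorbed; the $\log^2(t/\delta)$ factor comes from clearing the $\log n$ terms that enter $r$ when the final value of $n$ is substituted back into itself.
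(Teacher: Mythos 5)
First, a framing point: the paper does not prove this theorem. It is imported verbatim from \cite{ALPJ10} and used as a black box, so there is no in-paper proof to compare against; I can only assess your argument on its own merits.

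Your route --- a $1/4$-net, truncation at the Paouris radius, and Bernstein with a union bound over the net --- is a natural first attempt, and the setup steps (the net reduction, the sub-exponentiality of $\langle x,\theta\rangle$, the observation that $\langle x,\theta\rangle^2$ is only ``sub-exponential-squared'' and therefore needs truncation, and the Paouris-based bound on the truncation bias) are all correct. The gap is in the final ``self-consistent solve,'' which does not actually close. To beat the $9^d$ union bound you must drive the Bernstein exponent above $cd$. After truncation the summands are bounded by $r^2$, and Paouris forces $r^2 \gtrsim t^2 d$ (to get $n$-fold failure probability below $\exp(-t\sqrt d)$). Plugging your claimed $n = C t^4 d \log^2(t/\delta)/\delta^2$ into the Bernstein exponent gives
$$ \min\!\left(n\delta^2,\ \frac{n\delta}{r^2}\right) \;\asymp\; \min\!\left(t^4 d \log^2(t/\delta),\ \frac{t^2}{\delta}\right), $$
and the second term is smaller than $d$ whenever $\delta \gtrsim t^2/d$. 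That regime includes the cases the paper actually uses (e.g.\ $t=O(1)$, $\delta=1/2$, $d$ large), and there the $9^d$ factor is not absorbed. Solving honestly for $n$ in that regime yields $n \gtrsim d\cdot r^2/\delta \asymp t^2 d^2/\delta$, i.e.\ the net-plus-Bernstein argument proves a \emph{quadratic-in-$d$} sample complexity, not the linear-in-$d$ bound stated in the theorem.

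This is not a calibration detail you can patch; it is the reason the ALPT result is hard. For sub-Gaussian ensembles the net argument is tight because $\langle x,\theta\rangle^2$ is genuinely sub-exponential and no truncation at level $\asymp d$ is needed. For log-concave ensembles the heavy tail of $\langle x,\theta\rangle^2$ together with the crude $9^d$ union bound loses a factor of essentially $d$, because the union bound charges every direction the worst-case truncation level even though large values of $\langle x_i,\theta\rangle^2$ are rare. Recovering the linear dependence on $d$ requires replacing the union bound by a chaining/empirical-process argument that exploits this rarity (this is the content of \cite{ALPJ10}, building on earlier decoupling and weak-moment techniques of Adamczak, Gu\'edon--Rudelson, and Paouris). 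So the statement should remain a citation; it is not re-derivable by the route you sketch.
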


\begin{corollary}\label{cor:logconcave}
Let $p$ be a log-concave distribution over $\R^d$ and $x \sim p$. Assume that $x$ is such that $\E[xx^T]=I_d$. Then, there is a constant $C$ such that for all $t\geq 1 , \delta > 0$, the following holds for $n = \frac{C t^4 d \log^2(t/\delta)   }{\delta^2}$. For independent random vectors $x_1,\ldots,x_n \sim p$, with probability at least $1-\exp(-t\sqrt{d})$, 
$$ \left\| \frac{1}{n} \sum_{i=1}^n x_i x_i^\top - I_d \right\| \leq \delta.$$
\end{corollary}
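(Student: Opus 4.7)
The plan is to reduce Corollary~\ref{cor:logconcave} to the isotropic case of Theorem~\ref{thm:logconcave} via a whitening step. First I would write $\mu = \E[x]$ and $\Sigma = \E[(x-\mu)(x-\mu)^\top] = I_d - \mu\mu^\top$. Applying Jensen's inequality in the direction $v = \mu/\|\mu\|$ gives $\|\mu\|^2 \leq \E[(v^\top x)^2] = 1$, so $\|\mu\| \leq 1$ and $\|\Sigma^{1/2}\| \leq 1$. Since a log-concave density on $\R^d$ has $d$-dimensional support, $\Sigma$ is in fact strictly positive definite, so the boundary case $\|\mu\| = 1$ cannot occur.

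Next I would set $z_i = \Sigma^{-1/2}(x_i - \mu)$. Log-concavity is preserved under invertible affine maps, so the $z_i$ are i.i.d.\ log-concave, and a direct computation gives $\E[z_i] = 0$ and $\E[z_i z_i^\top] = \Sigma^{-1/2}\Sigma\Sigma^{-1/2} = I_d$; hence $z$ is isotropic in the sense required by Theorem~\ref{thm:logconcave}. Applying that theorem with the same $t$ and precision $\delta' := \delta/3$ yields, for $n \geq C' t^4 d \log^2(t/\delta)/\delta^2$ with $C'$ a suitable constant (which will be absorbed into the $C$ of the corollary), that with probability at least $1 - \exp(-t\sqrt{d})$,
$$ \left\| \tfrac{1}{n}\sum_i z_i z_i^\top - I_d \right\| \leq \delta'. $$

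I would then translate back using $x_i = \Sigma^{1/2} z_i + \mu$. Expanding the outer product and using the identity $\Sigma + \mu\mu^\top = I_d$ yields
$$ \tfrac{1}{n}\sum_i x_i x_i^\top - I_d \;=\; \Sigma^{1/2}\!\left(\tfrac{1}{n}\sum_i z_i z_i^\top - I_d\right)\!\Sigma^{1/2} \;+\; \Sigma^{1/2}\bar z\,\mu^\top + \mu\,\bar z^\top \Sigma^{1/2}, $$
where $\bar z = \tfrac{1}{n}\sum_i z_i$. The first summand has operator norm at most $\delta'$ since $\|\Sigma^{1/2}\| \leq 1$, and the two cross terms together contribute at most $2\|\bar z\|\cdot\|\mu\| \leq 2\|\bar z\|$. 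It therefore suffices to show $\|\bar z\| \leq \delta/3$ with failure probability at most $\exp(-t\sqrt{d})$, after which a union bound and a minor rescaling of constants finish the argument.

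The last step — a sharp high-probability bound on $\|\bar z\|$ — is where the real work lies and is the main obstacle. The naive variance estimate gives only $\E\|\bar z\|^2 = d/n$, which via Chebyshev is far too weak for the target tail. To reach $\exp(-t\sqrt{d})$ one has to invoke a concentration-of-norm inequality for averages of isotropic log-concave vectors (a Paouris-type bound, in the same spirit as the tools underlying Theorem~\ref{thm:logconcave} itself), which gives $\|\bar z\| = O(\sqrt{d/n} + t/\sqrt{n})$ with the required probability. For our choice of $n$ this is comfortably below $\delta/3$, and combining the three error contributions while re-absorbing all universal constants into a single $C$ completes the proof.
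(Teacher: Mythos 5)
Your argument follows essentially the same route as the paper: center, whiten to get an isotropic log-concave variable, apply Theorem~\ref{thm:logconcave} to control the whitened sample covariance, and then control the cross terms involving the empirical mean $\bar z$. (The paper centers but whitens only implicitly, observing that $I_d - \mu\mu^\top \preceq I_d$ makes the norm of $\Sigma^{1/2}(\cdot)\Sigma^{1/2}$ no larger; your explicit whitening is actually a bit cleaner to follow.) Your decomposition of $\tfrac{1}{n}\sum x_i x_i^\top - I_d$ is correct.

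The one place where you stop short is exactly the point you flag: the high-probability bound on $\|\bar z\|$. You gesture at a ``Paouris-type bound,'' but the paper gets this with no extra machinery. The key observation (proved as Lemma~\ref{lem:iso logcon tail}) is that $S_n := \sqrt{n}\,\bar z = \tfrac{1}{\sqrt n}\sum_i z_i$ is \emph{itself} an isotropic log-concave vector, since log-concavity is closed under convolution, and $\E[S_n]=0$, $\E[S_n S_n^\top]=I_d$. One then applies Theorem~\ref{thm:thinshell} (Gu\'edon--Milman thin shell, already in the toolbox used for Theorem~\ref{thm:logconcave}) directly to $S_n$: the event $\|\bar z\| > \delta$ is $\|S_n\| > \sqrt{n}\,\delta$, and for $n = Ct^4 d\log^2(t/\delta)/\delta^2$ this means $\|S_n\|$ exceeds roughly $(1 + \tfrac12 t\sqrt{C})\sqrt d$, giving a failure probability $\leq \exp(-t\sqrt d)$ once $C$ is large enough. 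If you add this observation, your proof is complete and matches the paper's.
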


\begin{theorem}[Theorem 1.1 in \cite{GuedonM11}]\label{thm:thinshell}
There exist constants $c,C$ such that the following holds. Let $p$ be a log-concave distribution over $\R^d$ in  isotropic position and let $x \sim p$.
Then, for all $\theta \geq 0$,
$$ Pr\left[ \left|\|x\| - \sqrt{d}\right| > \theta  \sqrt{d}\right] \leq C \exp(-c \sqrt{d} \cdot \min(\theta^3,\theta)).$$
\end{theorem}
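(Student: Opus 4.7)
The plan is to split into two regimes matching the two branches of $\min(\theta^3,\theta)$. For $\theta \geq 1$ the target bound is $\exp(-c\sqrt{d}\,\theta)$, a large-deviation estimate; for $\theta < 1$ it is $\exp(-c\sqrt{d}\,\theta^3)$, a sharp thin-shell estimate. These regimes call for different tools, and at the end one stitches them together by absorbing factors into the constants $c,C$.

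For the large-$\theta$ regime the natural tool is Paouris's deviation inequality for the Euclidean norm of a log-concave isotropic vector $x$ in $\R^d$: for every $t \geq 1$,
$$ \Pr[\|x\| \geq t\sqrt{d}] \leq \exp(-c'\sqrt{d}\,t). $$
Taking $t = 1+\theta$ controls the upper tail in exactly the required form. The lower tail $\|x\| \leq (1-\theta)\sqrt{d}$ is automatic when $\theta \geq 1$ since $\|x\|\geq 0$, so this regime reduces to a one-line application of Paouris.

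The delicate regime is $\theta < 1$, where the cubic exponent must be extracted. Here I would proceed in three steps: (i) note that isotropy forces $\E\|x\|^2 = d$, so the question reduces to fluctuations of $\|x\|$ around $\sqrt{d}$; (ii) establish a variance bound $\var(\|x\|^2) \leq C d^{2-\alpha}$ for some universal $\alpha > 0$ using the $L_q$-centroid body machinery of Paouris and Klartag, which controls higher moments of linear functionals of $x$; and (iii) use the fact that the one-dimensional random variable $\|x\|$ is itself log-concave, so that moment control transfers into exponential integrability of $\|x\| - \sqrt{d}$ via the classical Borell-type inequalities for one-dimensional log-concave measures.

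The main obstacle is upgrading the natural quadratic exponent $\theta^2$ (which a plain variance-plus-Chebyshev argument would yield) to the advertised cubic $\theta^3$. This is the crux of Guedon--Milman and requires a bootstrap/interpolation argument: one rescales $x$, applies Paouris-type bounds on suitable lower-dimensional marginal projections, and iterates to convert large-deviation strength into sharper small-deviation behaviour. Getting the exponent right in the small-deviation window is what distinguishes this theorem from the easier Paouris bound, and is where essentially all of the technical work lives; the large-deviation half of the statement is, by comparison, immediate.
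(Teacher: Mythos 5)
The statement you are proving is cited in the paper directly as Theorem~1.1 of \cite{GuedonM11}; the paper gives no proof of it, so there is no internal argument to compare yours against. What you have written is therefore best assessed as a standalone sketch of the Gu\'edon--Milman proof.

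As such, it is directionally consistent with what that proof actually does, but it is a plan rather than a proof, and essentially all of the mathematical content is deferred. The large-deviation half ($\theta\geq 1$) is genuinely a one-liner from Paouris's inequality and you handle it correctly. The small-deviation half ($\theta<1$) is the entire point of the theorem, and here you describe the shape of the argument (moment estimates, $L_q$-centroid bodies, a bootstrap) without carrying out any of it; nothing in your step~(ii) or step~(iii) actually produces the exponent $\theta^3$. Two specific points that would need repair before this could become a proof: first, your claim that the scalar $\|x\|$ ``is itself log-concave'' so that ``moment control transfers into exponential integrability via Borell-type inequalities'' is not a standard fact and is not obviously true (the radial marginal of a log-concave density is $r^{d-1}$ times a spherical average, which need not be log-concave in $r$); what one actually uses is $\psi_1$-type tail control for $\|x\|$ coming from Borell's lemma, which is weaker. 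Second, your framing that ``a plain variance-plus-Chebyshev argument would yield $\theta^2$'' is off: Chebyshev gives only polynomial decay $\var(\|x\|)/(\theta\sqrt{d})^2$, not $\exp(-c\sqrt{d}\,\theta^2)$, so the gap to be bridged is not from a quadratic exponent to a cubic one but from a polynomial tail to a sub-exponential one with the right $\theta$-dependence. Closing that gap is precisely the bootstrap/iteration machinery of \cite{GuedonM11}, and it cannot be summarized away as you do in your final paragraph.
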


\begin{corollary} \label{cor:quasi_iso}
Let $p$ be a log-concave distribution over $\R^n$ and let $x\sim p$. Assume that $\E[xx^T] = I_d$. Then for some universal $C,c$ it holds for any $\theta \geq 3$ that
$$ \Pr\left[\|x\| > \theta \sqrt{d} \right] \leq C\exp\left(-c \theta \sqrt{d}\right) $$
\end{corollary}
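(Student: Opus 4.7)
The plan is to reduce the ``quasi-isotropic'' hypothesis (only the second moment is $I_d$, the mean may be nonzero) to the fully isotropic thin-shell estimate of Theorem~\ref{thm:thinshell}. Let $\mu = \E[x]$ and write
\[
\Sigma \;=\; \E[(x-\mu)(x-\mu)^\top] \;=\; \E[xx^\top] - \mu\mu^\top \;=\; I_d - \mu\mu^\top.
\]
Since $\Sigma \succeq 0$, one immediately reads off both $\|\mu\| \leq 1$ and $\Sigma \preceq I_d$. Setting $y = x - \mu$, the translated variable $y$ is log-concave (log-concavity is preserved by translation), centered, and has covariance $\Sigma$. By the triangle inequality $\|x\| \leq \|y\| + \|\mu\| \leq \|y\| + 1$, so it suffices to bound $\Pr[\|y\| > \theta\sqrt{d} - 1]$.

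To feed $y$ into Theorem~\ref{thm:thinshell} I would isotropize it. Let $k = \mathrm{rank}(\Sigma) \in \{d-1, d\}$ and let $\Sigma^{+/2}$ denote the Moore--Penrose pseudo-inverse of the symmetric square root of $\Sigma$. Define $z = \Sigma^{+/2} y$. A short computation shows $\E[zz^\top]$ equals the orthogonal projector onto the range of $\Sigma$, so $z$ is an isotropic log-concave random vector when viewed inside that $k$-dimensional subspace (where $y$ is also supported a.s.). The key inequality is
\[
\|y\|^2 \;=\; z^\top \Sigma\, z \;\leq\; \|z\|^2,
\]
using $\Sigma \preceq I_d$, which lets us transfer any upper tail bound on $\|z\|$ directly to $\|y\|$.

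Applying Theorem~\ref{thm:thinshell} to the isotropic log-concave vector $z$ in dimension $k$ gives, for all $s \geq 0$,
\[
\Pr\bigl[\|z\| > (1+s)\sqrt{k}\bigr] \;\leq\; C \exp\!\bigl(-c\sqrt{k}\,\min(s^3, s)\bigr).
\]
For $\theta \geq 3$ and $d \geq 2$, choose $s$ so that $(1+s)\sqrt{k} = \theta\sqrt{d} - 1$. Using $\sqrt{k} \geq \sqrt{d-1} \geq \sqrt{d}/\sqrt{2}$, one obtains $s \geq 1$ with $s$ proportional to $\theta$, so the $\min$ equals $s$ and the exponent simplifies to $-c'\theta\sqrt{d}$, yielding the claimed bound after renaming constants. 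The case $d=1$ is immediate from the one-dimensional instance of Theorem~\ref{thm:thinshell}.

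\textbf{Main obstacle.} The only delicate issue is the degenerate case $\|\mu\|=1$, where $\Sigma$ is rank-deficient and $x$ is supported on a hyperplane; the isotropized vector $z$ then lives in dimension $d-1$ rather than $d$. Using the Moore--Penrose pseudo-inverse (instead of $\Sigma^{-1/2}$) handles this cleanly, and since $\sqrt{d-1}$ differs from $\sqrt{d}$ by at most a constant factor the loss is absorbed into the universal constants $C, c$.
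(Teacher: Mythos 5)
Your proof is correct and follows essentially the same route as the paper's: center the variable to $y = x - \mu$, observe $\|\mu\| \le 1$ and $\Sigma = I_d - \mu\mu^\top \preceq I_d$, whiten to $z = \Sigma^{+/2}y$ so that $z$ is isotropic log-concave, use $\|y\| \le \|z\|$ to transfer the tail bound, and invoke Theorem~\ref{thm:thinshell}. The only difference is cosmetic: you treat the possibly rank-deficient case $\|\mu\|=1$ head-on via the pseudo-inverse, whereas the paper relegates that case to a footnote.
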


The following theorem provides a concentration bound for random vectors originating from an arbitrary distribution.

\begin{theorem} [\cite{Rudelson99}]\label{th:rudelson}
Let $X$ be a vector-valued random variable over $\reals^d$ with $\E[XX^\top] = \Sigma$ and $\|\Sigma^{-1/2} X\|^2 \leq R$. Then, for independent samples $X_1,\ldots,X_M$ from $X$, and $M \geq C R\log(R/\eps)/\epsilon^2$ the following holds with probability at least $1/2$:
$$\left\| \frac{1}{M} \sum_{i=1}^M X_i - \Sigma \right\| \leq \epsilon \|\Sigma\|.$$   
\end{theorem}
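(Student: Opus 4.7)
The plan is to follow the classical symmetrization plus non-commutative Khintchine argument due to Rudelson. I interpret the inequality in the statement as applying to the empirical covariance $\frac{1}{M}\sum_i X_iX_i^\top$ (rather than $\frac{1}{M}\sum_i X_i$, which would be dimensionally inconsistent with a matrix norm).

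\textbf{Step 1 (reduction to the isotropic case).} Replace $X_i$ by $Y_i = \Sigma^{-1/2} X_i$, so that $\E[Y_i Y_i^\top] = I_d$ and $\|Y_i\|^2 = X_i^\top \Sigma^{-1} X_i \leq R$. Any bound of the form $\|\frac{1}{M}\sum_i Y_i Y_i^\top - I_d\| \leq \epsilon$ then transfers to $\|\frac{1}{M}\sum_i X_i X_i^\top - \Sigma\| \leq \epsilon \|\Sigma\|$ upon conjugating by $\Sigma^{1/2}$. Henceforth assume $\Sigma = I_d$.

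\textbf{Step 2 (symmetrization).} Introducing an independent copy $X_i'$ of $X_i$ and independent Rademacher signs $\epsilon_i$, Jensen's inequality together with the symmetry of $X_iX_i^\top - X_i'X_i'^\top$ give
$$\E\Bigl\|\sum_{i=1}^M (X_iX_i^\top - I_d)\Bigr\| \;\leq\; 2\,\E\Bigl\|\sum_{i=1}^M \epsilon_i X_iX_i^\top\Bigr\|.$$
\textbf{Step 3 (Khintchine).} The heart of the argument is Rudelson's non-commutative Khintchine inequality, which for any fixed vectors $v_1,\dots,v_M$ gives
$$\E_\epsilon\Bigl\|\sum_i \epsilon_i v_i v_i^\top\Bigr\| \;\leq\; C\sqrt{\log R}\;\cdot\;\max_i \|v_i\|\;\cdot\;\Bigl\|\sum_i v_iv_i^\top\Bigr\|^{1/2}.$$
Apply this conditionally on the $X_i$, use $\|X_i\|^2\leq R$, and then pull the expectation inside the square root via concavity of $\sqrt{\cdot}$ to obtain
$$\E\Bigl\|\sum_i \epsilon_i X_iX_i^\top\Bigr\| \;\leq\; C\sqrt{R\log R}\;\sqrt{\E\bigl\|\sum_i X_iX_i^\top\bigr\|}.$$

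\textbf{Step 4 (self-bounding inequality and Markov).} Let $E = \E\|\frac{1}{M}\sum_i (X_iX_i^\top - I_d)\|$. The triangle inequality gives $\E\|\sum_i X_iX_i^\top\| \leq M(1+E)$, so combining the previous two displays yields the scalar self-bounding inequality
$$E \;\leq\; \frac{2C\sqrt{R\log R}}{\sqrt{M}}\,\sqrt{1+E}.$$
Solving this quadratic shows that $M \geq C' R\log R/\epsilon^2$ forces $E \leq \epsilon/4$, and then Markov's inequality yields the tail bound with probability at least $1/2$. The additional $\log(1/\epsilon)$ in the stated sample complexity $C R\log(R/\epsilon)/\epsilon^2$ is absorbed either by iterating the self-bounding step once or by replacing the Khintchine inequality with its higher-moment Schatten-$p$ version with $p \asymp \log(R/\epsilon)$.

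\textbf{Main obstacle.} The substantive technical input is Rudelson's non-commutative Khintchine inequality, whose proof relies on complex interpolation between Schatten classes and optimizing $p \asymp \log R$; I am treating it as a black box here. Once that inequality is available, the symmetrization and the self-bounding manipulation are entirely routine, so the only real difficulty is already packaged into the cited \cite{Rudelson99}.
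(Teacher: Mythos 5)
The paper does not prove this theorem; it is imported verbatim (with a typographical slip) from Rudelson's 1999 paper, so there is no ``paper's own proof'' to compare against. Your outline is exactly the argument of the cited reference: reduce to the isotropic case, symmetrize with Rademacher signs, apply the non-commutative Khintchine / Rudelson selection lemma conditionally, pass the expectation inside with concavity, and close with a self-bounding quadratic plus Markov. You also correctly diagnose that the displayed inequality must read $\|\frac{1}{M}\sum_i X_i X_i^\top - \Sigma\| \leq \epsilon\|\Sigma\|$; as written, $\frac{1}{M}\sum_i X_i$ is a vector and the comparison is ill-typed, and the proof in Rudelson's paper (and every downstream use in this paper, e.g.\ the proof of Theorem~\ref{thm:light const}) is for the outer-product form. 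One small imprecision: the logarithmic factor produced by the Khintchine step is more naturally $\sqrt{\log d}$ or $\sqrt{\log M}$ rather than $\sqrt{\log R}$; getting the stated $\log(R/\eps)$ in the sample complexity requires either the effective-rank truncation you allude to or the Schatten-$p$ optimization with $p\asymp\log(R/\eps)$, so the remark you make in Step~4 is genuinely needed rather than optional. With that caveat the sketch is sound and matches the source.
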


Finally, we also make use of barycentric spanners in our application to BLO and we briefly describe them next.
\subsection{Barycentric Spanners}
\begin{definition}[\cite{AwerbuchK08}]\label{def:barycentric}
A barycentric spanner of $\K \subseteq \R^d$ is a set of $d$ points $S = \{u_1,\ldots,u_d\} \subseteq \K$ such that any point in $\K$ may be expressed as a linear combination of the elements of $S$ using coefficients in $[-1,1]$. For $C>1$, $S$ is a $C$-approximate barycentric spanner of $\K$ if any point in $\K$ may be expressed as a linear combination of the elements of $S$ using coefficients in $[-C,C]$
\end{definition}

In  \cite{AwerbuchK08} it is shown that any compact set has a barycentric spanner. Moreover, they show that given an oracle with the ability to solve linear optimization problems over $\K$, an approximate barycentric spanner can be efficiently obtained. In the following sections we will use this constructive result.

\begin{theorem} [Proposition 2.5 in \cite{AwerbuchK08}] \label{thm:barycentric}
Let $\K$ be a compact set in $\R^d$ that is not contained in any proper linear subspace. Given an oracle for optimizing linear functions over $\K$, for any $C>1$, it is possible to compute a $C$-approximate barycentric spanner for $\K$, using $O(d^2\log_C(d))$ calls to the optimization oracle.
\end{theorem}

\section{Existence of Volumetric Ellipsoids and Spanners}\label{sec:existence}
In this section we show the existence of low order volumetric ellipsoids proving our main structural result, Theorem \ref{th:mainorder}. Before we do so, we first state a few simple properties of volumetric ellipsoids (recall the definition of $\order$ from Definition \ref{def:orderset}):

\eat{
The following simple observation shows that the notion of $\order$ defined in Definition \ref{def:orderset} is a linearly invariant notion.
\begin{observation}\label{lm:lininvariance}
  Let $T:\reals^d \to \reals^d$ be an invertible linear transformation and let $\K \subseteq \reals^d$ be a non-degenerate convex body. For any $S \subset \K$, $\K \subseteq \mE(S)$ iff $T\K \subseteq \mE(TS)$. In particular, $\order(T\K) = \order(\K)$. 
\end{observation}
\begin{proof}
Let $S \subseteq \K$ be such that $\K \subseteq \mE(S)$. Then, clearly $T\K \subseteq \mE(TS)$. Thus, $\order(T\K) \leq \order(\K)$. The same argument applied to $T^{-1}$ and $T\K$ shows that $\order(\K) \leq \order(T\K)$. 
\end{proof}
\subsection{Volumetric Ellipsoids}

This section gives the main convex geometry constructions that we apply to machine learning.% henceforth. 

Consider  a set $\K \subseteq \reals^d$ in Euclidean space. We mainly consider the case in which  $\K = \conv\{v_1,...,v_n\}$ is the convex hull of $n$ points.

One of the most well studied objects in convex geometry is the John ellipsoid, defined as the enclosing ellipsoid of smallest volume for $\K$. The Fritz John theorem characterises this ellipsoid and its properties, and we shall make used of it in this section to construct exploration basis for certain decision sets. 

Before doing so, we define yet another shape, of a more discrete nature, that fundamentally characterises a convex set. Given a set of vectors $S = \{v_1,...,v_t\}$, we denote by $\mathcal{E}(S)$ the ellipsoid defined by them, i.e. 
$$ \mathcal{E}(S) = \left\{ x \in \reals^d \mbox{ such that } x = \sum_{i \in S} \alpha_i v_i \ , \ \sum_i \alpha_i^2 \leq 1 \right\}$$
that is, the ellipsoid given by all vectors spanned by the set $S$ with Euclidean norm at most one. To see that this is indeed an ellipsoid, consider a point $x$ such that $x \in \mE(S)$. Then $x = V \alpha$, where $V$ is the matrix whose columns are all vectors of $S$, and $\alpha \in \mathbb{B}_t$. Thus, $\alpha = V^{-1} x$, where $V^{-1}$ is the Moore-Penrose pseudo-inverse of $V$. By the bound on the norm of $\alpha$, we have
$$ 1 \geq \|\alpha\|^2 = \|V^{-1} x\|^2 = x^\top (VV^\top)^{-1} x  $$
which is exactly the definition of an ellipsoid with defining norm $(VV^\top)^{-1} = (\sum_{i \in S} v_iv_i^\top)^{-1}$ (note that $VV^\top$ is a full rank matrix since $S$ spans the space). When discussing this norm we shall also use the notation $\|x\|_{\mE(S)} \eqdef \sqrt{x^\top (VV^\top)^{-1} x}$; indeed the described norm is that defined by the ellipsoid $\mE(S)$.

We first make a few observations about the above notion of $\order$:}
\begin{itemize}
\item The definition of $\order$ is linear invariant: for any invertible linear transformation $T:\reals^d \to \reals^d$ and $K \subseteq \reals^d$, $\order(\K) = \order(T\K)$. 
\begin{proof}
Let $S \subseteq \K$ be such that $\K \subseteq \mE(S)$. Then, clearly $T\K \subseteq \mE(TS)$. Thus, $\order(T\K) \leq \order(\K)$. The same argument applied to $T^{-1}$ and $T\K$ shows that $\order(\K) \leq \order(T\K)$. 
\end{proof}

\item The minimal volumetric ellipsoid is not unique in general; see example in figure \ref{fig:ellipsoid}. Further, it is in general different from the John ellipsoid. 
\item  For non-degenerate bodies $\K$, their order is naturally lower bounded by $d$, and there are examples in which it is strictly more than $d$ (e.g., figure \ref{fig:ellipsoid}).
\end{itemize}

\begin{figure}[t] 
\centering
\begin{minipage}[b]{0.45\linewidth}
\includegraphics[width=0.6\textwidth]{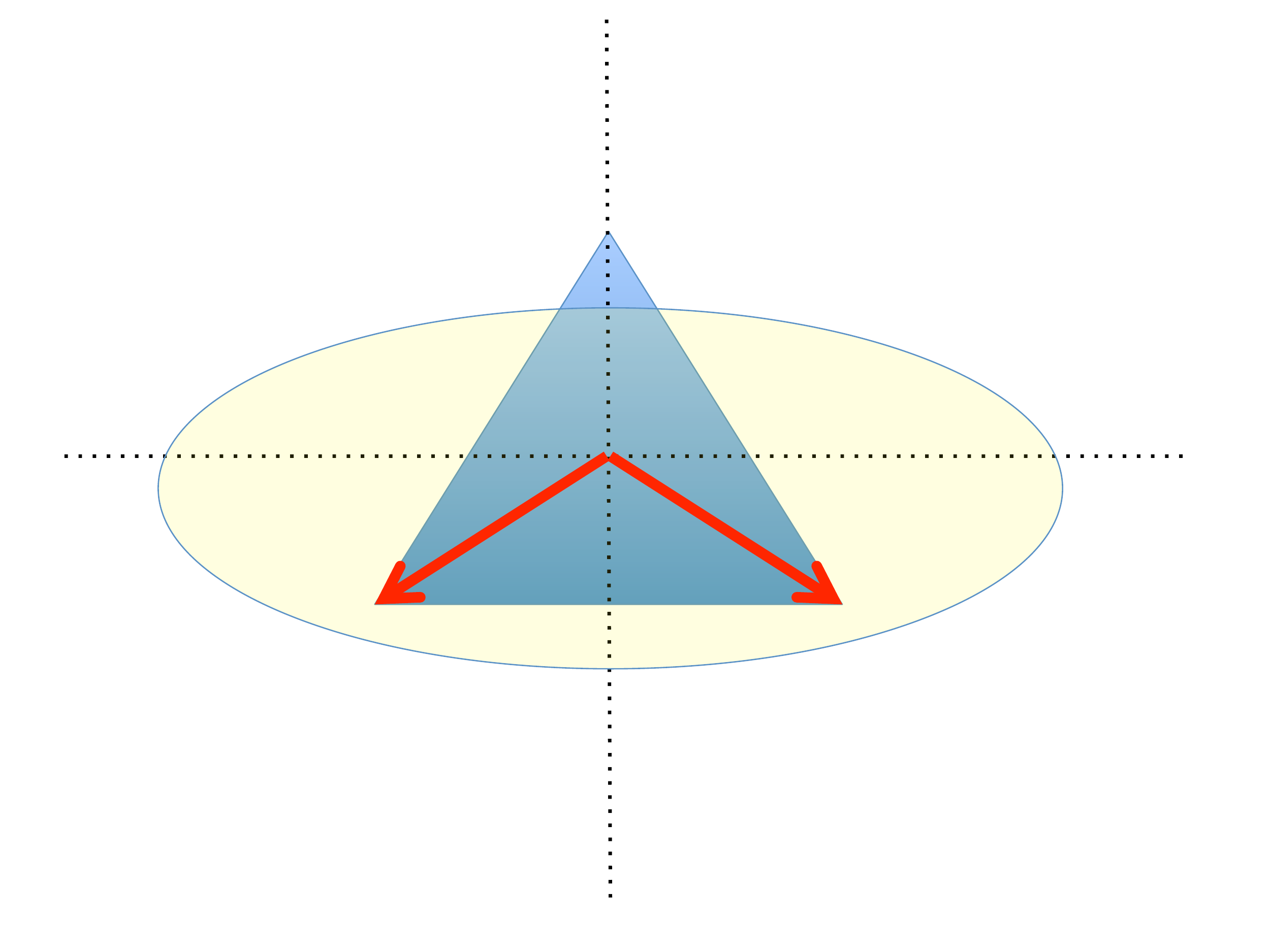}
\end{minipage}
\quad
\begin{minipage}[b]{0.45\linewidth}
\includegraphics[width=0.6\textwidth]{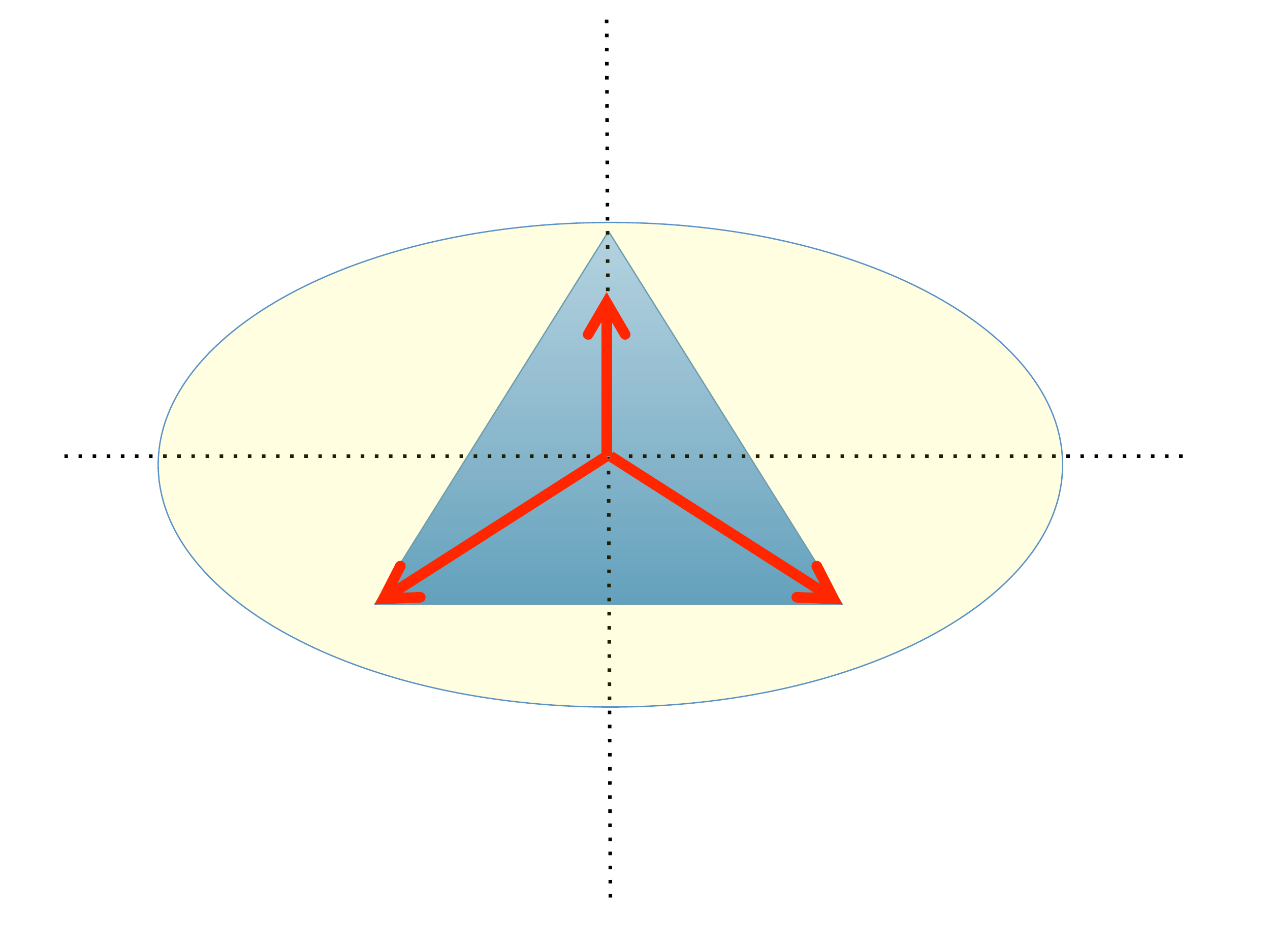}
\end{minipage}
\caption{In $\reals^2$  the order of the volumetric ellipsoid of the equilateral triangle centered at the origin is at least 3. If the vertices are $[0,1], [-\frac{\sqrt{3}}{2}, -\frac{1}{2}], [\frac{\sqrt{3}}{2},-\frac{1}{2}]$, then the eigenpoles of the ellipsoid of the bottom two vertices are $[0.\frac{2}{3}],[2,0]$. The second figure shows one possibility for a volumetric ellipsoid by adding $\frac{3}{4}$ of the first vertex to the previous ellipsoid. This shows the ellipsoid to be non-unique, as it can be rotated three ways.}
\label{fig:ellipsoid}
\end{figure}

%\subsection{Existence and Construction of Volumetric Ellipsoids} \label{sec:existence} In this section we prove our main structural result, Theorem \ref{th:mainorder}. 

To prove Theorem \ref{th:mainorder} we shall use John's decomposition theorem, Theorem \ref{thm:MVEE decomp}.\\

\begin{proof}{{\bf of Theorem \ref{th:mainorder}}}
%We prove the first and third part of the statement here and defer the proof of the second part to the appendix for lack of space. 

Let $\K \subseteq \reals^d$ be a compact set. %We are particularly interested in the discrete case where  $\K = \{v_1,...,v_n\}$ (or equivalently $\K = \conv\{v_1,\ldots,v_n\}$ is the convex hull of $n$ points), but the discussion below is general. 
Without loss of generality assume that $\K$ is symmetric and contains the origin; we can do so as in the following we only look at outer products of the form $v v^\top$ for vectors $v \in \K$. Further, as $\order(\K)$ is invariant under linear transformations, we can also assume that $\K$ has been linearly transformed to be in John's position.%so that it's John ellipsoid is the unit ball in $\reals^d$.

\eat{
Recall from preceding sections that the John ellipsoid for set $\K$ is the minimum volume ellipsoid that contains $\K$. Henceforth, assume that we have linearly transformed $\K$ such that this largest contained ellipsoid is the unit ball.  Then John's theorem says that $ \frac{1}{\sqrt{d} } \mathbb{B}_d \subseteq \K \subseteq  \mathbb{B}_d $.}
Now, by Theorem \ref{thm:MVEE decomp}, there are $m = O(d^2)$ points $S = \{u_1,\ldots,u_m\}$ on the enclosing unit ball that intersect $\K$ (on the boundary of $\K$), and non-negative weights $c_1,\ldots,c_m$ such that $ \sum_{i \in S}  c_i u_i u_i^\top = I_d$. This implies by taking trace on both sides that $\sum_i c_i = d$. 

Our idea is to start with the vectors $u_1,\ldots,u_m$ as a starting point for a volumetric spanner. However, this set has $O(d^2)$ points which is larger than what we want. We now prune these contact points via the sparsification methods of \cite{batson2012twice}. We use the following lemma that is a corollary of their Theorem 3.1.

\begin{lemma} [\cite{batson2012twice}] \label{lem:pruning contact}
Let $u_1,\ldots,u_m$ be unit vectors and let $p \in \Delta(m)$ be a distribution over $[m]$ such that 
$d\sum_{i=1}^m p_i u_i u_i^\top = I_d$. Then, there exists a (possibly multi) set $S \subseteq \{u_1,\ldots,u_m\}$ such that $\sum_{v \in S} vv^\top \succeq I_d$ and $|S| \leq 12d$. Moreover, such a set can be computed in time $O(md^3)$.
\end{lemma}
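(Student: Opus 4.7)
The plan is to apply the Batson--Spielman--Srivastava spectral sparsification theorem (their Theorem~1.1/3.1) to a rescaled version of the $u_i$, and then round the resulting fractional weights into integer multiplicities to build a multi-set.

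First I would set up the BSS hypotheses by rescaling: define $v_i = \sqrt{d p_i}\, u_i$, so that
\[
\sum_i v_i v_i^\top \;=\; d \sum_i p_i u_i u_i^\top \;=\; I_d,
\]
which is the isotropic condition BSS requires. Invoke BSS with target sparsity $n = 5d$: this yields nonnegative scalars $s_1, \dots, s_m$, at most $n$ of which are nonzero, satisfying
\[
I_d \;\preceq\; \sum_i s_i v_i v_i^\top \;\preceq\; \kappa\, I_d, \qquad \kappa \;=\; \left(\tfrac{\sqrt{n}+\sqrt{d}}{\sqrt{n}-\sqrt{d}}\right)^2 \;=\; \left(\tfrac{\sqrt{5}+1}{\sqrt{5}-1}\right)^2 \;=\; \tfrac{7+3\sqrt{5}}{2} \;<\; 6.86.
\]

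Next I would convert the weighted combination back to one involving the $u_i$'s by setting $c_i = s_i \cdot d p_i$, so that $s_i v_i v_i^\top = c_i u_i u_i^\top$. Form the multi-set $S$ by placing $u_i$ into it $\lceil c_i \rceil$ times. Then
\[
\sum_{v \in S} v v^\top \;=\; \sum_i \lceil c_i \rceil\, u_i u_i^\top \;\succeq\; \sum_i c_i u_i u_i^\top \;=\; \sum_i s_i v_i v_i^\top \;\succeq\; I_d,
\]
giving the required PSD lower bound. For the cardinality, note that $c_i > 0$ only at the at-most-$n$ indices where $s_i > 0$, and $\sum_i c_i = \mathrm{tr}\bigl(\sum_i s_i v_i v_i^\top\bigr) \leq \kappa d$ because $\mathrm{tr}(u_i u_i^\top) = 1$. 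Hence
\[
|S| \;\leq\; \sum_{i:\, c_i>0}(c_i+1) \;\leq\; n + \sum_i c_i \;\leq\; (5+\kappa)\,d \;<\; 11.86\,d \;\leq\; 12 d.
\]
The runtime bound follows from the standard BSS construction: its barrier-function analysis runs for $O(n) = O(d)$ rounds, each scanning the $m$ vectors and performing an $O(d^2)$ rank-one update to a $d\times d$ matrix (or its inverse), for a total of $O(m d^3)$; the final rounding is $O(d)$.

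The main obstacle, I expect, is packaging the real-valued BSS output as a multi-set without inflating the cardinality beyond the linear target, and obtaining the small explicit constant $12$. This is handled by the ``trace of the upper bound plus $n$ ceilings'' accounting, combined with balancing the sparsity parameter: too small an $n/d$ blows up $\kappa$, too large an $n/d$ makes $n$ dominate; any ratio roughly between $4$ and $7$ suffices, and $n=5d$ is the natural near-optimal choice.
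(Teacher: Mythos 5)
Your proposal is correct and follows essentially the same route as the paper: apply BSS (Theorem~3.1) to the rescaled vectors $\sqrt{dp_i}\,u_i$, round the resulting weights to integer multiplicities via ceilings, and control the cardinality by the trace of the BSS upper bound plus the number of nonzero weights. The only difference is that you fix the sparsity parameter ($n=5d$, i.e.\ $c=5$) from the start, whereas the paper leaves $c$ symbolic and optimizes at the end; both give a constant strictly below $12$.
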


\begin{proof} [Proof of Lemma~\ref{lem:pruning contact}]
We first state Theorem 3.1 of \cite{batson2012twice}.
\begin{theorem}
Let $v_1,\ldots,v_m$ be vectors in $\R^d$ and let $c>1$. Assume that $\sum_i v_i v_i^\top = I_d$. There exist scalars $s_i \geq 0$ for $i \in [m]$ with $|\{i| s_i>0\}| \leq cd$ such that 
$$ I_d \preceq \sum_i s_i v_i v_i^\top \preceq \frac{c+1+2\sqrt{c}}{c+1-2\sqrt{c}}I_d $$
Furthermore, these scalars can be found in time $O(cd^3 m)$.
\end{theorem}

Let $v_i = \sqrt{p_i}u_i$. We fix $c$ as some constant whose value will be determined later. Clearly for these vectors $v_i$ it holds that $d \sum_i v_i v_i^\top = I_d$. It follows from the above theorem that there exist some scalars $s_i \geq 0$ for which at most $cd$ are non-zeros and
\begin{equation}\label{eq:si vi}
I_d \preceq d \sum_i s_i v_i v_i^\top \preceq \frac{c+1+2\sqrt{c}}{c+1-2\sqrt{c}}I_d
\end{equation}
Our set $S$ will be composed by taking each $u_i$ $\ceil{d s_i p_i}$ many times. Plugging in equation~\eqref{eq:si vi} shows that indeed
$$ \sum_{w \in S} w w^\top \succeq I_d $$
and it remains to bound the size of $S$ i.e., $\sum \ceil{d s_i p_i}$.
By taking the trace of the expression and dividing by $d$ we get that
$$ \sum_i s_i \mathrm{Trace}(v_i v_i^\top) \preceq \frac{c+1+2\sqrt{c}}{c+1-2\sqrt{c}} $$
Plugging in the expressions for $v_i$ along with $u_i$ being unit vectors (hence $\mathrm{Trace}(u_iu_i^\top)=1$) lead to 
$$ \sum_i s_i p_i \preceq \frac{c+1+2\sqrt{c}}{c+1-2\sqrt{c}} $$
It follows that 
$$ \sum \ceil{d s_i p_i} \leq d \frac{c+1+2\sqrt{c}}{c+1-2\sqrt{c}} + |\{i|s_i \geq 0\}| \leq d\left(c+\frac{c+1+2\sqrt{c}}{c+1-2\sqrt{c}} \right)  $$
By optimizing $c$ we get $ \sum \ceil{d s_i p_i} \leq 12d$, and the lemma is proved.
\end{proof}

Lemma~\ref{lem:pruning contact} proves the existence of a spanner of size at most $12d$ for any compact set in $\R^d$. Furthermore, the running time to find the set is $O(md^3)=O(d^5)$, using the bound of $m=O(d^2)$. This along with the running time for obtaining the contact points leads to a total running time of $O(n^{3.5}+dn^3+d^5)$. 
\end{proof}

\section{Approximate Volumetric Spanners} \label{sec:algcont}

In this section we provide a construction for $(p,\eps)$-\probspanner (as in Definition~\ref{def:volume apx}), proving Theorem~\ref{thm:vol apx}.
We start by providing a more technical definition of a spanner. Note that unlike previous definitions, the following is not impervious to linear operators and will only be used to aid our construction.

\begin{definition}
A  $\beta$-relative-spanner is a discrete subset $S \subseteq \K$ such that for all $x \in \K$, $\|x\|_{\mE(S)}^2 \leq \beta \|x\|^2$.
\end{definition}

A first step is a spectral characterization of relative spanners:
\begin{lemma} \label{lem:relative span}
Let $S = \{v_1,...,v_T\} \subseteq \K$ span $\K$ and be such that 
$$ W = \sum_{i=1}^T v_i v_i^\top \succeq \frac{1}{\beta} I_d $$
Then $S$ is a $\beta$-relative-spanner. 
\end{lemma}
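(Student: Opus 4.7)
The plan is to unwind the definition of $\|x\|_{\mE(S)}$ and then reduce the claim to a single application of the operator-monotonicity of matrix inversion on the positive-definite cone.

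First I would recall from the preliminaries that if $V$ is the $d \times T$ matrix whose columns are $v_1,\ldots,v_T$, then $VV^\top = \sum_{i=1}^T v_i v_i^\top = W$, and by definition $\|x\|_{\mE(S)}^2 = x^\top (VV^\top)^{-1} x = x^\top W^{-1} x$. Note that the hypothesis $W \succeq \frac{1}{\beta} I_d$ already implies that $W$ is positive definite (hence genuinely invertible, so the Moore--Penrose pseudo-inverse coincides with the ordinary inverse); the ``$S$ spans $\K$'' assumption is really just there to guarantee this, and it would be worth a one-line remark.

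Next I would invoke the standard fact that on the cone of positive-definite matrices, inversion reverses the Loewner order: if $A \succeq B \succ 0$ then $B^{-1} \succeq A^{-1}$. Applying this to $W \succeq \frac{1}{\beta} I_d$ gives $W^{-1} \preceq \beta I_d$, and therefore
\[
\|x\|_{\mE(S)}^2 \;=\; x^\top W^{-1} x \;\leq\; \beta\, x^\top x \;=\; \beta \|x\|^2
\]
for every $x \in \reals^d$, in particular for every $x \in \K$. This is exactly the definition of a $\beta$-relative-spanner.

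There is no real obstacle here: the only thing to be slightly careful about is the case in which $\K$ is not full-dimensional, but since the hypothesis includes $W \succeq \frac{1}{\beta}I_d$ as a bound in $\reals^d$, the degenerate case is ruled out and the proof is essentially a one-line consequence of operator monotonicity of inversion.
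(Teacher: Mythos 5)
Your proof is correct and follows essentially the same route as the paper's: both reduce the claim to the fact that $W \succeq \frac{1}{\beta} I_d$ implies $W^{-1} \preceq \beta I_d$ and then expand $\|x\|_{\mE(S)}^2 = x^\top W^{-1} x$. Your extra remark about why the spanning hypothesis matters (it ensures $W$ is invertible so the pseudo-inverse is the true inverse) is a minor but fair clarification.
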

\begin{proof}
Let $V \in \reals^{d \times T}$ be a matrix whose columns are the vectors of $S$. As $V V^\top = W \succeq \frac{1}{\beta}I_d$ we have that 
$$ \beta I_d \succeq (VV^\top)^{-1} $$
It follows that
$$ \|x\|_{\mE(S)} = x^\top (VV^\top)^{-1} x \leq \beta \|x\|^2 $$
as required.
\end{proof}

%Recall that $p$ is a distribution over $\K$. We assume that $p$ is log-concave hence there is an efficient procedure to (approximate) sample $x \sim p$ (Lemma~\ref{lem:log-conv-sample}).

\begin{algorithm}[h!]
\caption{  }
    \begin{algorithmic}[1]
    \STATE Input: An oracle to $x \sim p$, where $p$ is a distribution over $\K$, $T \in \mathbb{N}$.
    	\FOR { $t=1$ to $T$} 
            \STATE Choose $u_t \sim p$ 
            \ENDFOR
            \RETURN $S = \{u_1,...,u_T\} $. 
    \end{algorithmic}
   \label{alg:generic}
\end{algorithm}

\begin{proof} [Proof of Theorem~\ref{thm:vol apx}]
We analyze Algorithm~\ref{alg:generic}, previously defined within Theorem~\ref{thm:vol apx} assuming the vectors are sampled exactly according to the log-concave distribution. The result involving an approximate sample, which is necessary for implementing the algorithm in the general case, is an immediate application of Lemma~\ref{lem:log-conv-sample} and Corollary~\ref{cor:logconcave}.

Our analysis of the algorithm is for $T = C(d+\log^2(1/\eps))$ samples, where $C$ is some sufficiently large constant.
Assume first that $\E_{x \sim p}[xx^\top] = I_d$. 
%If that is not originally the case the analysis works for $y=\Sigma^{-1/2}x$ where $\Sigma =\E_{x \sim p}[xx^T]$.
Let $W = \sum_{i=1}^T u_i u_i^\top$. Then, for $C > 0$ large enough, by Corollary~\ref{cor:logconcave}, $\|\frac{1}{T}W - I_d\| \leq 1/2$ w.p.\ at least $1-\exp(-\sqrt{d})$.  Therefore, $S$ spans $\reals^d$ and 
$$  \frac{1}{T}W \succeq I_d - \frac{1}{2}I_d = \frac{1}{2}I_d$$
Thus according to Lemma~\ref{lem:relative span}, $S$ is a $(2/T)$-relative spanner. 
Consider the case where $\Sigma =\E_{x \sim p}[xx^\top]$ is not necessarily the identity. By the above analysis we get that $\Sigma^{-1/2}S=\{\Sigma^{-1/2}u_1,\ldots,\Sigma^{-1/2}u_T\}$ form a $(2/T)$-relative spanner for $\Sigma^{-1/2} \K$. This is since the r.v defined as $\Sigma^{-1/2}x$ where $x \sim p$ is log-concave.
The latter along with corollary~\ref{cor:quasi_iso} implies that for any $\theta \geq 1$,
\begin{equation} \label{eq:sigmax 3 theta}
\Pr_{x \sim p} \left[ \|\Sigma^{-1/2} x\| \geq 3\theta \sqrt{d} \right] \leq c_1\exp\left(-c_2\theta \sqrt{d}\right) 
\end{equation}
for some universal constants $c_1,c_2>0$. It follows that for our set $S$ and any $\theta \geq 1$,

\begin{eqnarray*}
\Pr_{x \sim p}\left[ \|x\|_{\mE(S)} > \theta \right] & = \Pr_{x \sim p}\left[ \|\Sigma^{-1/2}x\|_{\mE(\Sigma^{-1/2}S)} > \theta\right] & \mbox{$\|x\|_{\mE(S)} =  \|\Sigma^{-1/2}x\|_{\mE(\Sigma^{-1/2}S)}$} \\ 
& \leq  \Pr_{x \sim p}\left[ \|\Sigma^{-1/2}x\| > \theta \sqrt{T/2}\right] & \mbox{$\Sigma^{-1/2}S$ is a $2/T$-relative-spanner} \\
& = \Pr_{x \sim p}\left[ \|\Sigma^{-1/2}x\| > 3 \theta \sqrt{d}  \sqrt{\frac{C}{18}} \cdot \sqrt{1+\frac{\log^2(1/\eps)}{d}}\right]  & \mbox{$T = C(d+\log^2(1/\eps))$} \\
& \leq c_1 \exp\left( -c_2\theta \sqrt{d}  \sqrt{\frac{C}{18}} \cdot \sqrt{1+\frac{\log^2(1/\eps)}{d}} \right) &  \mbox{Equation~\eqref{eq:sigmax 3 theta}, $C\geq 18$} \\
& \leq \exp\left( -\theta  \sqrt{d+\log^2(1/\eps)} \right) &  \mbox{$C$ sufficiently large}  \\
& \leq \eps^{-\theta} &  
\end{eqnarray*}

\end{proof}

%%%%%%%%%%%%%%%%%%%%%%%%%%%%%%%%%%%%%%%%%%%%%%%%%%%%%%%%%%%%%%%%

In our application of volumetric spanners to BLO, we also need the following relaxation of volumetric spanners where we allow ourselves the flexibility to scale the ellipsoid:
\begin{definition} \label{def:rho apx}
A $\rho$-\ratiospanner $S$ of $\K$ is a subset $S \subseteq \K$ such that for all $x \in \K$, $\|x\|_{\mE(S)} \leq \rho.$
\end{definition}

%It is easy to check that a barycentric spanner for $\K$ is in fact a $\rho$-\ratiospanner for $\rho = \sqrt{d}$. 
One example for such an approximate spanner with $\rho = \sqrt{d}$ is a barycentric spanner (Definition~\ref{def:barycentric}). In fact, it is easy to see that a $C$-approximate barycentric spanner is a $C\sqrt{d}$-\ratiospanner. The following is immediate from Theorem~\ref{thm:barycentric}.
\begin{corollary}  \label{cor:ratio sqrt d}
Let $\K$ be a compact set in $\R^d$ that is not contained in any proper linear subspace. Given an oracle for optimizing linear functions over $\K$, for any $C>1$, it is possible to compute a $C\sqrt{d}$-\ratiospanner $S$ of $\K$ of cardinality $|S|=d$, using $O(d^2\log_C(d))$ calls to the optimization oracle.
\end{corollary}
%%%%%%%%%%%%%%%%%%%%%%%%%%%%%%%%%%%%%%%%%%%%%%%%%%%%%%%%%%%%%%%%
%\section{Algorithmic Construction for Discrete Sets}\label{sec:algdiscrete}
\section{Fast Volumetric Spanners for Discrete Sets}\label{sec:algdiscrete}
In this section we describe a different algorithm that constructs volumetric spanners for discrete sets. The order of the spanners we construct here is suboptimal (in particular, there is a dependence on the size of the set $\K$ which we didn't have before). However, the algorithm is particularly simple and efficient to implement (takes time linear in the size of the set).%, and it is suboptimal only by a factor of $O(\log |\K|)$.

\begin{algorithm}[h!]
\caption{  }
    \begin{algorithmic}[1]
    \STATE Input $\K = \{x_1,...,x_n\} \subseteq \reals^d$.
    \IF {$n < C d\log d$}  
     \RETURN $S \leftarrow \K$
     \ENDIF
\STATE Compute $\Sigma = \sum_i x_i x_i^\top$ and let $u_i = \Sigma^{-1/2} x_i$. 
\STATE For $i \in [n]$, let $p_i = 1/2n + \|u_i\|^2/2d$. Let $S$ be a random set obtained by drawing $M = C d\log d$ samples with replacement from $[n]$ according to the distribution $p_1,\ldots,p_n$. \label{stp:S from iso}
\STATE Verify that for at least $n/2$ vectors from $\{x_1,\ldots,x_n\}$, it holds that $\|x_i\|_{\mE(S)} \leq 1$. If that is not the case discard $S$ and repeat the above step.
\STATE Apply the algorithm recursively on the data points for which $\|x_i\|_{\mE(S)} > 1$.
    \end{algorithmic}
   \label{alg:discrete}
   \end{algorithm}

\begin{theorem} \label{thm:light const}
Given a set of vectors $\K = \{x_1,\ldots,x_n\} \in \reals^d$, Algorithm \ref{alg:discrete} outputs a volumetric spanner of size $O((d\log d)(\log n))$ and has an expected running time of $O(n d^2)$. 
\end{theorem}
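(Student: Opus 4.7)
I would proceed by strong induction on $n$. The base case $n < Cd\log d$ is immediate: $S = \K$ is returned, and the PSD inequality $x_i x_i^\top \preceq \sum_j x_j x_j^\top = W$ yields $\|x_i\|^2_{\mE(S)} \leq 1$ for every $x_i$.

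The heart of the inductive step is the claim that one execution of Step~\ref{stp:S from iso} produces, with at least some fixed constant probability, a set $S$ of size $M = Cd\log d$ under which at least $n/2$ of the input vectors satisfy $\|x_i\|_{\mE(S)} \leq 1$. Given this claim, the verification step succeeds in $O(1)$ expected retries, each recursive instance has size at most $n/2$, the recursion depth is therefore $O(\log n)$, and each level contributes $M = O(d\log d)$ vectors, giving total spanner size $O(d\log d\log n)$. The per-level cost is dominated by forming $\Sigma$, $\Sigma^{-1/2}$, all leverage scores, $W$, $W^{-1}$, and the $n$ quadratic forms $x_i^\top W^{-1} x_i$, all totaling $O(nd^2)$; geometric summation across levels yields total expected runtime $O(nd^2)$.

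To prove the central claim, I would move to isotropic coordinates $u_i = \Sigma^{-1/2}x_i$ so that $\sum_i u_i u_i^\top = I_d$, each leverage score $\|u_i\|^2 \in [0,1]$, and $\sum_i \|u_i\|^2 = d$. Writing $\tilde W = \Sigma^{-1/2} W \Sigma^{-1/2} = \sum_{j\in S} u_j u_j^\top$, the identity $\|x_i\|_{\mE(S)}^2 = u_i^\top \tilde W^{-1} u_i$ reduces the task to a spectral lower bound on $\tilde W$. The hybrid distribution $p_i = \tfrac{1}{2n} + \tfrac{\|u_i\|^2}{2d}$ is tuned so that the random vector $X := u_j/\sqrt{p_j}$ (with $j \sim p$) satisfies $\E[XX^\top] = I_d$ and $\|X\|^2 \leq 2d$; Rudelson's Theorem (Theorem~\ref{th:rudelson}) with $R = 2d$, $\eps = 1/2$, and $M$ a large enough constant times $d\log d$ then yields $\sum_k u_{j_k} u_{j_k}^\top/(Mp_{j_k}) \succeq \tfrac12 I_d$ with probability at least $1/2$. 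The $1/(2n)$ floor in $p_i$ enforces $Mp_{j_k} \geq M/(2n)$, so this bound lifts to the unweighted Gram matrix: $\tilde W \succeq (M/(4n)) I_d$, whence $\|x_i\|^2_{\mE(S)} \leq (4n/M)\|u_i\|^2$. A Markov bound on $\sum_i \|u_i\|^2 = d$ finally shows that at most $4nd/M = 4n/(C\log d) \leq n/2$ indices violate $\|x_i\|_{\mE(S)} \leq 1$, once $C$ is chosen large enough.

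The subtle point is this spectral concentration step: unlike standard leverage-score spectral sparsification, the relevant Gram matrix $\tilde W$ is \emph{unweighted}, so the role of the uniform floor $1/(2n)$ in the sampling distribution is precisely to let one convert Rudelson's approximation of $I_d$ into a spectral lower bound on $\tilde W$ at the scale $M/n$, while leaving enough slack in the subsequent Markov step to halve the remaining instance at every recursion level.
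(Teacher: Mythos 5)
Your proposal is correct and follows the paper's own argument essentially step for step: isotropize via $\Sigma^{-1/2}$, sample from the hybrid distribution $p_i = \tfrac{1}{2n} + \tfrac{\|u_i\|^2}{2d}$, invoke Rudelson's theorem with $R = 2d$ and $\eps = 1/2$ to get the weighted spectral bound with probability $\geq 1/2$, use the $1/(2n)$ floor on the sampling probabilities to lift it to the unweighted Gram matrix $\tilde W \succeq (M/4n) I_d$, and finish with Markov to bound the number of violating indices by $4nd/M \leq n/2$. The only superficial difference is in the Markov step: the paper bounds $\sum_i u_i^\top \tilde W^{-1} u_i$ directly as a trace of $\tilde W^{-1}$, whereas you pass through the pointwise estimate $\|x_i\|_{\mE(S)}^2 \leq (4n/M)\|u_i\|^2$ and then sum the leverage scores $\sum_i \|u_i\|^2 = d$; these are algebraically identical and give the same $4nd/M$ bound. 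The recursion and runtime accounting are likewise the same as the paper's.
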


\begin{proof} 
Consider a single iteration of the algorithm with input $v_1,\ldots,v_n \in \reals^d$. We claim that the random set $S$ obtained in step \ref{stp:S from iso} satisfies the following condition with constant probability: 
\begin{equation} \label{eq:disc1}
\Pr_{x \in \K}\left[ \|x\|_{\mE(S)} \leq 1 \right] \geq 1/2
\end{equation}

Suppose the above statement is true. Then, the lemma follows easily as it implies that for the next iteration there are fewer than $n/2$ vectors. Hence, after $(\log n)$ recursive calls we will have a volumetric spanner. The total size of the set will be $O((d\log d)(\log n))$. To see the time complexity, consider a single run of the algorithm. The most computationally intensive steps are computing $\Sigma$ and $\Sigma^{-1/2}$ which take time $O(nd^2)$ and $O(d^3)$ respectively. We also need to compute $(\sum_{v\in S} v v^\top)^{-1}$ (to compute the $\mE(S)$ norm) which takes time $O(d^3 \log d)$, and compute the $\mE(S)$ norm of all the vectors which requires $O(nd^2)$. As $n = \Omega(d\log(d))$, it follows that a single iteration runs of a total expected time of $O(nd^2)$. Since the size of $n$ is split in half between iterations, the claim follows.

We now prove that Equation \ref{eq:disc1} holds with constant probability
\begin{equation}
  \label{eq:disc2}
 x_j^\top \left(\sum_{v \in S} v v^\top\right)^{-1} x_j = u_j^\top \left(\sum_{v \in S'} v v^\top\right)^{-1} u_j.  
\end{equation}
where $S' = \{ \Sigma^{-1/2}v | v \in S \}$ is the (linearly) shifted version of $S$.
Therefore, it suffices to show that with sufficiently high probability, the right hand side of the above equation is bounded by $1$ for at least $n/2$ indices $j \in [n]$.

Note that $p_i = 1/2n + \|u_i\|^2/2d$ form a probability distribution: $\sum_i p_i = 1/2 + (\sum_i \|u_i\|^2)/2d = 1$. Let $X \in \reals^d$ be a random variable with $X = u_i/\sqrt{p_i}$ with probability $p_i$ for $i \in [n]$. Then, $\E[XX^\top] = I_d$. Further, for any $i \in [n]$
$$\|u_i\|^2/p_i \leq 2d.$$
Therefore, by Theorem \ref{th:rudelson}, if we take $M = C d (\log d)$ samples $X_1,\ldots,X_M$ for $C$ sufficiently large, then with probability of at least $1/2$, it holds that 
$$\sum_{i=1}^M X_i X_i^\top \succeq (M/2) I_d.$$
Let $T \subseteq [n]$ be the multiset corresponding to the indices of the sampled vectors $X_1,\ldots,X_M$. The above inequality implies that
$$\sum_{i \in T} \frac{1}{p_i} u_i u_i^\top \succeq (M/2) I_d.$$
Now, 
$$\sum_{v \in S'} v v^\top \succeq (\min_i p_i) \sum_{v \in S'} \frac{1}{p_i} v v^\top \succeq (\min_i p_i) (M/2) I_d \succeq (M/4n) I_d.$$
Therefore,
\begin{align*}
 \sum_{i=1}^n u_i^\top \left(\sum_{v \in S'} v v^\top\right)^{-1} u_i &= \sum_{i=1}^n \mathrm{Tr} \left(\left(\sum_{v \in S'} v v^\top\right)^{-1}\left(u_i u_i^\top\right)\right)\\
&= \mathrm{Tr} \left(\left(\sum_{v \in S'} v v^\top\right)^{-1}\left(\sum_{i=1}^n u_i u_i^\top\right)\right)\\
&= \mathrm{Tr} \left(\left(\sum_{v \in S'} v v^\top\right)^{-1}\right) \\
&\leq \frac{4nd}{M} \leq \frac{4n}{C\log d} \leq \frac{n}{2\log d},
\end{align*}
for $C$ sufficiently large. Therefore, by Markov's inequality and Equation \ref{eq:disc2}, it follows that Equation \ref{eq:disc1} holds with high probability. The theorem now follows.
\end{proof}

%%% OLDER VERSION
\ignore{  
\subsection{Introduction to Exploration: Barycentric Spanners and John's Ellipsoid }

In this section we review two geometric constructions that have been used in previous work in machine learning  for designing an exploration basis for a body $\K \in \R^d$. The first is the ellipsoid corresponding to the barycentric spanner of $\K$ which is defined as the ellipsoid of maximum volume, supported by exactly $d$ points from $\K$. The second is the minimum volume enclosing ellipsoid (MVEE) also known as John's ellipsoid.  

As we show later on,  our definition of a volumetric spanner enjoys properties of both objects. Similar to barycentric spanners, it is supported by a small (quasi-linear) set of points of $\K$. Simultaneously and unlike the barycentric counterpart, the volumetric ellipsoid contains the body $\K$, a property shared with John's ellipsoid.

\subsection{Barycentric Spanners}
The notion of \emph{Barycentric spanners} was introduced in the work of \cite{AwerbuchK08}, in order to define an exploration basis for an online shortest path problem. Barycentric Spanners have since been used as an exploration basis in several works: In 
 \cite{dani2007price} for online bandit linear optimization, in \cite{bartlett2008high} for a high probability counterpart of the online bandit linear optimization, in \cite{kakade2009playing} for repeated decision making of approximable functions and in \cite{dani2008stochastic} for a stochastic version of bandit linear optimization.

\begin{definition} \label{def:barycentric}
A barycentric spanner of $\K \subseteq \R^d$ is a set of $d$ points $S = \{u_1,\ldots,u_d\} \subseteq \K$ such that any point in $\K$ may be expressed as a linear combination of the elements of $S$ using coefficients in $[-1,1]$. For $C>1$, $S$ is a $C$-approximate barycentric spanner of $\K$ if any point in $\K$ may be expressed as a linear combination of the elements of $S$ using coefficients in $[-C,C]$
\end{definition}

In  \cite{AwerbuchK08} it is shown that any compact set has a barycentric spanner. Moreover, they show that given an oracle with the ability to solve linear optimization problems over $\K$, an approximate barycentric spanner can be efficiently obtained. In the following sections we will use this constructive result.

\begin{theorem} [Proposition 2.5 in \cite{AwerbuchK08}] \label{thm:barycentric}
Let $\K$ be a compact set in $\R^d$ that is not contained in any proper linear subspace. Given an oracle for optimizing linear functions over $\K$, for any $C>1$, it is possible to compute a $C$-approximate barycentric spanner for $\K$, using $O(d^2\log_C(d))$ calls to the optimization oracle.
\end{theorem}

\subsection{The Fritz John Ellipsoid}

The {\bf John ellipsoid} is the unique ellipsoid of smallest volume containing a given convex body in Euclidean space. Its properties have been the subject of study in convex geometry since John's work \cite{John48} (see \cite{Ball97} and \cite{henk} for historic information). 

Suppose that we have linearly transformed $\K$ such that its minimum volume enclosing ellipsoid is the unit ball (in convex geometric terms, $\K$ is in John's position).  Then Johns theorem asserts the surprising fact that
$$ \frac{1}{{d} } \ball \subseteq \K \subseteq  \ball, $$
where $\ball$ denotes the unit ball in $\reals^n$.
Furthermore, for symmetric convex bodies (which are far more abundant in machine learning), the factor $\frac{1}{d}$ above can be replaced by $\frac{1}{\sqrt{d}}$. 

John's ellipsoid and in particular its contact points with the convex body it encapsulates makes for an appealing exploration basis, and indeed \cite{BubeckCK12} have used exactly this machinery to attain an optimal-regret bandit linear optimization algorithm. Unfortunately we know of no efficient algorithm to compute, or even approximate up to a constant, the John ellipsoid for a general convex set, thus the latter result does not give a polynomial time algorithm for BLO.

The following theorem gives a characterization of the minimum enclosing ellipsoid, and was originally proved by John,  restated here from \cite{Ball97} and \cite{GruberS12}. Henceforth, let $I_d$ denote the $d \times d$ identity matrix.

\begin{theorem} \label{thm:MVEE decomp}
\cite{Ball97}
Let $\K \in \R^d$ be a symmetric convex set and assume that the unit sphere is its minimal enclosing ellipsoid. Then there exist $m \leq d(d+1)/2-1$ contact points of $\K$ and the sphere $u_1,\ldots,u_m$ and a vector $c \in \R^m$ such that $c \geq 0$, $\sum c_i = d$ and $\sum c_i u_i u_i^T = I_d$.
\end{theorem}

The computation of the linear transformation that makes the unit sphere to be the minimum volume enclosing ellipsoid (MVEE) is not known to be efficient in general, nor are the contact points known to be efficiently computable. For our construction of volumetric spanners and the volumetric ellipsoid that we define later on, it suffices to compute the MVEE of a discrete symmetric set, which is known to be efficiently computable.  We make use of the following (folklore) algorithmic result:

\begin{theorem} [folklore, see e.g. \cite{khachiyan1996rounding,APT08}]
Let $\K \subseteq \R^d$ be a set of $n$ points. It is possible to compute an $\eps$-approximate MVEE  for $\K$ (an enclosing ellipsoid of volume at most $(1+\eps)$ that of the MVEE) in time  $O(n^{3.5}  \log \frac{1}{\eps})$.
\end{theorem}

The latter is attainable via the ellipsoid method or path-following interior point methods (see references in theorem statement). An approximation algorithm rather than an exact one  is necessary in a real-valued computation model, and the logarithmic dependence on the approximation guarantee is as good as one can hope for in general. 

Note that this theorem addresses the case of a discrete set of points and/or their convex hull, rather than general convex sets. For our purposes, henceforth it suffices to consider this case: we use a different methodology to build an exploration basis for general convex sets. 

Thus, the above theorem allows us to efficiently compute a linear transformation such that the MVEE of $\K$ is essentially the  unit sphere. We can then use linear programming to compute an approximate representation as follows:

\begin{theorem} \label{thm:johnconstructive}
Let $\{x_1,\ldots,x_n\}=\K \subseteq \R^d$ be a set of $n$ points and assume that:
\begin{enumerate}
\item
$\K$ is symmetric, i.e. if $x \in \K$ then also $-x \in \K$. 
\item
The  John Ellipsoid of $\K$ is the unit ball.
\end{enumerate}
Then it is possible, in $O((\sqrt{n}+d) n^3)$ time, to compute a vector $c \in \R^n$ such that: 
\begin{enumerate}
\item
 $c \geq 0$
\item
$\sum c_i \leq d$
\item
$\sum_{i=1}^n c_i x_ix_i^\top = I_d$
\end{enumerate}
\end{theorem}
\begin{proof}

Denote the MVEE of $\K$ by  $\cal E$ and let $V$ be its corresponding $d \times d$ matrix, meaning $V$ is such that $\|y\|_{\cal E}^2 = y^\top V^{-1} y \leq 1$ for all $y \in \K$. By our assumptions $I_d = V $.

As $\K$ is symmetric and its MVEE is the unit ball, according to Theorem~\ref{thm:MVEE decomp}, there exist $m \leq d(d+1)/2-1$ contact points $u_1,\ldots,u_m$ of $\K$ with the unit ball and a vector $c' \in \R^m$ such that $c' \geq 0$, $\sum c_i' = d$ and $\sum c_i' u_i u_i^T = I_d$. It follows that the following LP has a feasible solution: Find $c \in \R^n$ such that $c \geq 0$, $\sum c_i \leq d$ and $ \sum c_i u_i u_i^T =  I_d$.
The described LP has $O(n + d^2)$ constraints and $n$ variables. It can thus be solved in time $O(d+ \sqrt{n})n^3)$ via interior point methods.
\end{proof}

\subsection{Structure of the paper}
In the next section we dive into convex geometry and define the key notions behind our constructions. Following that, we list preliminaries and known results from measure concentration, convex geometry and online learning in section \ref{sec:prelim}. In section \ref{sec:algcont} and \ref{sec:algdiscrete} we give the construction of an efficient volumetric spanner for, respectively, continuous  and discrete sets.  We then proceed to describe an application to bandit linear optimization in section \ref{sec:blo}.

\section{Volumetric Ellipsoids and Spanners}
This section gives the main convex geometry constructions that we apply to machine learning.% henceforth. 

Consider  a set $\K \subseteq \reals^d$ in Euclidean space. We mainly consider the case in which  $\K = \conv\{v_1,...,v_n\}$ is the convex hull of $n$ points.

One of the most well studied objects in convex geometry is the John ellipsoid, defined as the enclosing ellipsoid of smallest volume for $\K$. The Fritz John theorem characterises this ellipsoid and its properties, and we shall make used of it in this section to construct exploration basis for certain decision sets. 

Before doing so, we define yet another shape, of a more discrete nature, that fundamentally characterises a convex set. Given a set of vectors $S = \{v_1,...,v_t\}$, we denote by $\mathcal{E}(S)$ the ellipsoid defined by them, i.e. 
$$ \mathcal{E}(S) = \left\{ x \in \reals^d \mbox{ such that } x = \sum_{i \in S} \alpha_i v_i \ , \ \sum_i \alpha_i^2 \leq 1 \right\}$$
that is, the ellipsoid given by all vectors spanned by the set $S$ with Euclidean norm at most one. To see that this is indeed an ellipsoid, consider a point $x$ such that $x \in \mE(S)$. Then $x = V \alpha$, where $V$ is the matrix whose columns are all vectors of $S$, and $\alpha \in \mathbb{B}_t$. Thus, $\alpha = V^{-1} x$, where $V^{-1}$ is the Moore-Penrose pseudo-inverse of $V$. By the bound on the norm of $\alpha$, we have
$$ 1 \geq \|\alpha\|^2 = \|V^{-1} x\|^2 = x^\top (VV^\top)^{-1} x  $$
which is exactly the definition of an ellipsoid with defining norm $(VV^\top)^{-1} = (\sum_{i \in S} v_iv_i^\top)^{-1}$ (note that $VV^\top$ is a full rank matrix since $S$ spans the space). When discussing this norm we shall also use the notation $\|x\|_{\mE(S)} \eqdef \sqrt{x^\top (VV^\top)^{-1} x}$; indeed the described norm is that defined by the ellipsoid $\mE(S)$.

\begin{definition}\label{def:orderset}
Let $\K \subseteq \reals^d$ be a  set in Euclidean space. For $S \subseteq \K$, we say that $\mE(S)$ is a {\bf volumetric ellipsoid} for $\K$ if it contains $\K$. We say that $\mE_\K = \mE(S)$ is a  a {\bf minimal volumetric ellipsoid}  if it is a containing ellipsoid defined by a set of minimal cardinality
\begin{eqnarray*}
\mE_\K \in  \min_{|S|} \left\{ \mE(S) \ \mbox{ such that } \ S \subseteq \K \subseteq  \mE(S)  \right\}. 
\end{eqnarray*}
We say that $|S|$ is the {\bf order}  of the minimal volumetric ellipsoid or of the convex set\footnote{We note that our definition allows for multi-sets, meaning that $S$ may contain the same vector twice} $\K$ denoted $\order(\K)$.
\end{definition}

We first make a few observations about the above notion of $\order$:
\begin{itemize}
\item The definition of $\order$ is linear invariant: for any invertible linear transformation $T:\reals^d \to \reals^d$ and $K \subseteq \reals^d$, $\order(\K) = \order(T\K)$. We defer the simple proof to preliminaries. 
\item The minimum volumetric ellipsoid is not unique in general; see example in figure \ref{fig:ellipsoid}. Further, it is in general different from John's ellipsoid. 
\item  For non-degenerate convex sets $\K$, their order is naturally lower bounded by $d$, and there are examples in which it is strictly more than $d$ (e.g., figure \ref{fig:ellipsoid}).
\end{itemize}

We next state our main structural result in convex geometry giving a universal bound on the order of sets. The result is proved in the next section.%We henceforth prove the following structural theorem in convex geometry:
\begin{theorem}[Main]\label{th:mainorder}
Any compact set $\K \subseteq \reals^d$ admits a volumetric ellipsoid of order $O(d \log d)$.  A convex set $\K$ admits a volumetric ellipsoid of order $O(d)$. 
Further, if $\K = \{v_1,\ldots,v_n\}$ is a discrete set, then a volumetric ellipsoid for $\K$ of order $O(d\log d)$ can be constructed in time $\poly(n,d)$. 
\end{theorem}

We also give a different algorithmic construction for the discrete case in Section~\ref{sec:algdiscrete}, which while being sub-optimal by logarithmic factors (gives an ellipsoid of order $O(d (\log d)(\log n))$ has the advantage of being much simpler and more efficient (no need for convex programming).

\begin{figure}[t] \label{fig:ellipsoid}
\centering
\begin{minipage}[b]{0.45\linewidth}
\includegraphics[width=0.6\textwidth]{spanner1}
\end{minipage}
\quad
\begin{minipage}[b]{0.45\linewidth}
\includegraphics[width=0.6\textwidth]{spanner2}
\end{minipage}
\caption{In the Euclidean plane,  the order of the volumetric ellipsoid of the equilateral triangle centred at the origin is at least 3. If the vertices are $[0,1], [-\frac{\sqrt{3}}{2}, -\frac{1}{2}], [\frac{\sqrt{3}}{2},-\frac{1}{2}]$, then the eigenpoles of the ellipsoid of the bottom two vertices are $[0.\frac{2}{3}],[2,0]$. The second figure shows one possibility for a volumetric ellipsoid by adding $\frac{3}{4}$ of the first vertex to the previous ellipsoid. This shows the ellipsoid to be non-unique, as it can be rotated 3-ways.}
\end{figure}

The above definition of volumetric ellipsoids is closely related to the kind of bases which allow for efficient exploration in our applications. To make this concrete and to simplify some terminology later on, we introduce the closely related notion of {\sl volumetric spanners}. Informally, these correspond to sets $S$ that spans all points in a given set with coefficients having Euclidean norm at most one, or formally:
\begin{definition}
Let $\K \subseteq \reals^d$ be a compact set and let $S \subseteq \K$. We say that $S$ is a {\bf volumetric spanner} for $\K$ if  $\mE(S) \supseteq \conv(K)$
\end{definition}

It is immediate from the definition of $\order$ and the above definition that a set $\K$ has a volumetric spanner of cardinality at most $t$ if and only if $\order(\K) \leq t$. 

\subsection{Existence and Construction of Volumetric Ellipsoids}
In this section we prove our main structural result, Theorem \ref{th:mainorder}. Let $\K \subseteq \reals^d$ be a compact set in Euclidean space. We are particularly interested in the discrete case in which  $\K = \{v_1,...,v_n\}$ (or equivalently $\K = conv\{v_1,\ldots,v_n\}$ is the convex hull of $n$ points), but the discussion below is general. For this section  assume that $\K$ is symmetric and thus contains the origin.

Recall from preceding sections that the John ellipsoid for set $\K$ is the minimum volume ellipsoid that contains $\K$. Henceforth, assume that we have linearly transformed $\K$ such that this largest contained ellipsoid is the unit ball.  Then Johns theorem says that
$$ \frac{1}{\sqrt{d} } \mathbb{B}_d \subseteq \K \subseteq  \mathbb{B}_d $$
Further, there are $m = O(d^2)$ points $S = \{u_i\}$ on the enclosing unit ball that intersect $\K$, and satisfy:
$$ \sum_{i \in S}  c_i u_i u_i^\top = I_d$$
This implies by taking trace that $\sum_i c_i = d$. 

\begin{lemma}
Let $T = \{w_1,...,w_q\} \subseteq S$ be a multi-set obtained by sampling $q$ i.i.d elements of $S$ according to the induced distribution given by
$$ w_i =  u_i \ \ w.p. \ \  \frac{c_i}{d} .$$
If $q > Cd\log(d)$ for some sufficiently large constant $C$, then with probability at least $2/3$, $T$ is a volumetric spanner for $\K$.
\end{lemma}
\begin{proof}
Notice that the $w_i$'s are i.i.d, and $\E[d w_iw_i^\top]=I_d$ and $\|\sqrt{d}w_i\|^2 \leq d$. It thus follows from Theorem~\ref{th:rudelson} that for $q \geq C d\log(d)$ for sufficiently large constant $C$, it holds with probability at least $2/3$ that
$$ \frac{d}{q} \cdot \sum_{w_j \in T} w_j w_j^\top  \succeq \frac{1}{2} I_d $$
Let $U$ be the matrix whose columns  are vectors of $T$, thus $U U^T = \sum_{j \in T} w_j w_j^\top \succeq \frac{q}{2d} I_d$. Since for positive definite matrices $A \succeq B$ implies $B^{-1} \succeq A^{-1}$ we get that for arbitrary $x \in \K$,
$$ \|x\|_{\mE(T)}^2 = x^\top (U U^\top)^{-1} x \leq \|x\|^2 \cdot \frac{2d}{q} < 1 $$
The last inequality holds since $q>2d$ and $x \in \K$ is contained in the unit sphere and hence $\|x\| \leq 1$.
\end{proof}

{\bf Note:} For discrete sets $\K$ the above existence proof can be made constructive using Theorem \ref{thm:johnconstructive}. 

As corollaries of the above lemma and Theorem~\ref{thm:johnconstructive} we get the following.
\begin{corollary}
  Any compact set $\K$ has a volumetric spanner of size $O(d \log d)$. In particular, for any compact set $\K$, $\order(\K) = O(d \log d)$. 
\end{corollary}

%A corollary of the above lemma and Theorem~\ref{thm:johnconstructive} is that an efficient construction for discrete sets.
\begin{corollary}
For a discrete set $\K \subseteq \R^d$ of size $n$, a volumetric spanner of size $O(d\log d)$, and hence a volumetric ellipsoid of order $O(d\log d)$, can be constructed in time $O( n^{3.5} + d n^3 ) $. 
\end{corollary}

The above corollary is the one of most interest for our applications. For the case where $\K$ is convex and symmetric we can reduce the size of the spanner to $O(d)$. The latter proof, however is not efficient and does not give a polynomial-time algorithm. 
\begin{theorem}
For convex and symmetric $\K$ there exist a volumetric spanner of size $O(d)$.
\end{theorem}
\begin{proof}
We use the following  non-constructive result stating that $\K$ can be approximated by a different symmetric body $H$ having at most $O(d)$ contact points with its MVEE.
\begin{lemma} [\cite{srivastava2012contact}]
For any convex body $\K \in \R^d$ there exist a body $H \subseteq \K \subseteq 3H$ such that $H$ has at most $O(d)$ contact points with its minimal volume enclosing ellipsoid (MVEE).
\footnote{For our  machine learning applications we can always assume w.l.o.g that $\K$ is symmetric, in which case the constant $3$ can be reduced to $1+\eps$ for every constant $\eps > 0$ independent of $d$.}.
\end{lemma}

Let $u_1,\ldots,u_m$ be the contact points of $3H$ with its MVEE, where $m=O(d)$. Assume w.l.o.g that the ellipsoid is the unit sphere. By the above property of John's ellipsoid there exist $c_1,\ldots,c_m$ where $c_i \geq 0$, $\sum c_i =d$ and $\sum_i c_i u_i u_i^T = I_d$. Consider the multi-set $T$ obtained by taking each vector $u_i/3$ an amount of $\lceil 3c_i \rceil$ times. For this set, $\sum_{v \in T} vv^\top \succeq I_d$. It follows that for any $x \in \K$, as $\|x\| \leq 1$, 
$$\|x\|_{\mE(T)}^2 = x^\top (\sum_{v \in T} vv^\top)^{-1} x \leq \|x\|^2 \leq 1$$
Also, since $H \subseteq \K$ and $u_i/3 \in H$ for all $i$, $T \subset \K$. It follows that $T$ is a volumetric spanner for $\K$. As for its size, it is an easy task to verify that $|T| \leq 3d+m = O(d)$. 
\end{proof}

\subsection{Approximate Volumetric Spanners}

Above we show a construction of volumetric spanners based on a minimal volume ellipsoid. In the case of general convex bodies, it is not known how to obtain such an ellipsoid, even approximately. For such difficult cases, we show that a softer version of the notion is sufficiently useful. In this section we present two different types of approximations for a volumetric spanner. In both types we require a small support for the spanner of roughly linear size. In the first case we allow the ellipsoid to contain the body only after being expanded by some product. In the second approximation we allow a small fraction of the points of the body to be outside the ellipsoid.

\begin{definition} \label{def:rho apx}
A $\rho$-\ratiospanner $S$ of $\K$ is a subset $S \subseteq \K$ such that for all $x \in \K$,
$$ \|x\|_{\mE(S)} \leq \rho $$
\end{definition}

One example for such an approximate spanner with $\rho = \sqrt{d}$ is a barycentric spanner (Definition~\ref{def:barycentric}). In fact, it is easy to see that a $C$-approximate barycentric spanner is a $C\sqrt{d}$-\ratiospanner. The following is immediate from Theorem~\ref{thm:barycentric}.
\begin{corollary}  \label{cor:ratio sqrt d}
Let $\K$ be a compact set in $\R^d$ that is not contained in any proper linear subspace. Given an oracle for optimizing linear functions over $\K$, for any $C>1$, it is possible to compute a $C\sqrt{d}$-\ratiospanner $S$ of $\K$ of cardinality $|S|=d$, using $O(d^2\log_C(d))$ calls to the optimization oracle.
\end{corollary}

For the second definition we describe a spanner that covers all but an $\eps$ fraction of the points in $\K$ and moreover, the measure of the points decays exponentially fast w.r.t their $\mE(S)$-norm. Since we are discussing a measure over the points of a body it makes sense not only to consider a uniform distribution over the body but an arbitrary one. As it turns out, the approximation can be efficiently obtained for any log-concave distribution.
\begin{definition} \label{def:volume apx}
Let $\K$ be a body in $\R^d$ and $p$ a distribution over it. Let $\eps>0$.
A $(p,\eps)$-\probspanner of $\K$ is a set $S \subseteq \K$ where for any $\theta>1$
$$ \Pr_{x \sim p}[\|x\|_{\mE(S)} \geq \theta ] \leq \eps^{-\theta}  $$
\end{definition}

Below we prove that such spanners can be efficiently obtained. Specifically we prove

\begin{theorem} \label{thm:vol apx}
Let $\K$ be a convex body in $\R^d$ and $p$ a log-concave distribution over it.
By sampling $O(d+\log^2(1/\eps))$ i.i.d.\ points from $p$ one obtains, w.p.\ at least $1-\min\left\{ \exp\left(-\sqrt{\log(1/\eps)}\right), \exp(-\sqrt{d})\right\}$, a $(p,\eps)$-\probspanner for $\K$. 
In particular, for general log-concave distribution $p$ over convex $\K$ it is possible to compute a $(p,\eps)$-\probspanner in time $\tilde{O}(d^5+d^3(d+\log^2(1/\eps))/\delta^4)$ with success probability of at least $1-\min\left\{ \exp\left(-\sqrt{\log(1/\eps)}\right), \exp(-\sqrt{d})\right\}-\delta$.
\end{theorem}
}

%%%%%%%%%%%%%%%%%%%%%%%%%%%%%%%%%%%%%%%%%%%%%%%%%%%%%%%%

\section{Bandit Linear Optimization} \label{sec:blo}

Recall the  problem of Bandit Linear Optimization (BLO): iteratively  at each time sequence $t$, the environment chooses a loss vector $L_t$ that is not revealed to the player. The player chooses a vector $x_t \in \K$ where $\K \subseteq \reals^d$ is convex, and once she commits to her choice, the loss $\ell_t = x_t^\top L_t$ is revealed. The objective is to minimize the loss and specifically, the regret, defined as the strategy's loss minus the loss of the best fixed strategy of choosing some $x^* \in \K$ for all $t$. We henceforth  assume
%, and this is w.l.o.g. via standard scaling techniques, 
that the loss vectors $L_t$'s are chosen from the polar of $\K$, meaning from $\{L: |L^\top x| \leq 1 \ \forall x \in \K\}$. In particular this means that the losses are bounded in absolute value, although a different choice of assumption (i.e. $\ell_\infty$ bound on the losses) can yield different regret bounds, see discussion in \cite{DBLP:journals/jmlr/AudibertBL11}.

The problem of BLO is a natural generalization of the classical Multi-Armed Bandit problem and extremely useful for efficiently modeling decision making under partial feedback for structured problems. As such the research literature is rich with algorithms and insights into this fundamental problem. For a brief historical survey please refer to earlier sections of this manuscript. In this section we focus on the first efficient and optimal-regret algorithm, and thus immediately jump to Algorithm \ref{alg:BLO}. We make the following assumptions over the decision set $\K$:
\begin{enumerate}
\item
The set $\K$ is equipped with a membership oracle. This implies via \cite{lovasz2007geometry} (Lemma~\ref{lem:log-conv-sample}) that there exists an efficient algorithm for sampling from a given log-concave distribution over $\K$. Via the discussion in previous sections, this also implies that we can construct approximate (both types of approximations, see Definitions~\ref{def:rho apx} and \ref{def:volume apx}) volumetric spanners efficiently over $\K$. 
\item
The losses are bounded in absolute values by 1. That is, the loss functions are always chosen (by an oblivious adversary)  from a convex set $\cal Z$ such that $\K$ is contained in its polar, i.e. $\forall L \in {\cal Z}, x \in \K$, $|L^\top x| \leq 1$.
This implies that the set $\K$ admits for any $\eps > 0$ an $\eps$-net, w.r.t the norm defined by $\cal Z$, whose size we denote by $|K|_\eps \leq (\eps/2)^{-d}$.

\end{enumerate}
For  Algorithm \ref{alg:BLO} we prove the following optimal regret bound:

\begin{algorithm}[h!]
\caption{ GeometricHedge with Volumetric Spanners Exploration } 
    \begin{algorithmic}[1]
    \STATE $\K$, parameters $\gamma, \eta$, horizon $T$.
		\STATE $p_1(x)$ uniform distribution over $\K$.
    \FOR { $t=1$ to $T$} 
						\STATE Let $S_t'$ be a $(p_t,\exp(-(4\sqrt{d}+ \log(2T))))$-\probspanner of $\K$. \label{alg:vspannerstep}
						\STATE Let $S_t''$ be a $2\sqrt{d}$-\ratiospanner of $\K$
						\STATE Set $S_t$ as the union of $S_t',S_t''$.
            \STATE $\hat{p}_t(x) = (1-\gamma)p_t(x) + \frac{\gamma}{|S_t|} 1_{x \in S_t} $
						\STATE sample $x_t$ according to $\hat{p}_t$
						\STATE observe loss $\ell_t \eqdef L_t^\top x_t $
						\STATE Let $C_t \eqdef \E_{x \sim \hat{p}_t} [x x^\top]$
						\STATE $\hat{L}_t \eqdef \ell_t C_t^{-1}x_t$
						\STATE $p_{t+1}(x) \propto p_t(x) e^{-\eta \hat{L}_t^\top x}$
    \ENDFOR
         
    \end{algorithmic}
   \label{alg:BLO}
\end{algorithm}

\begin{theorem} \label{thm:geo hedge}
Under the assumptions stated above, and let $s=\max_t |S_t|$, $ \eta = \sqrt{\frac{\log |\K|_{1/T}}{dT}}$ and let $\gamma = s\sqrt{\frac{\log(|\K|_{1/T})}{dT}} $. Algorithm \ref{alg:BLO} given parameters $\gamma,\eta$ suffers a regret bounded by 
$$ O\left(  (s+d)\sqrt{\frac{T \log |\K|_{1/T} }{d}} \right)$$
\end{theorem}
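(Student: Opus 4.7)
The plan is to run the standard GeometricHedge potential argument, isolating the distinct roles played by the exp-spanner $S_t'$ and the ratio-spanner $S_t''$. First I would verify that $\hat L_t$ is an unbiased estimator of $L_t$ given the history: $\E_{x_t\sim\hat p_t}[\hat L_t]=C_t^{-1}\E[x_tx_t^\top]L_t=L_t$. Next, I would introduce a fixed $(1/T)$-net of $\K$ of cardinality $|\K|_{1/T}$ as a proof device; the regret against any comparator in $\K$ differs by at most $O(1)$ in total from the regret against the nearest net point, since the losses are $1$-Lipschitz in the norm dual to $\mathcal{Z}$. The standard exponential-weights potential argument for $p_t$ restricted to the net, combined with the estimate $e^{-y}\le 1-y+y^2$ valid for $y\ge-1$ and $\log(1+z)\le z$, yields, conditionally on this Taylor condition,
\begin{equation*}
\sum_{t=1}^T \E_{x\sim p_t}[\hat L_t^\top x] - \sum_{t=1}^T \hat L_t^\top x^* \;\le\; \frac{\log|\K|_{1/T}}{\eta}+\eta\sum_{t=1}^T\E_{x\sim p_t}[(\hat L_t^\top x)^2].
\end{equation*}
Taking expectation over the algorithm's randomness and applying unbiasedness turns the left-hand side into the ``exploitation'' regret $\sum_t L_t^\top(\E_{p_t}[x]-x^*)$.

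Next I would bound the two quantitative ingredients. For the variance, observe $C_t=(1-\gamma)\Sigma_t+(\gamma/|S_t|)\sum_{v\in S_t}vv^\top\succeq(1-\gamma)\Sigma_t$ where $\Sigma_t=\E_{x\sim p_t}[xx^\top]$, so
\begin{equation*}
\E_{x\sim p_t}\bigl[x^\top C_t^{-1}x\bigr]\le\tfrac{1}{1-\gamma}\,\mathrm{tr}(\Sigma_t^{-1}\Sigma_t)=\tfrac{d}{1-\gamma}=O(d),
\end{equation*}
and since $\E_{x_t}[(\hat L_t^\top x)^2]=\ell_t^2\, x^\top C_t^{-1}x\le x^\top C_t^{-1}x$, the variance term contributes $O(d\eta T)$. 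The cost of sampling from $\hat p_t$ rather than $p_t$ is at most $2\gamma T$, since losses lie in $[-1,1]$. Combining, the expected regret is at most $2\gamma T+\log|\K|_{1/T}/\eta+O(d\eta T)+O(1)$, and substituting $\eta=\sqrt{\log|\K|_{1/T}/(dT)}$ and $\gamma=s\sqrt{\log|\K|_{1/T}/(dT)}$ balances the terms to the claimed $O\bigl((s+d)\sqrt{T\log|\K|_{1/T}/d}\bigr)$.

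The main obstacle is verifying the pointwise Taylor condition $\eta|\hat L_t^\top x|\le 1$ used to open up the log-potential. By Cauchy--Schwarz in the $C_t^{-1}$ inner product, $|\hat L_t^\top x|\le\sqrt{x^\top C_t^{-1}x}\sqrt{x_t^\top C_t^{-1}x_t}$. The ratio-spanner provides the uniform deterministic estimate $y^\top C_t^{-1}y\le (|S_t|/\gamma)\|y\|_{\mE(S_t'')}^2\le 4d|S_t|/\gamma$ for every $y\in\K$, which handles the (randomly played) factor $x_t$ but on its own gives only $\eta|\hat L_t^\top x|=O(d)$, too weak by a factor of $d$. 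The exp-spanner sharpens the $x$-factor to $x^\top C_t^{-1}x \le (|S_t|/\gamma)\|x\|_{\mE(S_t')}^2$, and the calibrated choice $\eps_t=\exp(-(4\sqrt d+\log(2T)))$ in step~\ref{alg:vspannerstep} makes the tail $\Pr_{x\sim p_t}[\|x\|_{\mE(S_t')}\ge\theta]\le\eps_t^\theta$ so sharp that the event on which Taylor fails can be absorbed. Concretely, I would split $\E_{p_t}[e^{-\eta\hat L_t^\top x}]$ over $\{\|x\|_{\mE(S_t')}\le\theta_0\}$ and its complement for a threshold $\theta_0=O(1)$, apply Taylor on the first piece and the exp-spanner's exponential tail on the second, and check that the calibration forces the bad-event contribution to $\log\E_{p_t}[e^{-\eta\hat L_t^\top x}]$ to be $O(1/T)$ per round, hence $O(1)$ in total regret.
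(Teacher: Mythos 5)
Your overall architecture is right, and the easy ingredients match the paper's: unbiasedness of $\hat L_t$, the variance bound $\E_{p_t}[x^\top C_t^{-1}x]\le d/(1-\gamma)$, the $2\gamma T$ cost of mixing in the exploration distribution, the $1/T$-net device, and the final balancing of $\eta,\gamma$. But the plan for verifying the Taylor condition has a genuine gap, and it is exactly the step where the two spanners earn their keep.

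You propose to treat $x_t$ deterministically via the ratio-spanner ($\|x_t\|_{\mE(S_t)}\le 2\sqrt d$ always) and $x$ probabilistically via the exp-spanner, then Taylor-expand on the event $\{\|x\|_{\mE(S_t')}\le\theta_0\}$ with $\theta_0=O(1)$. This does not close: on that event you only get $\eta|\hat L_t^\top x|\le (\eta|S_t|/\gamma)\cdot\theta_0\cdot 2\sqrt d=O(\sqrt d)$, so $e^{-\eta\hat L_t^\top x}\le 1-\eta\hat L_t^\top x+\eta^2(\hat L_t^\top x)^2$ simply does not hold there. Shrinking $\theta_0$ to $1/(2\sqrt d)$ would repair the Taylor condition but kills the tail estimate, since the exp-spanner guarantee is only stated for $\theta>1$, and even ignoring that the residual probability $\eps^{1/(2\sqrt d)}\approx e^{-2}$ is a constant, while on the bad event the integrand can be as large as $e^{4d}$; a crude binary split thus leaves a term of order $e^{4d-4\sqrt d}/T$, which is enormous. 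The deterministic $2\sqrt d$ bound on the $x_t$ factor is too lossy to be useful here.

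The paper avoids this by treating the \emph{product} $\|x\|_{\mE(S_t)}\|x_t\|_{\mE(S_t)}$ as the object to tail-bound, and it uses the exp-spanner for \emph{both} $x\sim p_t$ and the $p_t$-component of $x_t\sim\hat p_t$ (the exploration component is harmless since $\|y\|_{\mE(S_t)}\le 1$ for $y\in S_t$). That gives $\Pr[\|x\|\|x_t\|\ge\theta]\le 2\Pr[\|x\|_{\mE(S_t)}\ge\sqrt\theta]\le 2\exp(-\sqrt\theta(4\sqrt d+\log 2T))$. The ratio-spanner's real job is not to bound $x_t$ per se but to cap the range of $\theta$ at $4d$ (since $\|x\|_{\mE(S_t)}\le 2\sqrt d$ for all $x\in\K$); on that range $\sqrt{4\theta d}\ge\theta$, which upgrades the $\sqrt\theta$-type decay of the exp-spanner into a linear $e^{-2\theta}/T$ decay of $\Pr[-\eta\hat L_t^\top x>\theta]$. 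Only after this conversion does the layer-cake sum $\sum_k e^{k+1}(1-F(k))$ defining $\E[e^{Y}\mathbf 1_{Y>1}]$ converge to $O(1/T)$. So the fix is: run the same split jointly in $\|x\|_{\mE(S_t)}$ and $\|x_t\|_{\mE(S_t)}$ (equivalently, in their product), use the exp-spanner for both, and use the ratio-spanner only as the $\theta\le 4d$ backstop that linearizes the tail exponent.
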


We note that while the size $\log(|\K|_{1/T})$ can be bounded by  $d \log(T)$, in certain scenarios such as s-t paths in graphs it is possible to obtain sharper upper bounds that immediately imply better regret  via  Theorem \ref{thm:geo hedge}. 

\begin{corollary}
There exist an efficient algorithm for BLO for any convex set $\K$ with regret of 
$$ O\left(  \sqrt{dT \log |\K|_{1/T}  } \right) = O\left(  d\sqrt{T \log(T)  } \right) $$
\end{corollary}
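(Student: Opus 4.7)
The plan is to follow the standard continuous EWA analysis, decomposing the regret into three pieces: (i) the exploration cost from mixing in the spanner distribution with weight $\gamma$, (ii) a discretization cost from replacing the comparator set by a $(1/T)$-net of $\K$ of size $|\K|_{1/T}$, and (iii) the EWA regret on the net against the estimated losses $\hat{L}_t$. Since $C_t=\E_{x_t\sim \hat p_t}[x_t x_t^\top]$, a direct computation gives $\E[\hat L_t]=C_t^{-1}\E[x_t x_t^\top]L_t=L_t$, so it suffices to control the regret with respect to $\hat L_t$ and transfer back by linearity.

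For piece (iii) I would invoke the standard telescoping of log-partition functions: provided $\eta \hat L_t^\top x\geq -1$ on the support of $p_t$, one has
\[
\log\frac{Z_{t+1}}{Z_t}=\log\E_{x\sim p_t}[\exp(-\eta \hat L_t^\top x)]\leq -\eta\,\E_{p_t}[\hat L_t^\top x]+\eta^2\,\E_{p_t}[(\hat L_t^\top x)^2],
\]
which after summing and rearranging yields EWA regret at most $\log|\K|_{1/T}/\eta+\eta\sum_t\E_{p_t,\hat p_t}[(\hat L_t^\top x)^2]$. Cauchy--Schwarz and $|\ell_t|\leq1$ give $\E_{x_t}[(\hat L_t^\top x)^2]\leq x^\top C_t^{-1}x$, and since $C_t\succeq(1-\gamma)\E_{p_t}[xx^\top]$ one has $\E_{p_t}[x^\top C_t^{-1}x]\leq d/(1-\gamma)\leq 2d$, so each variance term contributes $O(d)$.

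The main obstacle is verifying the stability condition $\eta|\hat L_t^\top x|\leq 1$, since by Cauchy--Schwarz $|\hat L_t^\top x|\leq\sqrt{x^\top C_t^{-1}x}\cdot\sqrt{x_t^\top C_t^{-1}x_t}$, and here the two spanners play complementary roles. The ratio-spanner $S_t''$ yields the worst-case bound $\|x\|_{\mE(S_t'')}\leq 2\sqrt{d}$ for every $x\in\K$; combined with $C_t\succeq(\gamma/|S_t|)\sum_{v\in S_t''}vv^\top$ this gives the deterministic but coarse estimate $|\hat L_t^\top x|\leq 4ds/\gamma$, which is too large to plug into the EWA inequality directly. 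The exp-spanner $S_t'$ supplies the sharper probabilistic estimate $\Pr_{x\sim p_t}[\|x\|_{\mE(S_t')}\geq\theta]\leq \exp(-\theta(4\sqrt{d}+\log 2T))$, which (using $C_t\succeq(\gamma/|S_t|)\sum_{v\in S_t'}vv^\top$) controls $\sqrt{x^\top C_t^{-1}x}$ by $O(\sqrt{s/\gamma})$ outside an event of probability at most $O(1/T)$. I would therefore split the EWA step into a typical regime, where the exp-spanner bound lets one verify $\eta|\hat L_t^\top x|\leq 1$ for the chosen parameters, and a bad event whose total contribution to the regret is $O(1)$ thanks to the exponential tail of the spanner combined with the $O(ds/\gamma)$ worst-case bound.

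Finally, combining the EWA regret $\log|\K|_{1/T}/\eta+2\eta dT$, the exploration cost $2\gamma T$, and the $O(1)$ discretization error at scale $1/T$, and optimizing $\eta=\sqrt{\log|\K|_{1/T}/(dT)}$ and $\gamma=s\sqrt{\log|\K|_{1/T}/(dT)}$, yields the claimed $O((s+d)\sqrt{T\log|\K|_{1/T}/d})$ regret bound.
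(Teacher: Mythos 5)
Your argument is a sound re-derivation of Theorem~\ref{thm:geo hedge} and follows essentially the same route as the paper's proof of that theorem (unbiasedness of $\hat L_t$, variance control via $C_t^{-1}$, the two-spanner scheme for the stability/tail condition, parameter tuning). However, it stops at the bound $O\bigl((s+d)\sqrt{T\log|\K|_{1/T}/d}\bigr)$, which is exactly Theorem~\ref{thm:geo hedge}, not the corollary. The corollary asserts the strictly sharper $O\bigl(\sqrt{dT\log|\K|_{1/T}}\bigr)$, i.e.\ the case $s=O(d)$, and since $|S_t'| = O(d+\log^2(1/\eps))$ with $\log(1/\eps) = 4\sqrt d + \log 2T$, the raw size $s$ can be as large as $\Theta(\log^2 T)$, which exceeds $d$ for large $T$. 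So your proposal as written does not yield the stated regret.

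The missing ingredient, which is the entire content of the paper's proof, is the observation that the two spanners do not cost equally in exploration. The exp-volumetric spanner $S_t'$ is obtained by drawing i.i.d.\ samples from $p_t$ itself (Theorem~\ref{thm:vol apx}), so the uniform exploration over $S_t'$ is statistically indistinguishable from exploitation and does not need a separate share of the mixture weight $\gamma$; only the ratio-spanner $S_t''$, of size exactly $d$ (Corollary~\ref{cor:ratio sqrt d}), requires genuine $\gamma$-mass. Consequently one can set $s=d$ in the analysis, collapsing $(s+d)\sqrt{T\log|\K|_{1/T}/d}$ to $O(\sqrt{dT\log|\K|_{1/T}})$. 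Your proposal also omits the ``efficient'' part of the claim: the paper justifies it by noting that sampling from the log-concave $p_t$ is efficient via Lemma~\ref{lem:log-conv-sample} and that the ratio-spanner reduces to a barycentric-spanner computation using a linear optimization oracle derived from the membership oracle.
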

\begin{proof}
The spanner in step \ref{alg:vspannerstep} of the algorithm does not have to be explicitly constructed. According to  Theorem~\ref{thm:vol apx}, to obtain such as spanner it suffices to  sample sufficiently many points from the distribution $p_t$, hence this portion of the exploration strategy is identical to the exploitation strategy. 

According to Corollary~\ref{cor:ratio sqrt d}, a $2\sqrt{d}$-\ratiospanner of size $d$ can be efficiently constructed, given a linear optimization oracle which in turn can be efficiently implemented by the membership oracle for $\K$. Hence, it follows that for the purpose of the analysis, $s=d$ and the bound follows.
\end{proof}

To prove the theorem we follow the general methodology used in analyzing the performance of the geometric hedge algorithm. The major deviation from standard technique is the following sub-exponential tail bound, which we use to replace the the standard  absolute bound for $|\hat{L}_t x|$. After giving its proof and a few auxiliary lemmas, we give the proof of the main theorem.

\begin{lemma} \label{lem:Ltx bounded}
Let $x \sim p_t$, $x_t \sim \hat{p}_t$ and let $\hat{L}_t$ be defined according to $x_t$. It holds, for any $\theta > 1$ that
$$ \Pr\left[ |\hat{L}_t^\top x| > \frac{\theta s}{\gamma} \right] \leq \exp(-2\theta)/T $$ 
\end{lemma}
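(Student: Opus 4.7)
The plan is a Cauchy--Schwarz style reduction to the $\mE(S_t)$-semi-norm. First I bound $|\hat L_t^\top x|$ by $\|x\|_{C_t^{-1}}\cdot\|x_t\|_{C_t^{-1}}$, using $|\ell_t|=|L_t^\top x_t|\le 1$ since $L_t$ is in the polar of $\K$. Then, since $\hat p_t$ puts mass at least $\gamma/|S_t|\ge \gamma/s$ uniformly on the points of $S_t$, we have $C_t\succeq(\gamma/s)\sum_{v\in S_t}vv^\top$ and hence $C_t^{-1}\preceq(s/\gamma)\bigl(\sum_{v\in S_t}vv^\top\bigr)^{-1}$, which translates to $\|y\|_{C_t^{-1}}\le\sqrt{s/\gamma}\,\|y\|_{\mE(S_t)}$ for every $y$.

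Next, because $S_t\supseteq S_t'\cup S_t''$, the matrix $\sum_{v\in S_t}vv^\top$ dominates the corresponding sums for $S_t'$ and $S_t''$ separately, so $\|y\|_{\mE(S_t)}\le\min\bigl(\|y\|_{\mE(S_t')},\|y\|_{\mE(S_t'')}\bigr)$ for every $y$. The $2\sqrt d$-\ratiospanner property of $S_t''$ yields the deterministic bound $\|x_t\|_{\mE(S_t)}\le 2\sqrt d$ (since $x_t\in\K$), so combining,
\[
|\hat L_t^\top x|\;\le\;\frac{2s\sqrt d}{\gamma}\,\|x\|_{\mE(S_t')}.
\]
Thus $\{|\hat L_t^\top x|>\theta s/\gamma\}\subseteq\{\|x\|_{\mE(S_t')}>\theta/(2\sqrt d)\}$, and for $\theta\ge 2\sqrt d$ the $(p_t,\varepsilon)$-\probspanner tail with $\varepsilon=\exp(-(4\sqrt d+\log(2T)))$ gives
\[
\Pr\bigl[\|x\|_{\mE(S_t')}>\theta/(2\sqrt d)\bigr]\le \varepsilon^{\theta/(2\sqrt d)}=\exp\!\Bigl(-2\theta-\tfrac{\theta\log(2T)}{2\sqrt d}\Bigr)\le\frac{\exp(-2\theta)}{T}.
\]

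The main obstacle is the small-$\theta$ regime $1<\theta<2\sqrt d$, where the threshold $\theta/(2\sqrt d)<1$ falls outside the \probspanner tail (which is stated only for thresholds $\ge 1$). To handle it I would refine the bound on $\|x_t\|_{\mE(S_t)}$ by splitting on how $x_t$ is drawn under $\hat p_t$: with probability $1-\gamma$, $x_t\sim p_t$ and the \probspanner tail applies to $x_t$ as well, so applying Cauchy--Schwarz in the form $|\hat L_t^\top x|\le (s/\gamma)\|x\|_{\mE(S_t')}\|x_t\|_{\mE(S_t')}$ and bounding each factor by $\sqrt\theta$ via a union bound yields a tail of order $\varepsilon^{\sqrt\theta}$; with the residual probability $\gamma$, $x_t\in S_t$ and the $\gamma$ prefactor already absorbs the worst-case estimate. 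A short calibration shows both sub-cases are at most $\exp(-2\theta)/T$ for $1<\theta<2\sqrt d$, completing the proof.
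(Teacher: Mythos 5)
The first part of your argument is correct and matches the paper's Lemma~\ref{lem:Ltx_mES}: Cauchy--Schwarz in the $C_t^{-1}$ norm, followed by $C_t \succeq (\gamma/s)\sum_{v\in S_t} vv^\top$, gives $|\hat L_t^\top x|\le(s/\gamma)\|x\|_{\mE(S_t)}\|x_t\|_{\mE(S_t)}$. The gap is in what you do next. Your ``main route'' sacrifices too much by bounding $\|x_t\|_{\mE(S_t)}\le 2\sqrt d$ deterministically, which forces the threshold on $\|x\|_{\mE(S_t')}$ down to $\theta/(2\sqrt d)$, outside the \probspanner tail's valid range for all $\theta<2\sqrt d$. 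You correctly notice this and sketch a fix, but that fix (union bound on the product into two $\sqrt\theta$ thresholds, prob-spanner on both) is in fact the paper's \emph{entire} proof and works uniformly for all $\theta>1$, so the case split is a detour; the ``short calibration'' you defer is the actual content of the lemma, not a routine check. It requires both $\sqrt\theta>1$ (to absorb $\log(2T)$ into a $1/T$ factor) and, crucially, $\theta\le 4d$ — which follows from the $2\sqrt d$-\ratiospanner property, since otherwise the event is empty — to deduce $\exp(-4\sqrt{\theta d})\le\exp(-2\theta)$. Your proposal never makes that last observation, and without it $\varepsilon^{\sqrt\theta}\le\exp(-2\theta)/T$ does not follow.

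Your treatment of the mixture component of $\hat p_t$ is also off. The statement that ``the $\gamma$ prefactor already absorbs the worst-case estimate'' is not correct: $\gamma$ is a fixed parameter roughly of order $s/\sqrt{dT}$, nowhere near small enough to absorb a constant. The right observation (which the paper makes) is that for $y\in S_t$ one has $\|y\|_{\mE(S_t)}\le 1<\sqrt\theta$, so the exploration component $q_t$ contributes \emph{zero} probability to the event $\{\|x_t\|_{\mE(S_t)}\ge\sqrt\theta\}$, and hence $\Pr_{x_t\sim\hat p_t}[\|x_t\|_{\mE(S_t)}\ge\sqrt\theta]\le\Pr_{x\sim p_t}[\|x\|_{\mE(S_t)}\ge\sqrt\theta]$. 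Replacing your $\gamma$-absorption remark with this observation, carrying out the $\sqrt\theta$ union bound uniformly for all $\theta>1$, and then inserting the $\theta\le 4d$ cap in the final step would turn your sketch into a complete proof that essentially coincides with the paper's.
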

\begin{proof}
\begin{eqnarray*}
\Pr\left[ |\hat{L}_t^\top x| > \frac{\theta s}{\gamma} \right] & \leq \Pr\left[ \|x\|_{\mE(S_t)} \cdot \|x_t\|_{\mE(S_t)} \geq \theta \right] & \mbox{ Lemma \ref{lem:Ltx_mES} } \\
& \leq \Pr\left[ \|x\|_{\mE(S_t)} \geq \sqrt{\theta} \ \ \bigvee \ \  \|x_t\|_{\mE(S_t)} \geq \sqrt{\theta} \right] &   \\
& \leq \Pr\left[ \|x\|_{\mE(S_t)} \geq \sqrt{\theta} \right] + \Pr\left[ \|x_t\|_{\mE(S_t)} \geq \sqrt{\theta} \right] &   \\
& \leq 2\Pr\left[ \|x\|_{\mE(S_t)} \geq \sqrt{\theta} \right] & 
\end{eqnarray*}
To justify the last inequality notice that $x \sim p_t$ and $x_t \sim \hat{p}_t$ where $\hat{p}_t$ is a convex sum of $p_t$ and a distribution $q_t$ for which $\Pr_{y \sim q_t}\left[ \|y\|_{\mE(S_t)} \geq \sqrt{\theta} >1 \right]=0$. Before we continue recall that we can assume that $\sqrt{\theta} \leq 2\sqrt{d}$, since $S_t''$ is a $2\sqrt{d}$-\ratiospanner. 
\begin{eqnarray*}
\Pr\left[ |\hat{L}_t^\top x| > \frac{\theta s}{\gamma} \right] & \leq 2\Pr\left[ \|x\|_{\mE(S_t)} \geq \sqrt{\theta} \right] & \\
& \leq 2\exp(- \sqrt{\theta} ( 4\sqrt{d} + \log 2T)  )  & \mbox { property of \probspanner} \\
& \leq \frac{1}{T} \exp(- 2 \sqrt{\theta \cdot 4d}   ) \\
& \leq \frac{1}{T} \exp(- 2\theta  ) & \mbox{ since $\theta \leq 4d$}
\end{eqnarray*}
\end{proof}

\begin{lemma} \label{lem:Ltx_mES}
For all $x \in \K$ it holds that $|\hat{L}_t^\top x| \leq \frac{|S_t|\|x\|_{\mE(S_t)}\|x_t\|_{\mE(S_t)}}{\gamma}$.
\end{lemma}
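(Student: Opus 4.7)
The plan is to start from the definitions $\hat{L}_t = \ell_t C_t^{-1} x_t$ and $C_t = \E_{x \sim \hat{p}_t}[xx^\top]$, and exploit the explicit exploration component of $\hat{p}_t$ to lower-bound $C_t$ in the PSD order. First I would write
\[
C_t = (1-\gamma)\,\E_{x \sim p_t}[xx^\top] + \frac{\gamma}{|S_t|}\sum_{v \in S_t} vv^\top \;\succeq\; \frac{\gamma}{|S_t|}\,VV^\top,
\]
where $V$ is the matrix whose columns are the elements of $S_t$, using that the first term is PSD. Inverting (and using that PSD ordering reverses under inversion for positive definite matrices) gives
\[
C_t^{-1} \preceq \frac{|S_t|}{\gamma}\,(VV^\top)^{-1}.
\]

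Next I would use that the losses are bounded in absolute value by the polarity assumption, so $|\ell_t| = |L_t^\top x_t| \leq 1$, and expand
\[
|\hat{L}_t^\top x| = |\ell_t|\cdot|x^\top C_t^{-1} x_t| \leq |x^\top C_t^{-1} x_t|.
\]
Applying Cauchy–Schwarz with respect to the inner product induced by the PSD matrix $C_t^{-1}$ yields
\[
|x^\top C_t^{-1} x_t| \leq \sqrt{x^\top C_t^{-1} x}\cdot\sqrt{x_t^\top C_t^{-1} x_t}.
\]
Finally, plugging in the PSD bound on $C_t^{-1}$ and recalling the definition $\|y\|_{\mE(S_t)}^2 = y^\top (VV^\top)^{-1} y$ gives $x^\top C_t^{-1} x \leq (|S_t|/\gamma)\,\|x\|_{\mE(S_t)}^2$, and similarly for $x_t$. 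Multiplying the two bounds yields the claim.

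There is no real obstacle here: the only subtle point is making sure the PSD lower bound on $C_t$ is sharp enough, which it is since exploration puts mass exactly $\gamma/|S_t|$ on each spanner element. One minor issue is invertibility of $C_t$: if $\K$ does not span $\reals^d$ one should restrict to the affine hull of $\K$, or equivalently interpret $C_t^{-1}$ as the Moore–Penrose pseudo-inverse, matching the definition of $\|\cdot\|_{\mE(S_t)}$ already given in the preliminaries. Either way, the inequality $C_t^{-1}\preceq (|S_t|/\gamma)(VV^\top)^{-1}$ holds on the relevant subspace and the argument goes through unchanged.
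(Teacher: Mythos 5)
Your proof is correct and follows essentially the same route as the paper: lower-bound $C_t$ in the PSD order by the exploration component $\frac{\gamma}{|S_t|}V_tV_t^\top$, invert to get $C_t^{-1} \preceq \frac{|S_t|}{\gamma}(V_tV_t^\top)^{-1}$, apply Cauchy--Schwarz to $x^\top C_t^{-1}x_t$, and use $|\ell_t|\leq 1$. The only cosmetic difference is that the paper factors through $C_t^{-1/2}$ to invoke Cauchy--Schwarz rather than stating it directly for the $C_t^{-1}$-induced inner product.
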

\begin{proof}
Let $x \in \K$.  Denote by $V_t$ the matrix whose columns are the elements of $S_t$ and recall that $\|y\|_{\mE(S_t)}^2 = y^\top (V_t V_t^\top)^{-1} y$.
Since $C_t \eqdef \E_{x \sim \hat{p}_t} [x x^\top]$, it holds that 
$$C_t \succeq  \frac{\gamma}{|S_t|}\sum_{v \in S_t} vv^\top = \frac{\gamma}{|S_t|} V_tV_t^\top$$
since both matrices are full rank, it holds that 
% \cite{horn1990matrix} Corollary 7.7.4(a))
$$C_t^{-1} \preceq \frac{|S_t|}{\gamma}(V_tV_t^\top)^{-1}$$
Notice that due to the Cauchy-Schwartz inequality,
$$ |x^\top \hat{L}_t| = |\ell_t| \cdot |x^\top C_t^{-1} x_t| \leq |\ell_t| \cdot\|x^\top C_t^{-1/2}\| \cdot \|C_t^{-1/2} x_t\|$$
The matrix $C_t^{-1/2}$ is defined as $C_t$ is positive definite. Now, 
$$\|x^\top C_t^{-1/2} \|^2 = x^\top C_t^{-1} x \leq x^\top \frac{|S_t|}{\gamma} (V_tV_t^\top)^{-1} x = \frac{|S_t|}{\gamma} \|x\|_{\mE(S_t)}^2 $$
Since the analog can be said for $\|C_t^{-1/2} x_t\|$ (as $x_t \in \K$), it follows that 
$$|x^\top \hat{L}_t| \leq |\ell_t| \frac{|S_t| \|x\|_{\mE(S_t)} \|x_t\|_{\mE(S_t)} }{\gamma} \leq \frac{|S_t| \|x\|_{\mE(S_t)} \|x_t\|_{\mE(S_t)} }{\gamma}$$
The last inequality is since we assume the rewards are in $[-1,1]$.
\end{proof}

\paragraph{Implementation for general convex bodies. } In the case where the set $\K$ is a general convex body, the  analysis must include the fact that we can only approximately sample a log-concave distribution over $\K$. As the main focus of our work is to prove a polynomial solution we present only a simple analysis yielding a running time polynomial in the dimension $d$ and horizon $T$. It is likely that a more thorough analysis can substantially reduce the running time.
\begin{corollary}
In the general case where an approximate sampling is required, algorithm~\ref{alg:BLO} can be implemented with a running time of $\tilde{O}(d^5 + d^3 T^6)$ per iteration.
\end{corollary}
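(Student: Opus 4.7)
The plan is to walk through one iteration of Algorithm~\ref{alg:BLO}, classify each step as either cheap linear algebra or expensive log-concave sampling, and use Lemma~\ref{lem:log-conv-sample} to bound the sampling cost after fixing a per-sample total-variation tolerance $\delta$.

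First I would identify which operations per iteration actually depend on the approximate sampler. The ratiospanner $S''_t$ is constructed via the linear-optimization oracle (Corollary~\ref{cor:ratio sqrt d}), which is implementable from the membership oracle in time $\poly(d)$ and does not invoke the log-concave sampler. The probspanner $S'_t$ is, by Theorem~\ref{thm:vol apx}, furnished implicitly by $\tilde{O}(d)$ i.i.d.\ samples from $p_t$ (since $\log(1/\eps)=4\sqrt d+\log(2T)$). Drawing the action $x_t \sim \hat{p}_t$ requires one sample, and estimating the covariance $C_t = \E_{x \sim \hat{p}_t}[xx^\top]$ to operator-norm precision $1/\poly(T)$---which suffices to preserve the regret bound of Theorem~\ref{thm:geo hedge}---takes $\poly(d,T)$ samples by Theorem~\ref{th:rudelson}. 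Call the total per-iteration sample budget $N = \poly(d,T)$.

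Next I set the per-sample total-variation tolerance to $\delta = 1/T^{3/2}$. Since we draw $N$ samples per iteration over $T$ iterations, a union bound yields a cumulative TV gap between the approximate and the exact execution of the algorithm of at most $NT\delta = o(1)$, so by a standard coupling argument every concentration statement used in the proof of Theorem~\ref{thm:geo hedge}---in particular Theorem~\ref{thm:vol apx}, Theorem~\ref{th:rudelson} and Lemma~\ref{lem:Ltx bounded}---continues to hold with high probability in the approximate-sample world. With this choice the amortized per-sample cost in Lemma~\ref{lem:log-conv-sample} is $\tilde{O}(d^3/\delta^4) = \tilde{O}(d^3 T^6)$. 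Summing: pre-processing the sampler for $p_t$ contributes $\tilde{O}(d^5)$ per iteration (the distribution $p_t$ changes each round, so pre-processing cannot be amortized across iterations); producing and consuming the $N$ samples contributes $\tilde{O}(d^3 T^6)$ after absorbing $N = \poly(d,T)$ into $\tilde{O}(\cdot)$; the auxiliary linear algebra---inverting $\tilde{C}_t$, computing $\mE(S_t)$-norms and building the ratiospanner---is dominated by $\tilde{O}(d^5)$. The total per-iteration cost is therefore $\tilde{O}(d^5 + d^3 T^6)$.

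The main obstacle I anticipate is the TV-error bookkeeping: one must verify that the cumulative approximation error from the sampler does not invalidate any of the concentration bounds used in the exact-sample analysis. The cleanest route is a coupling under which the approximate and exact samples agree outside an event of probability at most $NT\delta$, so that the proof of Theorem~\ref{thm:geo hedge} applies verbatim on the coupling event. One must also check that replacing $C_t$ by its empirical counterpart $\tilde{C}_t$ perturbs $\hat{L}_t = \ell_t\, \tilde{C}_t^{-1} x_t$ only negligibly; this is a routine perturbation argument once $\|\tilde{C}_t - C_t\|$ has been controlled at the $1/\poly(T)$ scale via Theorem~\ref{th:rudelson}.
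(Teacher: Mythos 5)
Your high-level plan agrees with the paper's in its final accounting: both fix $\delta \approx T^{-3/2}$ and apply Lemma~\ref{lem:log-conv-sample} to get amortized sample cost $\tilde{O}(d^3/\delta^4) = \tilde{O}(d^3 T^6)$ plus $\tilde{O}(d^5)$ pre-processing per round. But the error-propagation argument differs, and yours has a gap.

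You argue by coupling: choose $\delta$ so small that a union bound over all $NT$ samples keeps the coupling-failure probability $o(1)$, so that all concentration statements (Theorem~\ref{thm:vol apx}, Theorem~\ref{th:rudelson}, Lemma~\ref{lem:Ltx bounded}) hold verbatim in the approximate world. That requires $NT\delta = o(1)$. Yet you take $N = \poly(d,T)$ (because you also want to estimate $C_t$ to $1/\poly(T)$ precision by sampling), and with $\delta = T^{-3/2}$ the product $NT\delta = \poly(d,T)\cdot T^{-1/2}$ need not be $o(1)$ unless the polynomial hidden in $N$ is below $T^{1/2}$, which you never establish. The paper sidesteps this entirely with a round-by-round telescoping (hybrid) argument directly on \emph{expected regret}: swapping $p_i$ for $p_i'$ in a single round perturbs the remaining execution by at most $\delta$ in total variation, and since regret is a bounded functional (at most $T$), the expected regret changes by at most $\delta\cdot T$ per swap, hence $\delta T^2$ in total. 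Choosing $\delta = \Delta/T^2 \lesssim T^{-3/2}$, where $\Delta$ is the bound from Theorem~\ref{thm:geo hedge}, doubles the regret and fixes the sampler precision, with no need to preserve any internal concentration bound and no dependence on the per-round sample count $N$. Your observation that the algorithm as written must \emph{estimate} $C_t$ rather than compute it exactly is a legitimate point the paper glosses over, but if you pursue it you should fold those samples into the same telescoping bound on expected regret rather than into a coupling union bound, which is the step that does not close.
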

\begin{proof}
%
%
%
%As described, in Algorithm \ref{alg:BLO} we need exact samples from the distribution $p_t$ in step 4 when we compute an approximate spanner using Theorem \ref{thm:vol apx} and in step 8 when we sample an element $x_t$ according to $p_t$. Unfortunately, we do not have access to the exact distribution $p_t$ because we do not have access to the exact distribution $p_1$ which is uniform over the convex set $\K$. We however do have access to distributions which are approximately uniform (close in statistical distance) over the set $\K$ by Theorem \ref{lem:log-conv-sample} and this suffices for our purposes.

\newcommand{\dstat}[1]{\|#1\|_{TV}}
Fix an error parameter $\delta$, and let us run Algorithm \ref{alg:BLO} with the approximate samplers $p_t'$ guaranteed by Theorem \ref{lem:log-conv-sample}. Then, in each use of the sampler we are replacing the true distribution we should be using $p_t$, with a distribution $p_t'$ such that statistical distance between $p_t, p_t'$ is at most $\delta$. Let us now analyze the error incurred by this approximation by bounding the loss from the first round onwards. Suppose the algorithm ran with the approximate sampler in the first round but the exact sampler in each round afterwards. Then, as the statistical distance between the distributions is at most $\delta$ and the loss in each round is bounded by $1$ and there are $T$ rounds, the net difference in expected regret between using $p_1$ and $p_1'$ will be at most $\delta \cdot T$. Similarly, if we ran the algorithm with $p_1',\ldots,p_{i-1}',p_i',p_{i+1},\cdots, p_T$ as opposed to $p_1',\ldots,p_{i-1}',p_i,p_{i+1},\cdots,p_T$\footnote{Here, $p_j$'s are interpreted as the distribution given by the algorithm based on the distribution from previous round and $p_j'$ is the approximate oracle for this distribution $p_j$.} (we are changing the $i$'th distribution from exact to approximate), the net difference in expected regret would be at most $\delta \cdot T$. Therefore, the total additional loss we may incur for using the approximate oracles is at most $T \cdot (\delta T) = \delta T^2$. Thus, if we take $\delta = \Delta/T^2$, where $\Delta$ is the regret bound from Theorem \ref{thm:geo hedge}, we get a regret bound of $2\Delta$. The required value of $\delta$ is bounded by $T^{-1.5}$. Applying Theorem \ref{lem:log-conv-sample} leads to a running time of $\tilde{O}(d^5 + d^3 T^6)$ per iteration.
\end{proof}

\subsection{Proof of Theorem~\ref{thm:geo hedge}} \label{app:BLO}

We continue the analysis of the Geometric Hedge algorithm similarly to \cite{dani2007price,BubeckCK12}, under certain assumptions over the exploration strategy. For convenience we will assume that the set of possible arms $\K$ is finite. This assumption holds w.l.o.g since if $\K$ is infinite, a $\sqrt{1/T}$-net of it can be considered as described earlier (this will have no effect on the computational complexity of our algorithm, but a mere technical convenience in the proof below).

Before proving the theorem we will require three technical lemmas. In the first we show that  $\hat{L}_t$ is an unbiased estimator of $L_t$. In the second, we bound a proxy of its variance. In the third, we bound a proxy of the expected value of its exponent.
\begin{lemma} \label{Lt unbiased}
In each $t$, $\hat{L}_t$ is an unbiased estimator of $L_t$
\end{lemma}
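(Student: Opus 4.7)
The plan is a direct one-line calculation showing that the dependence on $x_t$ telescopes through $C_t^{-1}$, preceded by a short verification that $C_t$ is actually invertible so that $\hat{L}_t$ is well defined.

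First I would observe that $\ell_t = L_t^\top x_t = x_t^\top L_t$, so we can rewrite
\begin{equation*}
\hat{L}_t \;=\; \ell_t\, C_t^{-1} x_t \;=\; C_t^{-1} x_t x_t^\top L_t.
\end{equation*}
Taking expectation with respect to $x_t \sim \hat{p}_t$ (conditional on the history up to round $t$, so that $L_t$, $S_t$, $p_t$, $\hat{p}_t$ and hence $C_t$ are all fixed),
\begin{equation*}
\E_{x_t \sim \hat{p}_t}\bigl[\hat{L}_t\bigr]
\;=\; C_t^{-1}\,\E_{x_t \sim \hat{p}_t}\bigl[x_t x_t^\top\bigr]\, L_t
\;=\; C_t^{-1} C_t\, L_t
\;=\; L_t,
\end{equation*}
using the definition $C_t = \E_{x \sim \hat{p}_t}[xx^\top]$.

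The only thing that needs justification is that $C_t$ is indeed invertible, so that the expression $C_t^{-1} C_t = I_d$ is legitimate. This follows from the exploration component of $\hat{p}_t$: since $\hat{p}_t$ places mass at least $\gamma/|S_t|$ on each point of $S_t \supseteq S_t''$,
\begin{equation*}
C_t \;\succeq\; \frac{\gamma}{|S_t|}\sum_{v \in S_t''} v v^\top,
\end{equation*}
and because $S_t''$ is a $2\sqrt{d}$\nobreakdash-ratio-spanner of $\K$ (and thus spans $\R^d$), the right-hand side is strictly positive definite. Hence $C_t \succ 0$ and $C_t^{-1}$ exists, completing the proof. The main (and essentially only) obstacle is purely bookkeeping: making sure the argument is carried out conditionally on the history so that $C_t$ and $L_t$ can be pulled outside the expectation.
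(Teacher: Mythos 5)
Your proof is correct and follows essentially the same one-line telescoping calculation as the paper: rewrite $\hat{L}_t = C_t^{-1} x_t x_t^\top L_t$ and take expectation over $x_t \sim \hat{p}_t$ (the paper has a typo writing $x_t \sim p_t$, but the definition of $C_t$ makes clear it must be $\hat{p}_t$, as you correctly use). The extra paragraph verifying invertibility of $C_t$ via the forced exploration over $S_t''$ is a sound and welcome addition that the paper leaves implicit.
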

\begin{proof}
$$ \hat{L}_t = \ell_t C_t^{-1} x_t = (L_t^\top x_t) C_t^{-1} x_t = C_t^{-1} (x_t x_t^\top) L_t $$
Hence,
$$ \E_{x_t \sim p_t} [\hat{L}_t] = C_t^{-1} \E_{x_t \sim p_t} [x_t x_t^\top] L_t = C_t^{-1} C_t L_t =L_t$$
\end{proof}

\begin{lemma} \label{lem:xCx bound} 
Let $t \in [T]$, $x \sim p_t$ and $x_t \sim \hat{p}_t$. It holds that $\E[(\hat{L}_t^\top x)^2 ] \leq d/(1-\gamma) \leq 2d $
\end{lemma}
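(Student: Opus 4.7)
The plan is to expand $\hat{L}_t^\top x$ using the explicit definition $\hat{L}_t = \ell_t\, C_t^{-1} x_t$, which gives
$(\hat{L}_t^\top x)^2 = (L_t^\top x_t)^2\,(x_t^\top C_t^{-1} x)^2 \leq (x_t^\top C_t^{-1} x)^2$,
where the inequality uses the boundedness assumption $|L_t^\top x_t| \leq 1$ from the polar-set hypothesis. So it suffices to control $\E[(x_t^\top C_t^{-1} x)^2]$ where $x \sim p_t$ and $x_t \sim \hat{p}_t$ are independent.

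Next I would exploit the independence by conditioning on $x_t$ and writing
$\E[(x_t^\top C_t^{-1} x)^2] = \E_{x_t}\bigl[x_t^\top C_t^{-1} \,M\, C_t^{-1} x_t\bigr]$,
where $M \eqdef \E_{x\sim p_t}[xx^\top]$. The key PSD comparison is the following: since $\hat{p}_t = (1-\gamma)p_t + (\gamma/|S_t|)\sum_{v \in S_t}\mathbf{1}_{x=v}$, we have
$C_t = (1-\gamma) M + \tfrac{\gamma}{|S_t|}\sum_{v \in S_t} vv^\top \succeq (1-\gamma) M$,
hence $M \preceq \tfrac{1}{1-\gamma} C_t$. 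Substituting this into the quadratic form (sandwiched by $C_t^{-1}$, which preserves the ordering),
$\E_{x_t}\bigl[x_t^\top C_t^{-1} M C_t^{-1} x_t\bigr] \leq \tfrac{1}{1-\gamma}\,\E_{x_t}\bigl[x_t^\top C_t^{-1} C_t C_t^{-1} x_t\bigr] = \tfrac{1}{1-\gamma}\,\E_{x_t}\bigl[x_t^\top C_t^{-1} x_t\bigr]$,
where the simplification uses $C_t^{-1}C_t C_t^{-1} = C_t^{-1}$.

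Finally, a standard trace identity gives
$\E_{x_t}\bigl[x_t^\top C_t^{-1} x_t\bigr] = \E_{x_t}\bigl[\mathrm{tr}(C_t^{-1} x_t x_t^\top)\bigr] = \mathrm{tr}\bigl(C_t^{-1} \E_{x_t}[x_t x_t^\top]\bigr) = \mathrm{tr}(C_t^{-1}C_t) = \mathrm{tr}(I_d) = d$.
Chaining these yields $\E[(\hat{L}_t^\top x)^2] \leq d/(1-\gamma)$. The second inequality $d/(1-\gamma)\leq 2d$ follows from the chosen schedule of $\gamma$ in Theorem~\ref{thm:geo hedge}, which satisfies $\gamma \leq 1/2$ for any reasonable range of $T$ (it can be enforced wlog since the regret bound trivializes otherwise). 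There is no real obstacle here; the only points requiring care are the PSD comparison enabled by the exploration mixture and the algebraic cancellation $C_t^{-1}C_t C_t^{-1} = C_t^{-1}$.
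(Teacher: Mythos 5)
Your proof is correct and follows essentially the same route as the paper's: both bound $\ell_t^2 \leq 1$ via the polar-set assumption, both use $\E_{z \sim \hat{p}_t}[zz^\top] = C_t$ together with the trace identity, and both rely on the PSD comparison $C_t \succeq (1-\gamma)\E_{x\sim p_t}[xx^\top]$ coming from the mixture structure of $\hat{p}_t$. The only cosmetic difference is the order in which you integrate out the two independent variables — the paper first integrates out $x_t$ (for fixed $x$) to obtain $x^\top C_t^{-1}x$ and then integrates out $x$, whereas you condition on $x_t$ first and push the PSD comparison through the inner quadratic form before taking the outer expectation over $x_t$; the two orders are interchangeable and yield identical bounds.
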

\begin{proof}
For convenience, denote by $q_t$ the  uniform distribution over $S_t$ -  the exploration strategy at round $t$. First notice that for any $x \in \K$, 
\begin{eqnarray} \label{eq:xCx}
\E_{x_t \sim \hat{p}_t} [(\hat{L}_t^\top x)^2 ] & = x^\top \E_{x_t \sim \hat{p}_t}  [\hat{L}_t \hat{L}_t^\top ] x = x^\top \E_{x_t \sim \hat{p}_t}  [\ell_t^2 C_t^{-1} x_t x_t^\top C_t^{-1} ] x \notag \\ 
& =  \ell_t^2  x^\top C_t^{-1} \E_{x_t \sim \hat{p}_t}  [ x_t x_t^\top  ] C_t^{-1} x =  \ell_t^2 x^\top C_t^{-1} x \notag \\
& \leq x^\top C_t^{-1} x 
\end{eqnarray}
Next,
$$ \E_{x \sim \hat{p}_t} [x^\top C_t^{-1} x ] = \E_{x \sim \hat{p}_t} [ C_t^{-1} \bullet xx^\top ] = C_t^{-1} \bullet \E_{x \sim \hat{p}_t} [xx^\top] = C_t^{-1} \bullet C_t  = \trace(I_d) =  d$$
Where we used linearity of expectation and denote $A \bullet B = \trace(AB)$. Since $C_t^{-1}$ is positive semi definite,
\begin{equation} \label{eq:xCx2}
 (1-\gamma)\E_{x \sim p_t} [x^\top C_t^{-1} x ] \leq (1-\gamma)\E_{x \sim p_t} [x^\top C_t^{-1} x ] + \gamma \E_{x \sim q_t} [x^\top C_t^{-1} x ] = \E_{x \sim \hat{p}_t} [x^\top C_t^{-1} x ] = d
\end{equation}
The lemma follows from combining Equations~\ref{eq:xCx} and~\ref{eq:xCx2}.
\end{proof}

\begin{lemma} \label{lem:e exp theta}
Denote by $\bone_{\phi}$ the random variable taking a value of $1$ if event $\phi$ occurred and 0 otherwise.
Let $t\in [T]$, $x_t \sim \hat{p}_t$ and $x \sim p_t$. For $\hat{L}_t$ defined by $x_t$ it holds that
$$\E\left[ \exp( -\eta \hat{L}_t^\top x )  \bone_{ -\eta \hat{L}_t^\top x > 1 } \right] \leq \frac{2}{T}$$
\end{lemma}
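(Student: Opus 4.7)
The plan is to apply the sub-exponential tail bound in Lemma~\ref{lem:Ltx bounded} combined with the integration-by-parts formula for the expectation of a nonnegative random variable.

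First I would let $Z = -\eta\,\hat L_t^\top x$ and observe that since $\exp(\cdot)$ is increasing with derivative $\exp(\cdot)$, a standard layer-cake identity yields
\[
\E\bigl[e^{Z}\,\bone_{Z>1}\bigr] \;=\; e\cdot \Pr[Z > 1] \;+\; \int_{1}^{\infty} e^{u}\,\Pr[Z > u]\,du.
\]
So everything reduces to controlling the upper tail of $Z$ for $u \geq 1$.

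Next I would translate Lemma~\ref{lem:Ltx bounded} into a tail bound on $Z$. Clearly $\Pr[Z > u] \leq \Pr\bigl[\lvert\eta \hat L_t^\top x\rvert > u\bigr] = \Pr\bigl[\lvert\hat L_t^\top x\rvert > u/\eta\bigr]$. To match this with the form $\theta s/\gamma$ appearing in Lemma~\ref{lem:Ltx bounded}, I set $\theta = u\gamma/(\eta s)$. Now the key point is that by the theorem's choice of parameters, $\eta = \sqrt{\log|\K|_{1/T}/(dT)}$ and $\gamma = s\sqrt{\log|\K|_{1/T}/(dT)}$, so $\gamma = s\eta$ and hence $\theta = u$. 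The condition $\theta > 1$ in Lemma~\ref{lem:Ltx bounded} therefore coincides with the integration range $u > 1$, and we obtain
\[
\Pr[Z > u] \;\leq\; \frac{\exp(-2u)}{T} \qquad \text{for all } u \geq 1.
\]

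Finally I would substitute this tail bound into the layer-cake formula:
\[
\E\bigl[e^{Z}\,\bone_{Z>1}\bigr] \;\leq\; \frac{e\cdot e^{-2}}{T} + \frac{1}{T}\int_{1}^{\infty} e^{u} e^{-2u}\, du \;=\; \frac{e^{-1}}{T} + \frac{e^{-1}}{T} \;=\; \frac{2}{eT} \;\leq\; \frac{2}{T}.
\]
No step is really an obstacle; the only thing that needs care is making sure that the parameters $\gamma = s\eta$ from the theorem statement are exactly what is needed to convert Lemma~\ref{lem:Ltx bounded} into a clean $e^{-2u}/T$ tail (so that the resulting integrand $e^u \cdot e^{-2u}$ is integrable on $[1,\infty)$), and to check that the threshold $u>1$ lines up with the $\theta>1$ hypothesis in that lemma. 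Both align perfectly under the chosen parameters, giving the bound.
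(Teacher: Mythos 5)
Your proof is correct and takes essentially the same route as the paper: translate Lemma~\ref{lem:Ltx bounded} into a tail bound $\Pr[Z>u]\le e^{-2u}/T$ for $u\ge 1$ (using the parameter relation $\gamma=s\eta$), then bound $\E[e^Z\bone_{Z>1}]$ from that tail. The paper performs the final step by bucketing $[1,\infty)$ into unit intervals and summing a geometric series, whereas you use the continuous layer-cake identity $\E[e^Z\bone_{Z>1}]=e\Pr[Z>1]+\int_1^\infty e^u\Pr[Z>u]\,du$; both are standard, and yours in fact yields the slightly sharper constant $2e^{-1}/T$.
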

\begin{proof}
Let $f,F$ be the pdf and cdf of the random variable $Y = -\eta \hat{L}_t^\top x$ correspondingly. From Lemma ~\ref{lem:Ltx bounded} and the fact that $1/\eta=s/\gamma$ ($s=\max_t |S_t|$) we have that for any $\theta \geq 1$, 
$$1-F(\theta) \leq \frac{1}{T} e^{-2\theta}$$
and we'd like to prove that under this condition, 
$$\E[ e^{Y} \bone_{Y > 1} ] = \int_{\theta = 1}^\infty e^{\theta} f(\theta) d \theta \leq \frac{2}{T}$$
which follows from the definition of the cdf and pdf:
\begin{eqnarray*}
\E[ e^{Y} \bone_{Y > 1} ] & = \int_{\theta = 1}^\infty e^{\theta} f(\theta) d \theta \\
& = \sum_{k=1}^\infty  \int_{\theta = k}^{k+1} e^{\theta} f(\theta)  d \theta \\
& \leq \sum_{k=1}^\infty e^{k+1} \int_{\theta = k}^{k+1}  f(\theta)  d \theta \\
& \leq \sum_{k=1}^\infty e^{k+1} (F(k+1) - F(k) )\\
& \leq \sum_{k=1}^\infty e^{k+1} (1 - F(k) ) \\
& \leq \sum_{k=1}^\infty e^{k+1} \cdot \frac{1}{T} e^{-2k} & \mbox{ Lemma ~\ref{lem:Ltx bounded} } \\
& = \frac{e}{T} \sum_{k=1}^\infty e^{-k}  = \frac{e}{T} \cdot \frac{e^{-1} }{ 1 - e^{-1} }  \leq \frac{2}{T} 
\end{eqnarray*}

\end{proof}

\begin{proof} [Proof of Theorem~\ref{thm:geo hedge}]
For convenience we define within this proof for $x\in \K$, ${\lhat}_{1:t-1}(x) \eqdef \sum_{i=1}^{t-1} \hat{L}_i^\top x$ and let $\lhat_t(x) \eqdef \hat{L}_t^\top x$. 
Let $W_t = \sum_{x \in \K} \exp(-\eta {\lhat}_{1:t-1}(x))$. For all $t \in [T]$: 
\begin{eqnarray*}
\E\left[\frac{W_{t+1}}{W_t} \right] & = \E \left[ \sum_{x \in \K} \frac{\exp(-\eta {\lhat}_{1:t-1}(x)) \exp(-\eta \lhat_t(x)) }{W_t} \right]  \\
& = \E_{x_t \sim \hat{p}_t} \left[\sum_{x \in \K} p_t(x) \exp(-\eta \lhat_t(x)) \right] \\
& = \E_{x_t \sim \hat{p}_t, x \sim p_t} [\exp(-\eta \lhat_t(x))] \leq \\
& \leq 1 - \eta \E[\hat{L}_t^\top x] + \eta^2 \E[(\hat{L}_t^\top x)^2] + \E\left[ \exp( -\eta \hat{L}_t^\top x )  \bone_{ -\eta \hat{L}_t^\top x > 1 } \right] \\
& \\
& \mbox{ using the  inequality $\exp(y) \leq 1+y+y^2 + \exp(y) \cdot {\bf 1}_{y>1} $ } \\
& \\
& \leq 1 - \eta \E[\hat{L}_t^\top x] + \eta^2 \E[(\hat{L}_t^\top x)^2] + \frac{2}{T}   & \mbox{ Lemma ~\ref{lem:e exp theta}}
\end{eqnarray*} 

Since $\hat{L}_t$ is an unbiased estimator of $L_t$ (Lemma~\ref{Lt unbiased}) and according to Lemma~\ref{lem:xCx bound}, $\E[(\hat{L}_t^\top x)^2] \leq 2d$, we get:
\begin{equation} \label{eqn:shalom}
\E\left[\frac{W_{t+1}}{W_t} \right] \leq 1 - \eta L_t^\top \E_{x \sim p_t}[x] + 2 \eta^2 d + \frac{2}{T} 
\end{equation}
We now use Jensen's inequality:
\begin{eqnarray*}
\E[\log(W_T)] - \E[\log(W_1)] & = \E[\log(W_T/W_1)]\\
& =  \sum_{t=1}^{T-1} \E[\log(W_{t+1}/W_t)] \\
& \leq \sum_{t=1}^{T-1} \log(\E[W_{t+1}/W_t]) & \mbox{ Jensen} \\
& \leq  \sum_{t=1}^{T-1} \log \left( 1 - \eta L_t^\top \E_{x \sim p_t}[x] + 2\eta^2 d + \frac{2}{T} \right)  & \mbox{ \eqref{eqn:shalom} }\\
& \leq  \sum_{t=1}^{T-1} - \eta L_t^\top \E_{x \sim p_t}[x]  + 2\eta^2 d + \frac{2}{T}  & \mbox{ Due to $\ln(1+y) \leq y$ for all $y > -1$} \\
& \leq 2+ 2\eta^2 Td  - \eta \sum_t \E_{x \sim p_t}[L_t^\top x] 
\end{eqnarray*}

Now, since $\log(W_1) = \log(|\K|)$ and $W_T \geq \exp(-\eta{\lhat}_{1:T} (x^*))$ for any $x^* \in \K$, by shifting sides of the above it holds for any $x^* \in \K$ that
$$ \sum_t \E_{x \sim p_t}[L_t^\top x] - \sum_t L_t^\top x^*  \leq \sum_t \E_{x \sim p_t}[L_t^\top x] + \E[ \log W_T]  \leq \frac{\log(|\K|)+2}{\eta} + 2\eta T d $$
Finally, by noticing that 
$$ \sum_t \E_{x \sim \hat{p}_t}[L_t^\top x] - \sum_t \E_{x \sim p_t}[L_t^\top x] \leq \gamma T$$
we obtain a bound of 
$$ \E[ \mbox{Regret}] = \E[  \sum_t L_t^\top x_t] - \sum_t L_t^\top x^* =  \sum_t \E_{x \sim \hat{p}_t}[L_t^\top x] - \mathrm{Loss}(x^*) \leq  \frac{\log(|\K|)+2}{\eta} + 2\eta T d + \gamma T $$
on the expected regret. By plugging in the values of $\eta,\gamma$
we get the bound of
$$ O\left( (s+d)\sqrt{\frac{T \log(|\K|)}{d}} \right)$$
as required.
\end{proof}

\paragraph{Discussion: } Notice that to obtain a $(p,\eps)$-\probspanner for a log-concave distribution $p$ over a body $\K$ we simply choose sufficiently many i.i.d samples from $p$. Since in the above algorithm $p_t$ is always log-concave, it follows that $S_t'$ consists of i.i.d samples from $p_t$, meaning that if we would not have required $S_t''$, the exploration and exploration strategies would be the same! Since we still require the set $S_t''$, there exists a need for a separate exploration strategy. 
Interestingly, the $2\sqrt{d}$-\ratiospanner is obtained by taking a barycentric spanner, which is the exploration strategy of \cite{dani2007price}.

\bibliographystyle{plain}
\bibliography{vspanner}

\begin{thebibliography}{10}

\bibitem{AHR12}
J.D. Abernethy, E.~Hazan, and A.~Rakhlin.
\newblock Interior-point methods for full-information and bandit online
  learning.
\newblock {\em IEEE Transactions on Information Theory}, 58(7):4164--4175,
  2012.

\bibitem{ALPJ10}
Radoslaw Adamczak, Alexander~E. Litvak, Alain Pajor, and Nicole
  Tomczak-Jaegermann.
\newblock Quantitative estimates of the convergence of the empirical covariance
  matrix in log-concave ensembles.
\newblock {\em Journal of American Mathematical Society}, 23:535--561, 2010.

\bibitem{DBLP:journals/jmlr/AudibertBL11}
Jean-Yves Audibert, S{\'e}bastien Bubeck, and G{\'a}bor Lugosi.
\newblock Minimax policies for combinatorial prediction games.
\newblock In Sham~M. Kakade and Ulrike von Luxburg, editors, {\em COLT},
  volume~19 of {\em JMLR Proceedings}, pages 107--132. JMLR.org, 2011.

\bibitem{AwerbuchK08}
Baruch Awerbuch and Robert Kleinberg.
\newblock Online linear optimization and adaptive routing.
\newblock {\em J. Comput. Syst. Sci.}, 74(1):97--114, 2008.

\bibitem{Ball97}
Keith Ball.
\newblock An elementary introduction to modern convex geometry.
\newblock In {\em Flavors of Geometry}, pages 1--58. Univ. Press, 1997.

\bibitem{bartlett2008high}
Peter~L. Bartlett, Varsha Dani, Thomas~P. Hayes, Sham Kakade, Alexander
  Rakhlin, and Ambuj Tewari.
\newblock High-probability regret bounds for bandit online linear optimization.
\newblock In {\em COLT}, pages 335--342, 2008.

\bibitem{batson2012twice}
Joshua Batson, Daniel~A Spielman, and Nikhil Srivastava.
\newblock Twice-ramanujan sparsifiers.
\newblock {\em SIAM Journal on Computing}, 41(6):1704--1721, 2012.

\bibitem{jbubeck12}
S.~Bubeck and N.~Cesa-Bianchi.
\newblock {\em Regret Analysis of Stochastic and Nonstochastic Multi-armed
  Bandit Problems}, volume~5 of {\em Foundations and Trends in Machine
  Learning}.
\newblock NOW, 2012.

\bibitem{BubeckCK12}
S{\'e}bastien Bubeck, Nicol{\`o} Cesa-Bianchi, and Sham~M. Kakade.
\newblock Towards minimax policies for online linear optimization with bandit
  feedback.
\newblock {\em Journal of Machine Learning Research - Proceedings Track},
  23:41.1--41.14, 2012.

\bibitem{Cesa-BianchiL12}
Nicol{\`o} Cesa-Bianchi and G{\'a}bor Lugosi.
\newblock Combinatorial bandits.
\newblock {\em J. Comput. Syst. Sci.}, 78(5):1404--1422, 2012.

\bibitem{APT08}
S.~Damla~Ahipasaoglu, Peng Sun, and Michael~J. Todd.
\newblock Linear convergence of a modified frank–wolfe algorithm for
  computing minimum-volume enclosing ellipsoids.
\newblock {\em Optimization Methods and Software}, 23(1):5--19, 2008.

\bibitem{dani2008stochastic}
Varsha Dani, Thomas~P Hayes, and Sham~M Kakade.
\newblock Stochastic linear optimization under bandit feedback.
\newblock In {\em COLT}, pages 355--366, 2008.

\bibitem{dani2007price}
Varsha Dani, Sham~M Kakade, and Thomas~P Hayes.
\newblock The price of bandit information for online optimization.
\newblock In {\em Advances in Neural Information Processing Systems}, pages
  345--352, 2007.

\bibitem{GuedonM11}
Olivier Guédon and Emanuel Milman.
\newblock Interpolating thin-shell and sharp large-deviation estimates for
  lsotropic log-concave measures.
\newblock {\em Geometric and Functional Analysis}, 21(5):1043--1068, 2011.

\bibitem{henk}
Martin Henk.
\newblock {L\"owner-John ellipsoids}.
\newblock {\em Documenta Mathematica}, pages 95--106, 2012.

\bibitem{John48}
F.~John.
\newblock {Extremum Problems with Inequalities as Subsidiary Conditions}.
\newblock In K.~O. Friedrichs, O.~E. Neugebauer, and J.~J. Stoker, editors,
  {\em Studies and Essays: Courant Anniversary Volume}, pages 187--204.
  Wiley-Interscience, New York, 1948.

\bibitem{kakade2009playing}
Sham~M Kakade, Adam~Tauman Kalai, and Katrina Ligett.
\newblock Playing games with approximation algorithms.
\newblock {\em SIAM Journal on Computing}, 39(3):1088--1106, 2009.

\bibitem{khachiyan1996rounding}
Leonid~G Khachiyan.
\newblock Rounding of polytopes in the real number model of computation.
\newblock {\em Mathematics of Operations Research}, 21(2):307--320, 1996.

\bibitem{lovasz2007geometry}
L{\'a}szl{\'o} Lov{\'a}sz and Santosh Vempala.
\newblock The geometry of logconcave functions and sampling algorithms.
\newblock {\em Random Structures \& Algorithms}, 30(3):307--358, 2007.

\bibitem{Rudelson99}
M.~Rudelson.
\newblock Random vectors in the isotropic position.
\newblock {\em Journal of Functional Analysis}, 164(1):60--72, 1999.

\end{thebibliography}

\appendix

\section{Concentration bounds for non centered isotropic log concave distributions} \label{app:logcon concentrate}

We begin by proving an auxiliary lemma used in the proof of Corollary~\ref{cor:logconcave}.

\begin{lemma} \label{lem:iso logcon tail}
Let $\delta>0$, $t \geq 1$, let $d$ be a positive integer and let $n = \frac{C t^4 d \log^2(t/\delta)   }{\delta^2}$ for some sufficiently large universal constant $C$.
Let $y_1,\ldots,y_n$ be i.i.d $d$-dimensional vectors from an isotropic log-concave distribution. Then 
$$ \Pr\left[ \left\| \frac{1}{n}\sum y_i \right\| > \delta \right] \leq \exp( -t \sqrt{d}) $$
\end{lemma}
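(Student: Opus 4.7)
The plan is to reduce the claim to the thin-shell concentration inequality (Theorem~\ref{thm:thinshell}) applied to a single isotropic log-concave vector, obtained by rescaling the empirical mean.

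First I would observe that log-concavity is preserved under independent sums: since each $y_i$ has a log-concave density on $\R^d$, the Prékopa--Leindler theorem implies that $S_n := \sum_{i=1}^n y_i$ also has a log-concave density. Moreover, because each $y_i$ is isotropic (mean $0$, covariance $I_d$), we have $\E[S_n] = 0$ and $\Cov(S_n) = nI_d$. Hence the rescaled vector $Z := S_n/\sqrt{n}$ is itself an isotropic log-concave random vector in $\R^d$.

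Next I would translate the desired event into one about $\|Z\|$. Note that $\|(1/n)\sum y_i\| = \|Z\|/\sqrt{n}$, so $\|(1/n)\sum y_i\| > \delta$ is the same as $\|Z\| > \delta\sqrt{n}$, which we express as $\|Z\| > \sqrt{d}(1+\theta)$ with
\[
1 + \theta \;=\; \frac{\delta\sqrt{n}}{\sqrt{d}} \;=\; \sqrt{C}\,t^2\log(t/\delta),
\]
using the assumed value of $n$. For $C$ sufficiently large this gives $\theta \geq \tfrac{1}{2}\sqrt{C}\,t^2\log(t/\delta) \geq 1$, so in the thin-shell bound we have $\min(\theta^3,\theta) = \theta$.

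Applying Theorem~\ref{thm:thinshell} to $Z$ then yields
\[
\Pr\!\left[\,\|Z\| > \sqrt{d}(1+\theta)\,\right] \;\leq\; C' \exp\!\bigl(-c\sqrt{d}\,\theta\bigr) \;\leq\; C'\exp\!\Bigl(-\tfrac{c\sqrt{C}}{2}\, t^2 \log(t/\delta)\sqrt{d}\Bigr).
\]
Since $t \geq 1$, the exponent is at most $-t\sqrt{d}$ provided $C$ is chosen large enough to absorb the constant $C'$ and the factor $c/2$; this yields the claimed bound $\exp(-t\sqrt{d})$.

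The proof is essentially routine once the key reduction step is in place, so there is no serious obstacle; the only thing to be careful about is verifying that $Z$ is indeed \emph{isotropic} log-concave (as opposed to merely log-concave with identity covariance), which requires both the covariance computation and the observation that log-concavity is closed under convolution and under nonsingular linear maps. All constants can be made explicit by tracking the constants in Theorem~\ref{thm:thinshell}.
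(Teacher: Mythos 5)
Your proof is correct and follows essentially the same route as the paper's: rescale the empirical sum to an isotropic log-concave vector (the paper's $S_n = \frac{1}{\sqrt{n}}\sum y_i$ is exactly your $Z$), then apply the thin-shell bound of Theorem~\ref{thm:thinshell} with $\min(\theta^3,\theta)=\theta$ since $\theta\geq 1$. The only cosmetic difference is that the paper immediately drops the $\log(t/\delta)$ factor by bounding the threshold from below by $\sqrt{d}(1+\tfrac{1}{2}t\sqrt{C})$, whereas you carry it through to the end; both versions rely on the same implicit assumption that $|\log(t/\delta)|$ is bounded away from zero so that $\theta\geq 1$.
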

\begin{proof}
For convenience let $S_n = \frac{1}{\sqrt{n}}\sum_{i=1}^n y_i$. Since the $y$'s are independent, $S_n$ is also log-concave distributed. Notice that
$\E[S_n]=0$ and $\E[S_n S_n^T] = \frac{1}{n} \sum \E[y_i y_i^T] = I_d$ hence $S_n$ is isotropic. Now,
$$ \Pr\left[ \left\| \frac{1}{n}\sum y_i \right\| > \delta \right] = \Pr \left[ \|S_n\| > \sqrt{n}\delta \right] = \Pr\left[ \|S_n\| >\sqrt{d} \cdot \sqrt{C t^4 \log^2(t/\delta)} \right] \leq \Pr\left[ \|S_n\| >\sqrt{d} + \sqrt{d} \cdot \frac{1}{2} t \sqrt{C} \right].$$
The last inequality holds for $t\geq 1$ and $C \geq 4$.
It now follows from Theorem~\ref{thm:thinshell} that
$$ \Pr\left[ \left\| \frac{1}{n}\sum y_i \right\| > \delta \right] \leq c_1 \exp(-c_2 t\sqrt{Cd} )$$
where $c_1, c_2$ are some universal constants. Since $t\sqrt{d} \geq 1$, setting $C \geq (\frac{1+\log(c_1)}{c_2})^2$ proves the claim.
\end{proof}

\begin{proof} [Proof of Corollary~\ref{cor:logconcave}]
Let $a = \E[x]$ and let $\tilde{a}=\frac{1}{n} \sum x_i$. Notice that
$$\E[(x-a)(x-a)^T] = \E[xx^T]-\E[x]a^T-a\E[x]+aa^T = I_d-aa^T$$
is a PSD matrix hence $\|a\| \leq 1$. Consider the following equality.
$$ \frac{1}{n} \sum_{i=1}^n (x_i-a) (x_i-a)^\top = \frac{1}{n} \sum_{i=1}^n x_i x_i^\top - a \tilde{a}^\top - \tilde{a} a^\top + aa^\top $$
According to Lemma~\ref{lem:iso logcon tail}, w.p. at least $1-\exp(-t \sqrt{d}) $,
$$ \| \tilde{a} - a \| \leq \delta $$
in  which case, since $\|a\|\leq 1$ and according to the triangle inequality,
$$  \left\| \frac{1}{n} \sum_{i=1}^n x_i x_i^\top  - I_d \right\| \leq \left\| \frac{1}{n} \sum_{i=1}^n (x_i-a) (x_i-a)^\top - (I_d-aa^\top) \right\| + 2\delta $$
According to Theorem~\ref{thm:logconcave}, w.p. at least $1-\exp(-t\sqrt{d})$
$$ \left\| \frac{1}{n} \sum_{i=1}^n (x_i-a) (x_i-a)^\top - (I_d-aa^\top) \right\| \leq \delta$$
and the corollary follows.
\end{proof}

\begin{proof} [Proof of Corollary~\ref{cor:quasi_iso}]
Let $\E[x]=a$. Consider the r.v $y=x-a$. It holds that $\E[y]=0$ and $\E[yy^T] = I_d- aa^T$. Notice that we can derive that 
\begin{equation} \label{eq:a<=1}
\|a\| \leq 1
\end{equation}
As $\E[yy^T]$ is a PSD matrix. Also, it is easy to verify that $y$ is log-concave distributed. We now consider the r.v\footnote{if $I_d-aa^T$ is not of full rank then $y$ is in fact supported in an affine subspace of rank $d-1$ and we can continue the analysis there.} $z = (I_d-aa^t)^{-1/2}y$. It is easy to verify that the distribution of $z$ is also log-concave and isotropic. It follows, from Theorem~\ref{thm:thinshell} that for any $\theta \geq 2$
$$ \Pr\left[\|z\| > \theta \sqrt{d}\right] \leq \Pr\left[\|z\| - \sqrt{d} > \frac{1}{2}\theta \sqrt{d}\right] \leq C'\exp(-c\theta \sqrt{d}) $$
By using equation~\ref{eq:a<=1} we get that for $\theta > 3$
$$ \Pr\left[\|x\| > \theta \sqrt{d}\right] \leq \Pr\left[\|y\| > \theta \sqrt{d}-1\right] \leq \Pr\left[\|z\| > (\theta-1/\sqrt{d}) \sqrt{d}\right] \leq
C'\exp(c'-c'\theta \sqrt{d}) .$$
The last inequality holds since $\theta-1/\sqrt{d} \geq 2$. 
\end{proof}

\end{document}